\newtheorem{theorem}{Theorem}[section]
\newtheorem{lemma}[theorem]{Lemma}
\newcommand\numberthis{\addtocounter{equation}{1}\tag{\theequation}}
\title{An Efficient Algorithm for Fair Multi-Agent Multi-Armed Bandit with Low Regret}
\author {
    Matthew Jones,\textsuperscript{\rm 1}
    Huy L\^{e} Nguy\~{\^{e}}n, \textsuperscript{\rm 1}
    Thy Nguyen \textsuperscript{\rm 1}
}
\begin{document}

\maketitle

\begin{abstract}
Recently a multi-agent variant of the classical multi-armed bandit was proposed to tackle fairness issues in online learning. Inspired by a long line of work in social choice and economics, the goal is to optimize the Nash social welfare instead of the total utility. Unfortunately previous algorithms either are not efficient or achieve sub-optimal regret in terms of the number of rounds $T$. We propose a new efficient algorithm with lower regret than even previous inefficient ones. For $N$ agents, $K$ arms, and $T$ rounds, our approach has a regret bound of $\tilde{O}(\sqrt{NKT} + NK)$. This is an improvement to the previous approach, which has regret bound of $\tilde{O}( \min(NK, \sqrt{N} K^{3/2})\sqrt{T})$. We also complement our efficient algorithm with an inefficient approach with  $\tilde{O}(\sqrt{\na\ts} + \np^2\na)$ regret. The experimental findings confirm the effectiveness of our efficient algorithm compared to the previous approaches.
\end{abstract}

\section{Introduction}

The multi-armed bandit (MAB) problem poses an identical set of choices (or actions, \emph{arms}) at each time step to the decision-maker. When the decision maker pulls makes a choice, she receives a reward drawn from a distribution associated with that choice. The goal is to select arms to maximize the total reward in expectation, or to minimize their expected \emph{regret} with respect to the optimal strategy (or \emph{policy}). The MAB problem lends itself to many applications including allocation of resources in a financial portfolio \cite{hoffman2011portfolio, shen2015portfolio, huo2017risk}, selection of both dosages \cite{bastani2020online} and treatments \cite{durand2018contextual} in clinical healthcare,  and a broad range of recommender systems \cite{zhou2017large, bouneffouf2012contextual, bouneffouf2013contextual}. 

In many scenarios, making a decision impacts not only one but multiple agents. For instance, consider a policy-maker making decisions that might influence various groups of their constituents \cite{zhu2018pension}, or a recommendation engine that picks hyperparameters for their system to serve many users. The multi-agent multi-armed bandit problem (MA-MAB), proposed in \citet{hossain2020fair},  is a variant of the MAB problem in which  there are $\np$ agents and $\na$ arms, and in every round $t$, the agents pull the same arm $a$ and receive rewards drawn from unknown distributions $\{D_{j,a}\}_{j=1}^\np$ with unknown means  $\{\pa^{\star}_{j,a}\}_{j=1}^\np$. The requirement for all agents to pull the same arm every round is representative of the aforementioned scenarios in which a larger entity makes decisions which affect many agents.

In this setting, one intuitive objective would be to optimize the expected rewards over all agents. This reduces the problem to the classic MAB problem and can be solved accordingly \cite{slivkins2019introduction}. In practice, this objective has a key weakness: individual rewards may be neglected in order to improve the reward of the collective. Consider an extremely divided case with two arms where just over half of agents receive reward 1 from the first arm and reward 0 from the second arm, and the other agents receive the opposite rewards. The optimal strategy in this case is to always choose the first arm: half of the agents receive reward 1, and almost half of the agents receive reward 0. While this formulation is well-studied by reduction, it unfortunately does not serve to enforce fairness among the agents. A fairer choice would be to pull each arm with roughly the same probability, which only slightly reduces the objective. In other words, the agents would receive similar rewards with each other if the algorithm learns a distribution $\po$ over the arms and converges to pulling each arm almost uniformly. Thus, the optimal strategy for a MA-MAB problem corresponds not to picking the optimal arm as in the classic MAB problem, but a probability distribution over the arms. 

In \cite{hossain2020fair}, the authors proposed an objective motivated by studies on fairness in social choice: the \textit{Nash social welfare} (NSW). This is a classical notion going back to \cite{Nash50} and \cite{KN79}.  For $n$ agents with expected utility $u_1,u_2,\ldots, u_n$, the Nash product $\prod_i u_i$ is sandwiched between $\left(\min_i u_i\right)^n \le \prod_i u_i \le \left(\frac{\sum_i u_i}{n}\right)^n$,  serving as a compromise between an egalitarian approach (optimizing the minimum reward across agents) and a utilitarian approach (optimizing the sum of rewards across agents). In the context of the MA-MAB problem, the optimal fair strategy corresponds to the  distribution that maximizes the cummulative Nash product of the agents' expected rewards. More formally,  $ \po^\star = \argmax_{\po \in \Delta^\na} \F(\po, \mu^\star) = \argmax_{\po \in \Delta^\na} \prod_{j \in [\np]} \left( \sum_{a \in [\na]} \po_a \mu^\star_{j,a} \right)$, where $\Delta^\na$ is the $\na$-simplex .

The prior work in \cite{hossain2020fair} proposed three algorithms for the MA-MAB problem (see table~\ref{tab:compare}). Their Explore-First and $\epsilon$-greedy algorithms have regret bounds that scale with $\Omega(\ts^{2/3})$. The algorithms involve computing the optimal policy given an estimated reward matrix, i.e., $\po = \argmax_{\po \in \Delta^\na} \F(\po, \hat{\mu})$.  Since the
objective is log-concave, the optimization step can be solved in polynomial time and there thus exist efficient implementations for the algorithms. Their UCB algorithm achieves the improved regret bound of $\widetilde{O}(\na\sqrt{\np\ts \cdot \min\{\np, \na\}})$. Note that the dependency on $\ts$ for the UCB algorithm is tight due to a  reduction of the MA-MAB problem with $\np = 1$ to the standard multi-armed bandits problem with a lower bound of $\Omega(\sqrt{\ts \na})$.  It is not clear to the authors of \cite{hossain2020fair} if the UCB algorithm “admits an efficient implementation.” This is due to the algorithm's step of computing $\argmax_{\po \in \Delta^\na} \F(\po, \hat{\mu}) + \sum_{a \in [\na]} \beta_a \po_a$, where $\beta_a$ is inversely proportional to the number of times the algorithm has picked arm $a$. With the added linear term, the optimization program is no longer log-concave. As such, previous algorithms for this problem in~\cite{hossain2020fair} either fail to achieve optimal dependency on $\ts$, or do not admit an efficient implementation.  This leads us to investigate the following question:\\

\textit{Is it possible to design an algorithm that admits an efficient implementation while achieving optimal dependency on $\ts$?}\\


\textbf{Our main contribution } is an efficient algorithm with  $\widetilde{O}(\sqrt{\np\na\ts} + \np\na)$ regret. We not only not give an affirmative answer to the question above, but also achieve an improved regret bound over the UCB algorithm of \cite{hossain2020fair} for most regimes of $\np, \na, \ts$. Our algorithm preserves the efficiency of the Explore-First and $\epsilon$-Greedy approaches in \cite{hossain2020fair}, while achieving an improved bound over the previous state-of-the-art.  We also complement our efficient algorithm with an inefficient approach with  $\widetilde{O}(\sqrt{\na\ts} + \np^2\na)$ regret.





\begin{table*}[ht]
\centering
\caption{Fair algorithms for the MA-MAB problem.}
\begin{tabular}{l l c l}
    \hline
    \textbf{Algorithm} & \textbf{Regret Bound} & \textbf{Efficient} & \textbf{Reference}\\ \hline \hline
    Explore-First & $\widetilde{O}\left(\sqrt[3]{NKT^2 \cdot \min\{N,K\}}\right)$ & \cmark  & \cite{hossain2020fair}, Theorem 1 \\ \hline
    $\epsilon$-Greedy & $\widetilde{O}\left(\sqrt[3]{NKT^2 \cdot \min\{N,K\}}\right)$ & \cmark  &  \cite{hossain2020fair}, Theorem 2 \\ \hline
    UCB & $\widetilde{O}\left(K\sqrt{NT \cdot \min\{N,K\}}\right)$ & \xmark & \cite{hossain2020fair}, Theorem 3 \\ \hline
    Algorithm \ref{alg:fair_UCB} & $\widetilde{O}\left(NK + \sqrt{NKT}\right)$ & \cmark & Theorem \ref{theorem:main_main}\\ \hline
        Algorithm \ref{alg:fair_UCB_inefficient} & $\widetilde{O}\left(N^2K + \sqrt{KT}\right)$ & \xmark & Theorem \ref{theorem:alg3_regret}\\ \hline\\
\end{tabular}
\label{tab:compare}
\end{table*}

\section{Other related works}
Many variants of the multi-armed bandit problem have been proposed and studied such as adversarial bandit \cite{auer2002nonstochastic}, dueling bandit \cite{yue2009interactively,yue2012k}, Lipschitz bandit \cite{kleinberg2004nearly,flaxman2004online}, contextual bandit \cite{hazan2007online}, and sleeping bandit \cite{kleinberg2010regret}. Multi-agent variants of the problem have also been investigated in  \cite{landgren2016distributed, chakraborty2017coordinated, bargiacchi2018learning}. 

In the context of fair multi-armed bandit, \cite{joseph2016fair} proposes a framework where an arm with a higher expected reward is selected with a probability no lower than that of an arm with a lower expected reward. \cite{wang2020fairness} requires the fair policy to sample arms with probability proportional to the value of a merit function of its mean reward. \cite{liu2017calibrated} preserving fairness means  the probability of selecting each arm should be similar if the two arms have a similar quality distribution. \cite{gillen2018online} studies fairness in the linear contextual bandits setting where there are individual fairness constraints imposed by an unknown similarity metric. \cite{patil2020achieving} proposes a fair MAB variant that seeks to optimize the cumulative reward while also ensures that, at any round, each arm is pulled at least a specified fraction of times.

Our objective of finding a probability distribution over arms to optimize the Nash welfare objective can also be cast as the continuum-armed bandit problem where the Nash welfare function is the objective. \cite{kleinberg2019bandits} designs an algorithm with a regret bound of $\Tilde{O}\left(\ts^{\frac{\gamma+1}{\gamma+2}}\right)$, where $\gamma$, defined as the zooming dimension, would be $\Theta(\na)$ for the MA-MAB problem. The resulting bound would be no better than $O(\ts^{2/3})$ and approaches $O(\ts)$ as $K$ increases. It is also important to note that there is a long line of work on bandit convex optimization \cite{hazan2014bandit,bubeck2016multi,bubeck2017kernel, chen2019projection}. One can apply the approaches to optimize the $\log$ of the Nash welfare function. However, the regret bound of the new objective does not translate to that of the original objective.   
\section{Preliminaries}
Define $[n] = \{1,\ldots,n\}$ for $n \in \mathbb{N}$. In the multi-agent multi-armed bandit problem, we have a set of agents  $[\np]$ and a set of arms $[\na]$ for $\np, \na \in \mathbb{N}$. For each agent $j \in [\np]$ and arm $a \in [\na]$, we have a reward distribution $D_{j,a}$ with mean $\pa^{\star}_{j,a}$ and support in $[0,1]$. Define $\tpa = (\tpa_{j,a})_{j \in [\np], a \in [\na]}$ as the mean reward matrix. At each round $t$, $a_t$ denotes the selected arm of round $t$ and  $r_{j,a_t,t} \sim D_{j,a_t}$ the realization of the reward distribution associated with arm $a_t$ of each agent $j$.  

For reward matrix $\mu = \mu_{j \in [\np], a \in [\na]} \in [0,1]^{\np \times \na}$ and policy $\po \in \Delta^{\na} $, we denote the Nash Social Welfare function by the product over the expected reward of each agent, $\F(\po, \mu) = \prod_{j \in [\np]} \left( \sum_{a \in [\na]} \po_a \mu_{j,a} \right)$. For a  time horizon $\ts$, our goal is to choose policies $\pi_t\in \Delta^{\na}$ at each round $t \in [\ts]$ to minimize the cumulative regret 
$ \TotalRegret_\ts = \sum_{t \in [\ts] } \F(\po^{\star}, \pa^{\star})  -  \sum_{t \in [\ts] }   \F(\po_t, \pa^{\star})$. We use $\pi_{a,t}$ to denote the probability of selecting arm $a$ under policy $\pi_t$. 

At each round $t$, our algorithm samples an arm $a_t$ from $\po_t$ and pulls the same arm for every agent. Let $\st_{a,t} =  \sum_{\tau=1}^{t-1}\Indicator\{a_\tau=a\} $ denote the number of times arm $a$ has been sampled before round $t$, and $\epa_{j,a,t} = \frac{1}{\st_{a,t}} \sum_{\tau=1}^{t-1} r_{j, a_\tau, \tau}  \Indicator\{a_\tau=a\}$ the corresponding empirical mean of agent $j$'s reward on arm $a$.  Our main algorithm maintains a confidence bound for the mean reward matrix $\tpa$ at every round. Let $\epa_t = (\epa_{j,a,t})_{j \in [\np], a \in [\na]}$ denote the estimated mean reward matrix and $w_t = (w_{j,a,t})_{j \in [\np], a \in [\na]}$ the confidence bound matrix for round $t$.

\section{Algorithms}

\begin{algorithm}[t]
\begin{algorithmic}[1]
\STATE {\bf input: }{$\na,\np,\ts,\delta$}
\FOR{$t=1$ to $\ts$}
\IF{$t \leq \na$} 
\STATE $\po_t \leftarrow$ policy that puts probability 1 on arm $t$ 
\ELSE 
\STATE $\forall j,a, \epa_{j,a,t} = \frac{1}{\st_{a,t}} \sum_{\tau=1}^{t-1} r_{j, a_\tau, \tau}  \Indicator\{a_\tau=a\}$  
\STATE $\forall j,a, \UCB_{j,a,t} = \min(\epa_{j,a,t} + w_{j,a,t},1)$ 
\STATE $\po_t \leftarrow \operatorname{argmax}_{\po \in \Delta^\na} \F(\po, \UCB_t)  $ 
\ENDIF
\STATE Sample $a_t$ from $\po_t$
\STATE Observe rewards $\{r_{j,a_t,t}\}_{j \in \np}$
\STATE $\st_{a_t, t+1} \leftarrow \st_{a_t, t} + 1$
\ENDFOR
\end{algorithmic}
\caption{Fair multi-agent UCB algorithm}
\label{alg:fair_UCB}
\end{algorithm}
Our UCB algorithm, algorithm~\ref{alg:fair_UCB}, first selects each arm once in order to obtain an initial estimate $\hat{\mu}$ for $\mu$. Then, it computes the upper confidence bound estimate $U_t = \{\UCB_{j,a,t}\}_{j \in [\np], a \in [\na]}$ of the true mean for each arms $a$ of $\np$ agents and finds a policy $\po_t$ that optimizes the Nash social welfare function given $U_t$.  Due to the log-concavity of Nash social welfare, we use standard convex optimization tools to optimize $\log(\F(\po, \UCB_t))$ and simultaneously optimize $\F(\po, \UCB_t)$.

This approach differs from the UCB algorithm in \citet{hossain2020fair} in two aspects.  First,  the optimization step in their UCB algorithm uses an additive regularization term in the objective rather than on the estimate $\hat{\mu}$, and is therefore not log-concave.  Second, our confidence interval is defined  in terms of the empirical mean and as a result our confidence interval is about a factor of  $\sqrt{1 - \epa_{j,a,t}}$ tighter than that of \citet{hossain2020fair}. As our proof involves bounding the regret by the Lipschitz continuity of the Nash social welfare, our confidence interval allows for a careful analysis of the algorithm.

Note that the UCB algorithm in \citet{hossain2020fair} is horizon-independent. Although the algorithm~\ref{alg:fair_UCB} requires the value of the time horizon $\ts$ as input, it can be easily modified to be horizon-independent. One approach is to modify the confidence interval $w_{j,a,t}$ so it becomes a function of the current time step, $t$ , rather than the time horizon, $\ts$. Specifically, both $\ln(4NKT/ \delta)$ terms of $w_{j,a,t}$ in algorithm~\ref{alg:fair_UCB} would become $\ln(8NKt^2/ \delta)$. Lemma~\ref{lemma:LPfair_UCB_param_in_CR_main} and theorem~\ref{theorem:main_main} can be easily adapted to the new confidence interval. The horizon-independent variant of our algorithm would have the same regret bound of $\widetilde{O}\left(NK + \sqrt{NKT}\right)$.

\textbf{Overall approach:} Our objective is to bound the regret $ \sum_{t\in [\ts]} \F(\po^{\star}, \tpa) - \F(\po_t, \tpa)$. Observe that $\F(\cdot,\cdot)$ is monotone in the second argument so $\F(\po^{\star}, \UCB_t)\ge \F(\po^{\star}, \tpa)$. By the optimality of $\po_t$, we also have $\F(\po_t, \UCB_t)\ge \F(\po^{\star}, \UCB_t)$. Thus, we can reduce the problem to bounding $\sum_{t\in [\ts]} \F(\po_t, \UCB_t) - \F(\po_t, \tpa)$.

The key idea to bound the regret in a single round $t$ is to look at the expected reward of each agent $j$ if the mean reward was $\UCB_t$. Formally, let $g_{j,t} = \sum_{a\in [\na]} \po_{a, t} (1-\UCB_{j,a,t})=1-\sum_{a\in [\na]} \po_{a, t} \UCB_{j,a,t}$. If there are a lot of agents with large $g_{j,t}$, then the Nash product $\F(\po_t, \UCB_t)=\prod_j \left(1-g_{j,t}\right)$ is small and the regret is therefore small. More precisely, we consider two cases depending on whether there exists a $p \geq 0$ such that the set of agents $\{j\in [\np] : g_{j,t} \geq  2^{-p}\}$ is of size at least $3 \cdot 2^p\ln (\ts) $. If such a $p$ exists, then 
\[
\F(\po_t, \UCB_t)  \leq (1 - 2^{-p})^{3\cdot 2^p \ln \ts} \leq \frac{1}{\ts^3}
\]
As a result, the regret of round $t$ is negligible. 

We now need to bound the regret of rounds where no such $p$ exists. The key idea is to show that when no $p$ exists, the upper confidence bounds are on average very close to the true means. For intuition, suppose the following similar statement holds. Let $g'_{j,t}=\sum_{a\in [\na]} \po_{a, t} (1- \hat{\pa}_{j,a,t})$ and for all $p\ge 0$, the set of agents $\{j\in [\np] : g'_{j,t} \geq  2^{-p}\}$ is of size at most $3 \cdot 2^p\ln (\ts)$. Given this condition we can bound
\begin{align*}
    \sum_{j \in [\np]} g'_{j,t} &\leq \int_0^1 [\text{number of agents $j$ s.t. } g'_{j,t} \geq  x]dx \\ & \le 1 + 6\ln \ts \log \np 
\end{align*}

Notice that our estimation error $w_{j,a,t}$ is a function of $1-\hat{\pa}_{j,a,t}$ and we showed that $1-\hat{\pa}_{j,a,t}$ is small on average. Thus, by making a careful averaging argument, we can show that the upper bound $\UCB_t$ is close to $\tpa$ on average. The regret bound then follows from the smoothness of the function $\F(\pi_t,\cdot)$. The actual proof has to overcome additional technical challenges due to the difference between the desired $g'_{j,t}$ and the actual $g_{,t}$.

\subsection{Analysis}

We seek to bound $\sum_{t\in [\ts]} \F(\po_t, \UCB_t) - \F(\po_t, \tpa)$ using
 the smoothness of the $\F$ objective. We include the missing proof of the  Lipschitz-continuity property and  the other lemmas in the appendix. 
 \begin{lemma}
\label{lemma:lipschitz_main} (Lemma 3, \cite{hossain2020fair})
Given a policy $\po \in \Delta^k$ and reward matrices $\mu^1, \mu^2 \in [0,1]^{\np \times \na}$, we have  
\[
\left\lvert\F(\po,\mu^1) - \F(\po,\mu^2)\right\rvert \leq \sum_{j \in [\np]} \sum_{a \in [\na]} \po_{a}  \left\lvert \mu^1_{j,a} - \mu^2_{j,a} \right\rvert
\]
\end{lemma}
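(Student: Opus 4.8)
The plan is to reduce the statement to a scalar inequality about products of numbers in $[0,1]$. For $i \in \{1,2\}$ and each agent $j \in [\np]$, write $u_j^i = \sum_{a \in [\na]} \po_a \mu^i_{j,a}$ for the expected reward of agent $j$ under reward matrix $\mu^i$; since $\po \in \Delta^\na$ and $\mu^i \in [0,1]^{\np \times \na}$, each $u_j^i \in [0,1]$, and $\F(\po,\mu^i) = \prod_{j \in [\np]} u_j^i$. So it suffices to bound $\bigl\lvert \prod_j u_j^1 - \prod_j u_j^2 \bigr\rvert$ by $\sum_j \lvert u_j^1 - u_j^2\rvert$, and then observe that $\lvert u_j^1 - u_j^2 \rvert = \bigl\lvert \sum_a \po_a(\mu^1_{j,a} - \mu^2_{j,a})\bigr\rvert \le \sum_a \po_a \lvert \mu^1_{j,a} - \mu^2_{j,a}\rvert$ by the triangle inequality; chaining the two bounds gives the claim.

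For the scalar inequality I would use a telescoping decomposition. Order the agents $1,\dots,\np$ and for $k = 0,1,\dots,\np$ set $P_k = \bigl(\prod_{j \le k} u_j^1\bigr)\bigl(\prod_{j > k} u_j^2\bigr)$, so that $P_0 = \prod_j u_j^2$ and $P_{\np} = \prod_j u_j^1$. The key identity $P_k - P_{k-1} = \bigl(\prod_{j<k} u_j^1\bigr)\bigl(\prod_{j>k} u_j^2\bigr)(u_k^1 - u_k^2)$ telescopes to $\prod_j u_j^1 - \prod_j u_j^2 = \sum_{k=1}^{\np}\bigl(\prod_{j<k} u_j^1\bigr)\bigl(\prod_{j>k} u_j^2\bigr)(u_k^1 - u_k^2)$. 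Since every factor $u_j^i$ lies in $[0,1]$, each partial product is at most $1$, so taking absolute values and applying the triangle inequality yields $\bigl\lvert \prod_j u_j^1 - \prod_j u_j^2\bigr\rvert \le \sum_{k=1}^{\np} \lvert u_k^1 - u_k^2 \rvert$, as desired. Equivalently, one can prove the same scalar bound by an easy induction on $\np$, peeling off the last factor via $\lvert ab - cd\rvert \le \lvert a\rvert \lvert b - d\rvert + \lvert d \rvert \lvert a - c\rvert$ and using $\lvert a \rvert, \lvert d \rvert \le 1$.

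There is essentially no obstacle here: the only point requiring care is that the bounds $\prod_{j<k} u_j^1 \le 1$ and $\prod_{j>k} u_j^2 \le 1$ genuinely use the hypothesis $\mu^1,\mu^2 \in [0,1]^{\np\times\na}$ together with $\po$ being a probability distribution, which force $u_j^i \in [0,1]$. Everything else is the triangle inequality.
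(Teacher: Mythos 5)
Your proposal is correct and matches the paper's own argument: the paper proves the lemma by exactly the same telescoping decomposition $\prod_j u_j^1 - \prod_j u_j^2 = \sum_k \bigl(\prod_{j<k} u_j^1\bigr)\bigl(\prod_{j>k} u_j^2\bigr)(u_k^1-u_k^2)$, bounding the partial products by $1$ and finishing with the triangle inequality. No gaps.
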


Lemma~\ref{lemma:lipschitz_main} implies that  a Lipschitz-continuity analysis for bounding $\sum_{t\in [\ts]} \F(\po_t, \UCB_t) - \F(\po_t, \tpa)$ would benefit from a tight confidence bound on the means of the rewards. The following lemma proves a confidence interval that is about factor of  $\sqrt{1 - \epa_{j,a,t}}$ tighter than that of \citet{hossain2020fair}.
\begin{lemma}
\label{lemma:LPfair_UCB_param_in_CR_main}

For any $\delta\in(0,1)$, with probability at least $1-\delta/2$, $\forall$ $t>\na,a\in[\na]$, $j\in[\np]$, $\left\lvert\pa^{\star}_{j, a} - \hat{\pa}_{j, a, t} \right\rvert \leq  \sqrt{\frac{12 (1 - \hat{\pa}_{j, a, t}) \logterm}{\st_{a,t}}}  +  \frac{12  \logterm}{\st_{a,t}} = w_{j,a,t}$. 
\end{lemma}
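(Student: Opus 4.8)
The plan is to derive the two-sided bound from an empirical Bernstein-type concentration inequality, exploiting that the rewards $r_{j,a,\tau}$ lie in $[0,1]$ so that their variance is at most their mean times $(1-\text{mean})$, and in particular is bounded by $1-\pa^\star_{j,a}$. Fix an agent $j$, an arm $a$, and condition on $\st_{a,t} = s$; the relevant samples are $s$ i.i.d.\ draws from $D_{j,a}$. First I would invoke a Bernstein/Freedman inequality to get, with probability $1-\delta'$, a bound of the form $|\pa^\star_{j,a} - \hat\pa_{j,a,t}| \le \sqrt{\frac{2\sigma^2 \ln(1/\delta')}{s}} + \frac{c\ln(1/\delta')}{s}$ where $\sigma^2 \le \pa^\star_{j,a}(1-\pa^\star_{j,a}) \le 1-\pa^\star_{j,a}$. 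This gives a confidence width in terms of the \emph{true} mean; the work is to convert it to the \emph{empirical} mean $\hat\pa_{j,a,t}$, since the algorithm only has access to the latter.

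The second step is this true-to-empirical conversion. Writing $w^\star = \sqrt{\frac{a_1(1-\pa^\star)}{s}} + \frac{a_2}{s}$ for the true-mean width, one has $1-\pa^\star \le (1-\hat\pa) + |{\pa^\star - \hat\pa}| \le (1-\hat\pa) + w^\star$. Substituting this back into the bound for $w^\star$ and solving the resulting inequality for $w^\star$ (a quadratic-in-$\sqrt{w^\star}$ type manipulation, using $\sqrt{x+y}\le\sqrt x+\sqrt y$ and absorbing lower-order terms) yields a clean bound $|\pa^\star - \hat\pa| \le \sqrt{\frac{a_1'(1-\hat\pa)}{s}} + \frac{a_2'}{s}$ purely in terms of observed quantities; tracking the constants is what produces the factor $12$ in both terms. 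I expect this algebraic inversion — keeping the constants honest while the $\ln$ factor and the $1/s$ tail term interact — to be the main obstacle, though it is a routine if delicate calculation.

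Finally I would handle the union bound and the randomness of $\st_{a,t}$. Since $\st_{a,t}$ is a random stopping-time-like quantity, I would apply the concentration bound simultaneously for every possible value $s \in \{1,\dots,\ts\}$ of $\st_{a,t}$, for every arm $a \in [\na]$ and agent $j \in [\np]$, and for every round — this is exactly where the $\logterm = \ln(4NKT/\delta)$ factor comes from: there are at most $NK\ts$ (agent, arm, count) triples, and setting $\delta' = \delta/(2NK\ts)$ makes the total failure probability at most $\delta/2$. A standard way to avoid issues with $\st_{a,t}$ being data-dependent is to fix the $s$-th sample of arm $a$ in advance (a "tape" argument) so that $\hat\pa$ computed from the first $s$ pulls is an average of $s$ genuinely i.i.d.\ variables; then the bound for count $s$ holds deterministically on the good event, and in particular for $s = \st_{a,t}$. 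Combining the three steps gives the claimed inequality with $w_{j,a,t} = \sqrt{\frac{12(1-\hat\pa_{j,a,t})\logterm}{\st_{a,t}}} + \frac{12\logterm}{\st_{a,t}}$.
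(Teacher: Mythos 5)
Your proposal is correct and follows essentially the same route as the paper: a mean-proportional concentration bound (the paper applies a multiplicative Chernoff bound to the complement variables $1-r_{j,a,\tau}$, which plays exactly the role of your Bernstein bound $\sigma^2 \le 1-\pa^{\star}_{j,a}$), followed by the same algebraic inversion converting the true-mean width $\sqrt{(1-\pa^{\star})/s}$ into the empirical width $\sqrt{(1-\hat{\pa})/s}$ with constant $12$, and the same union bound over the $\np\na\ts$ events with $\delta' = \delta/(2\np\na\ts)$ producing $\ln(4\np\na\ts/\delta)$.
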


Our confidence bound $w_{j,a,t}$ in lemma~\ref{lemma:LPfair_UCB_param_in_CR_main} has a $ \widetilde{O }\left(\sqrt{\frac{1 - \hat{\pa}_{j, a, t}}{\st_{a,t}}}\right)$  term and a  $\widetilde{O} \left( \frac{1}{\st_{a,t}} \right)$ term. Using Young's inequality, we can bound both the first term by $\widetilde{O }\left(\left(1 - \hat{\pa}_{j, a, t}\right) + \frac{1}{\st_{a,t}}  \right)$, and thus we have

\begin{align}
\label{youngs}
    w_{j,a,t} \in \widetilde{O }\left(\left(1 - \hat{\pa}_{j, a, t}\right) + \frac{1}{\st_{a,t}}  \right). 
\end{align}

 As mentioned above, if there are a lot of agents with large $g_{j,t}$ at round $t$, then the regret will be negligible. Thus, our main goal is to analyze the regret for rounds $t$'s when there are not enough such agents.  The following lemma formalizes the intuition and bounds the sum of expected empirical reward over all agents by the confidence intervals $w_t$. In other words, it bounds the $\left(1 - \hat{\pa}_{j, a, t}\right)$ term of equation~\ref{youngs} over all actions $a$ and agents $j$.
\begin{lemma}
\label{lemma:highreward_main}
Define  $g_{j,t} = \sum_{a\in [\na]} \po_{a, t} \left(1-\UCB_{j,a,t}\right)$, and $S(t,p) = \left\{j \text{ for } j \in [\np] :  g_{j,t} \geq  2^{-p} \right\}.$  If  $\left\lvert S(t,p) \right\rvert < 2^p \cdot 3\ln \ts $ for all $p \geq 0 $, then 
 \begin{align*}
     &\sum_{j \in [\np]} \sum_{a \in [\na]}  \po_{a, t} (1-\hat{\pa}_{j,a,t}) \\  &\leq 1 + 6\ln \ts \log \np + \sum_{j \in [\np]} \sum_{a \in [\na]}   \po_{a,t} w_{j,a,t}.
 \end{align*}
\end{lemma}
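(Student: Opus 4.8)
The plan is to peel off the confidence-width term first, reducing the lemma to a clean statement about the quantities $g_{j,t}$ that already appear in the hypothesis, and then to prove that reduced statement with a layer-cake (Cavalieri) argument fed by the dyadic size bounds $\lvert S(t,p)\rvert < 3\cdot 2^p\ln\ts$.

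First I would use the definition $\UCB_{j,a,t} = \min(\hat{\pa}_{j,a,t}+w_{j,a,t},1) \le \hat{\pa}_{j,a,t}+w_{j,a,t}$ to obtain $1-\UCB_{j,a,t} \ge 1-\hat{\pa}_{j,a,t}-w_{j,a,t}$ for every $j,a$. Multiplying by $\po_{a,t}$ and summing over $a$ gives, for each agent $j$,
\[
\sum_{a\in[\na]}\po_{a,t}(1-\hat{\pa}_{j,a,t}) \le g_{j,t} + \sum_{a\in[\na]}\po_{a,t}w_{j,a,t}.
\]
Summing this over $j\in[\np]$, the $w$-terms reassemble into exactly the last term of the claimed bound, so it remains to prove $\sum_{j\in[\np]} g_{j,t} \le 1 + 6\ln\ts\log\np$.

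For that bound, observe that each $g_{j,t}\in[0,1]$ (being a convex combination of the numbers $1-\UCB_{j,a,t}\in[0,1]$), so that $\sum_{j\in[\np]} g_{j,t} = \int_0^1 \lvert\{j : g_{j,t}\ge x\}\rvert\,dx$. I would split this integral at $x=2^{-P}$ with $P=\lceil\log_2\np\rceil$. On $(0,2^{-P}]$ the integrand is at most $\np$, contributing at most $\np\cdot 2^{-P}\le 1$. On a dyadic slab $x\in(2^{-(p+1)},2^{-p}]$ with $0\le p\le P-1$, any counted index $j$ has $g_{j,t} > 2^{-(p+1)}$ and hence lies in $S(t,p+1)$, so the integrand there is at most $\lvert S(t,p+1)\rvert < 3\cdot 2^{p+1}\ln\ts$; since the slab has length $2^{-(p+1)}$, its contribution is at most $3\ln\ts$. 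Summing the $P$ slabs, which cover $(2^{-P},1]$, contributes at most $3P\ln\ts \le 6\ln\ts\log\np$ once $P\le\log_2\np+1$ is chased through (the degenerate case $\np=1$ is immediate since then $\sum_j g_{j,t}=g_{1,t}\le 1$). Adding the two pieces proves the reduced bound, and hence the lemma.

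I expect the only genuinely nontrivial point — the discrepancy between $g'_{j,t}$ and $g_{j,t}$ flagged in the overview — to be this very first step: the hypothesis controls the inflated quantities $g_{j,t}$ built from the upper confidence bounds, whereas the sum we must bound is built from the raw empirical means $\hat{\pa}_{j,a,t}$, and the argument closes precisely because the gap between the two is $\sum_a\po_{a,t}w_{j,a,t}$, which already sits on the right-hand side of the lemma. Everything downstream — the layer-cake identity, the choice of truncation point, and the dyadic bookkeeping — is routine, with care needed only to match the constant and to cover small values of $\np$.
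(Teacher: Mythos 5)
Your proposal is correct and follows essentially the same route as the paper: you peel off the $\sum_{j}\sum_a \po_{a,t} w_{j,a,t}$ gap using $\UCB_{j,a,t}\le \hat{\pa}_{j,a,t}+w_{j,a,t}$ (the paper does this identically, just at the end), and you bound $\sum_{j\in[\np]} g_{j,t}\le 1+6\ln\ts\log\np$ by a dyadic decomposition that is the layer-cake form of the paper's shell argument with $S(t,p+1)\setminus S(t,p)$ — indeed the integral version is the one sketched in the paper's overview. Your explicit treatment of the truncation point $P=\lceil\log_2\np\rceil$ and the $\np=1$ case is, if anything, slightly more careful about constants than the paper's own write-up.
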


 Lemma~\ref{lemma:LPfair_UCB_concentration_width_nonsqr_main} bounds the error incurred by the $\frac{1}{\st_{a,t}} $ term of equation~\ref{youngs}.  We carefully analyze the martingale sequence to obtain a tighter bound than that of black-box approaches, i.e. in lemma~\ref{lemma:LPfair_UCB_concentration_width_nonsqr_main} we bound the martingale sequence to be of  $O(\na \ln \ts / \na )$ while Azuma-Hoeffding inequality would give us $O(\sqrt{\ts\na})$.

\begin{lemma}
\label{lemma:LPfair_UCB_concentration_width_nonsqr_main}
With probability $1-\delta/2$,\[ 
\sum_{t \in [\ts]} \sum_{a \in [\na]} \po_{a,t} / \st_{a,t} \leq  2 \na\left(\ln\frac{\ts}{\na} + 1  \right)+\ln (2/\delta).\]
\end{lemma}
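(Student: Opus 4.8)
The plan is to relate the quantity to be bounded, $X_t:=\sum_{a\in[\na]}\po_{a,t}/\st_{a,t}$, to its realized counterpart $Y_t:=1/\st_{a_t,t}$, to bound $\sum_t Y_t$ by an elementary telescoping argument, and to control the gap $\sum_t(X_t-Y_t)$ with a self-bounding exponential supermartingale. Only the rounds $t>\na$ matter: for $t\le\na$ the policy $\po_t$ is a point mass on an arm not yet pulled and the summand is degenerate, whereas for $t>\na$ every arm has already been pulled once during the exploration phase, so $\st_{a,t}\ge 1$ for all $a$ and in particular $Y_t\in(0,1]$; I would treat the sum in the statement as running over $t>\na$.

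First I would prove the deterministic bound $\sum_{t>\na}Y_t\le \na\bigl(\ln(\ts/\na)+1\bigr)$. Fix an arm $a$ and let $m_a$ be the number of rounds $t>\na$ with $a_t=a$; over exactly these rounds the value $\st_{a,t}$ runs through $1,2,\dots,m_a$, so arm $a$ contributes the harmonic number $H_{m_a}\le 1+\ln m_a$. Since $\sum_a m_a=\ts-\na$ and at most $\na$ arms have $m_a\ge 1$, concavity of $\ln$ gives $\sum_{t>\na}Y_t=\sum_a H_{m_a}\le \na+\na\ln(\ts/\na)$.

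For the stochastic part, note that $a_t$ is drawn from $\po_t$ while $\po_t$ and $(\st_{a,t})_a$ are measurable with respect to the history $\mathcal F_{t-1}$ before round $t$, so $\mathbb{E}[Y_t\mid\mathcal F_{t-1}]=X_t$ and $D_t:=X_t-Y_t$ is a martingale difference with $D_t\le X_t\le 1$. Since $Y_t\in(0,1]$, convexity yields $e^{-\lambda Y_t}\le 1-(1-e^{-\lambda})Y_t$, hence
\[
\mathbb{E}\!\left[e^{\lambda D_t}\mid\mathcal F_{t-1}\right]=e^{\lambda X_t}\,\mathbb{E}\!\left[e^{-\lambda Y_t}\mid\mathcal F_{t-1}\right]\le e^{\lambda X_t}\bigl(1-(1-e^{-\lambda})X_t\bigr)\le \exp\!\bigl((\lambda-1+e^{-\lambda})X_t\bigr).
\]
Therefore $\exp\!\bigl(\lambda\sum_{\na<\tau\le t}D_\tau-(\lambda-1+e^{-\lambda})\sum_{\na<\tau\le t}X_\tau\bigr)$ is a supermartingale started at $1$, and Markov's inequality gives, with probability at least $1-\delta/2$, $\lambda\sum_{t>\na}D_t\le(\lambda-1+e^{-\lambda})\sum_{t>\na}X_t+\ln(2/\delta)$. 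Substituting $\sum D_t=\sum X_t-\sum Y_t$ and using $\lambda-(\lambda-1+e^{-\lambda})=1-e^{-\lambda}$ rearranges this to
\[
(1-e^{-\lambda})\sum_{t>\na}X_t\le \lambda\sum_{t>\na}Y_t+\ln(2/\delta)\le \lambda\,\na\bigl(\ln(\ts/\na)+1\bigr)+\ln(2/\delta);
\]
dividing by $1-e^{-\lambda}$ and taking $\lambda=1$ (using $1/(1-e^{-1})<2$) gives $\sum_{t\in[\ts]}\sum_{a\in[\na]}\po_{a,t}/\st_{a,t}\le 2\na(\ln(\ts/\na)+1)+2\ln(2/\delta)$, which is of the claimed form up to the precise constant on the $\ln(2/\delta)$ term.

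The step I expect to be the main obstacle is this martingale control: the conditional second moment of $D_t$ is bounded only by $X_t$, i.e.\ by a term of the very sum we wish to bound, so a black-box Azuma inequality is too lossy (it would reintroduce a $\sqrt{\ts\na}$ term, precisely what this lemma is designed to avoid) and a black-box Freedman/Bernstein bound is circular. The self-bounding supermartingale above gets around this by keeping the $\sum_t X_t$ term inside the exponent, so that after the Markov step it can be moved to the left-hand side and absorbed; getting that bookkeeping and the resulting constants right is the crux of the argument.
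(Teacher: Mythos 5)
Your proof is correct and follows essentially the same route as the paper's: both compare the predictable sum $\sum_t\sum_a \pi_{a,t}/N_{a,t}$ to the realized sum $\sum_t 1/N_{a_t,t}$ via a conditional exponential-moment (Bennett/Bernoulli-type) bound plus Markov's inequality, and both finish with the same deterministic harmonic-sum bound $\sum_t 1/N_{a_t,t}\le K(\ln(T/K)+1)$; the paper merely phrases the comparison as $\mathbb{E}\bigl[e^{z\sum_t v_t}\bigr]\le \mathbb{E}\bigl[e^{(z+z^2g(z)/2)\sum_t 1/N_{a_t,t}}\bigr]$ by induction rather than through your explicitly centered supermartingale. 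The only discrepancy is your slightly weaker constant ($2\ln(2/\delta)$ in place of $\ln(2/\delta)$), which you flag and which is immaterial to the regret bound.
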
 

With lemma~\ref{lemma:highreward_main} and lemma~\ref{lemma:LPfair_UCB_concentration_width_nonsqr_main}, we are ready to bound $\sum_{t\in [\ts]} \F(\po_t, \UCB_t) - \F(\po_t, \tpa)$ by  a Lipschitz-continuity analysis
of the Nash social welfare function. 
\begin{theorem}
 Suppose $\forall j, a, t,$  $\Reward_{j, t,a}\in[0,1]$ and $w_{j,a,t} =  \sqrt{\frac{12 (1 - \hat{\pa}_{j, a, t}) \logterm}{\st_{a,t}}}  +  \frac{12  \logterm}{\st_{a,t}} $, for any $\delta\in(0,1)$, the  regret of the Fair multi-agent UCB algorithm (Algorithm \ref{alg:fair_UCB}) is $\TotalRegret_\ts =\widetilde{O}\left( \sqrt{\np\na\ts} + \np\na\right)$ with probability at least $1-\delta$. 
 \label{theorem:main_main} 
\end{theorem}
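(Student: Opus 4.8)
The plan is to implement the two-case strategy sketched in the ``Overall approach'' paragraph, being careful with the algebra in the single-round analysis. Throughout, condition on the intersection of the events of lemma~\ref{lemma:LPfair_UCB_param_in_CR_main} and lemma~\ref{lemma:LPfair_UCB_concentration_width_nonsqr_main}, which by a union bound holds with probability at least $1-\delta$; every inequality below then holds deterministically on that event. On this event $\mu^\star_{j,a}\le\min(\hat\mu_{j,a,t}+w_{j,a,t},1)=\UCB_{j,a,t}$, so $\lvert \UCB_{j,a,t}-\mu^\star_{j,a}\rvert\le 2w_{j,a,t}$; moreover $\F$ is coordinatewise monotone in its second argument, hence $\F(\po^\star,\mu^\star)\le\F(\po^\star,\UCB_t)\le\F(\po_t,\UCB_t)$ by optimality of $\po_t$. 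The $\na$ initialization rounds contribute at most $\na$ to $\TotalRegret_\ts$ because $\F\in[0,1]$, so it remains to bound $\sum_{t>\na}\F(\po_t,\UCB_t)-\F(\po_t,\mu^\star)$.

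Fix a round $t>\na$ and let $g_{j,t},S(t,p)$ be as in lemma~\ref{lemma:highreward_main}. If there is a $p\ge 0$ with $\lvert S(t,p)\rvert\ge 3\cdot 2^p\ln\ts$, then, discarding the factors with $j\notin S(t,p)$ (each in $[0,1]$), $\F(\po_t,\UCB_t)=\prod_j(1-g_{j,t})\le(1-2^{-p})^{3\cdot 2^p\ln\ts}\le e^{-3\ln\ts}=\ts^{-3}$, so such a round costs at most $\ts^{-3}$ and all of them together at most $\ts^{-2}$. For every other round lemma~\ref{lemma:highreward_main} applies, and lemma~\ref{lemma:lipschitz_main} gives $\F(\po_t,\UCB_t)-\F(\po_t,\mu^\star)\le\sum_{j,a}\po_{a,t}\lvert\UCB_{j,a,t}-\mu^\star_{j,a}\rvert\le 2W_t$ with $W_t:=\sum_{j,a}\po_{a,t}w_{j,a,t}$. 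Writing $w_{j,a,t}$ out, applying Cauchy--Schwarz over the pairs $(j,a)$ to the $\sqrt{(1-\hat\mu_{j,a,t})/\st_{a,t}}$ part, and using $\sum_{j,a}\po_{a,t}/\st_{a,t}=\np B_t$ with $B_t:=\sum_a\po_{a,t}/\st_{a,t}$, one gets $W_t\le\sqrt{12\,\logterm\,\np B_t\,A_t}+12\,\logterm\,\np B_t$, where $A_t:=\sum_{j,a}\po_{a,t}(1-\hat\mu_{j,a,t})$. Lemma~\ref{lemma:highreward_main} bounds $A_t\le c+W_t$ with $c=1+6\ln\ts\log\np$, turning this into a quadratic inequality in $W_t$ whose solution is $W_t\le 36\,\logterm\,\np B_t+\sqrt{12\,c\,\logterm\,\np B_t}$.

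It remains to sum over the rounds of the second type (at most $\ts$ of them). By lemma~\ref{lemma:LPfair_UCB_concentration_width_nonsqr_main}, $\sum_t B_t=\widetilde O(\na)$, so $36\,\logterm\,\np\sum_t B_t=\widetilde O(\np\na)$; and by Cauchy--Schwarz across rounds the square-root term is at most $\sqrt{12\,c\,\logterm\,\np}\cdot\sqrt{\ts}\cdot\sqrt{\sum_t B_t}=\widetilde O(\sqrt{\np\na\ts})$. Adding the $\na$ from initialization and the $\ts^{-2}$ from the first type of round yields $\TotalRegret_\ts=\widetilde O(\sqrt{\np\na\ts}+\np\na)$. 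The step I expect to be the main obstacle is the single-round bound on $W_t$: one must interleave equation~\ref{youngs} (or the Cauchy--Schwarz refinement of it), lemma~\ref{lemma:highreward_main}, and the explicit form of $w_{j,a,t}$ so that $W_t$ appears on both sides, and then extract the correct root so the cross term scales like $\sqrt{\np B_t}$ rather than $\np B_t$ — this is precisely what makes the across-rounds Cauchy--Schwarz produce $\sqrt{\np\na\ts}$ instead of a worse power of $\ts$. A secondary point requiring care is the bookkeeping of which rounds fall into which case and verifying that lemma~\ref{lemma:highreward_main}'s hypothesis is exactly the negation of the ``large $\lvert S(t,p)\rvert$'' condition.
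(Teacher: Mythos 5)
Your proposal is correct, and it follows the paper's skeleton: the same case split on whether some tail set $S(t,p)$ has size at least $3\cdot 2^p\ln\ts$ (with the $\ts^{-3}$-per-round bound in the bad case), the same reduction via monotonicity and Lemma~\ref{lemma:lipschitz_main} to bounding $2\sum_{t}\sum_{j,a}\po_{a,t}w_{j,a,t}$, and the same three ingredients (Lemmas~\ref{lemma:LPfair_UCB_param_in_CR_main}, \ref{lemma:highreward_main}, \ref{lemma:LPfair_UCB_concentration_width_nonsqr_main}).

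Where you diverge is in how the self-referential term is resolved — the step you correctly flagged as the crux. The paper applies Young's inequality with a single global parameter $q=\frac{\sqrt{\na\np}}{(\sqrt{\na\np}+\sqrt{\ts})\sqrt{12\logterm}}$, sums over all rounds $t\notin I$ first, and only then rearranges the summed inequality so that the reappearing $\sqrt{(1-\epa_{j,a,t})/\st_{a,t}}$ term is absorbed into the left-hand side; the tuning of $q$ is what trades the $q\ts\ln\ts$ term against the $\np\na\logterm/q$ term. You instead work per round: Cauchy--Schwarz over the pairs $(j,a)$ gives $W_t\le\sqrt{12\logterm\,\np B_t A_t}+12\logterm\,\np B_t$, Lemma~\ref{lemma:highreward_main} gives $A_t\le c+W_t$, and solving the resulting per-round quadratic yields $W_t\lesssim \logterm\,\np B_t+\sqrt{c\,\logterm\,\np B_t}$; a second Cauchy--Schwarz across rounds together with $\sum_t B_t=\widetilde O(\na)$ from Lemma~\ref{lemma:LPfair_UCB_concentration_width_nonsqr_main} then produces $\widetilde O(\sqrt{\np\na\ts})+\widetilde O(\np\na)$. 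Both routes are valid and give the same bound; yours is arguably cleaner in that no horizon-dependent parameter needs to be tuned in advance (the balancing is done implicitly by the across-rounds Cauchy--Schwarz), at the cost of solving a quadratic inequality per round, while the paper's global Young-plus-rearrangement avoids that algebra but buries the balancing in the choice of $q$. Your constant in the solved quadratic is slightly optimistic but correct up to a factor of $2$, which is immaterial for the $\widetilde O$ statement.
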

\begin{proof}
Let $ I' \subseteq [\ts]$ denote the set of all rounds $t$ where there exists $p \geq 0$ such that $\left\lvert S(t,p) \right\rvert \geq 2^{p} 3\ln\ts $. Let  $I = [\na] \cup I'$. We have
\begin{align*}
  &  \sum_{t \in I} \F(\po_t, \UCB_t) - \F(\po_t, \pa^{\star}) \\ & \leq \sum_{t \in I'} \F(\po_t, \UCB_t)  + \na \\ 
    & \le \sum_{t \in I'} \prod_{j \in S(t,p) } (1 - g_{j,t}) + \na \\
    & \leq \ts \left(1-2^{-p}\right)^{2^{p}\cdot3\ln T} + \na \\ 
    &\le\frac{1}{T^{2}} + \na. 
    \numberthis \label{ez_main}
\end{align*}

The last inequality is due to the fact that $(1 - \frac{1}{x})^x \leq \frac{1}{e}~\forall x\ge 1$. We bound the regret of rounds not in $I$. For any $\delta \in (0,1)$, the events in lemma~\ref{lemma:LPfair_UCB_param_in_CR_main} and lemma~\ref{lemma:LPfair_UCB_concentration_width_nonsqr_main} hold with probability at least $1-\delta,$
\begin{align*}
    &\sum_{t \notin I} \F(\po_t, \UCB_t) - \F(\po_t, \pa^{\star}) \\ 
    &\le\sum_{t \notin I} \sum_{j \in [\np]}  \sum_{a \in [\na]} \po_{a,t} \left\lvert \UCB_{j,a,t} - \pa^{\star}_{j,a,t} \right\rvert \\ 
    &=\sum_{t \notin I} \sum_{j \in [\np]}  \sum_{a \in [\na]} \po_{a,t} \left\lvert \UCB_{j,a,t} - \hat{\pa}_{j,a,t} + \hat{\pa}_{j,a,t} - \pa^{\star}_{j,a,t} \right\rvert \\ 
    &\leq 2 \sum_{t \notin I} \sum_{j \in [\np]}  \sum_{a \in [\na]} \po_{a,t}  \sqrt{\frac{12 (1 - \hat{\pa}_{j, a, t}) \logterm}{\st_{a,t}}} \\ & \quad +  \sum_{t \notin I} \sum_{j \in [\np]}  \sum_{a \in [\na]} \po_{a,t}  \frac{24  \logterm}{\st_{a,t}}.\numberthis \label{hard_main} 
\end{align*}
The first inequality follows from  lemma~\ref{lemma:lipschitz_main}. The last inequality follows from lemma~\ref{lemma:LPfair_UCB_param_in_CR_main} and the definition of $\UCB_{j,a,t}$.

By lemma~\ref{lemma:LPfair_UCB_concentration_width_nonsqr_main}, we can bound the error incurred by the linear term of the confidence interval in equation~\ref{hard_main},
\begin{align*}
 \sum_{t \notin I} & \sum_{j \in [\np]}  \sum_{a \in [\na]} \po_{a,t}  \frac{24 \logterm}{\st_{a,t}} \\ & \leq  \sum_{t \in [\ts]} \sum_{j \in [\np]}  \sum_{a \in [\na]}  \po_{a,t}   \frac{24  \logterm}{\st_{a,t}} \\ 
  &  =  \sum_{j \in [\np]}   \sum_{t \in [\ts]}  \sum_{a \in [\na]}   \po_{a,t}  \frac{24  \logterm}{\st_{a,t}} \\ &\leq 24  \logterm \np \left(2 \na\left(\ln\frac{\ts}{\na} + 1  \right)+\ln (2/\delta) \right).
  \numberthis \label{hard1_main}
\end{align*}
We are done after bounding the remaining term of equation~\ref{hard_main}.  For brevity, we bound it without the log term,  
\begin{align*}
    & \sum_{t \notin I} \sum_{j \in [\np]}  \sum_{a \in [\na]}\po_{a,t} \sqrt{\frac{ 1 - \hat{\pa}_{j, a, t} }{\st_{a,t}}} \\ & \quad \leq \sum_{t \notin I} \sum_{j \in [\np]}  \sum_{a \in [\na]} \po_{a,t} \left(\frac{ q(1 - \hat{\pa}_{j, a, t})}{2} + \frac{1}{2q\cdot \st_{a,t}}\right). \numberthis \label{exposition1} 
    \end{align*}
The inequality follows from Young's inequality for $q \geq 0$. Applying lemma~\ref{lemma:highreward_main} to the first term and lemma~\ref{lemma:LPfair_UCB_concentration_width_nonsqr_main} to the second term of equation \ref{exposition1}, we have:
\begin{align*}
  &  \sum_{t \notin I} \sum_{j \in [\np]}  \sum_{a \in [\na]}\po_{a,t} \sqrt{\frac{ 1 - \hat{\pa}_{j, a, t} }{\st_{a,t}}} \\ &  
  \quad \leq  \frac q2 \sum_{t \notin I} \left( 1 + 6\ln \ts + \sum_{j \in [\np]} \sum_{a \in [\na]}   \po_{a,t} w_{j,a,t} \right) \\ & \quad + \frac{\np}{2q}  \left(2 \na\left(\ln\frac{\ts}{\na} + 1  \right)+\ln (2/\delta)\right) \\ 
   & \quad \leq  \frac q2 \sum_{t \notin I}  \sum_{j \in [\np]} \sum_{a \in [\na]}   \sqrt{\frac{12 (1 - \hat{\pa}_{j, a, t}) \logterm}{\st_{a,t}}} \\ & \quad  + \frac q2 \sum_{t \notin I}  \sum_{j \in [\np]} \sum_{a \in [\na]}    \frac{12  \logterm}{\st_{a,t}} + 6  q \cdot \ts \cdot \ln \ts \\ & \quad + \frac{\np}{2q}  \left(2 \na\left(\ln\frac{\ts}{\na} + 1  \right)+\ln (2/\delta)\right),
\end{align*} where we use the fact that $6\ln \ts \geq 6 \ln 2 \geq 1$. Suppose $q \in (0, 1]$, applying lemma~\ref{lemma:LPfair_UCB_concentration_width_nonsqr_main} to the linear term of the confidence interval, we have 
\begin{align*}
      & \sum_{t \notin I} \sum_{j \in [\np]}  \sum_{a \in [\na]}\po_{a,t} \sqrt{\frac{ 1 - \hat{\pa}_{j, a, t} }{\st_{a,t}}} \\   
        & \quad  \leq   \frac q2 \sum_{t \notin I}  \sum_{j \in [\np]} \sum_{a \in [\na]}   \sqrt{\frac{12 (1 - \hat{\pa}_{j, a, t}) \logterm}{\st_{a,t}}}  \\ & \quad + \frac{\np \logterm}{q}  \left(2 \na\left(\ln\frac{\ts}{\na} + 1  \right)+\ln (2/\delta)  \right)
        \\ & \quad +  6  q \cdot \ts \cdot \ln \ts.
\end{align*}
Setting $q = \frac{\sqrt{\na\np}}{(\sqrt{\na\np}+\sqrt{\ts})  \sqrt{12 \logterm} } \leq 1$ and re-arranging the terms, we have 
\begin{align*}
    & \frac 12 \sum_{t \notin I} \sum_{j \in [\np]}  \sum_{a \in [\na]}\po_{a,t} \sqrt{\frac{ 1 - \hat{\pa}_{j, a, t} }{\st_{a,t}}} 
    \\& \quad \leq \left(\logterm \right)^{3/2}\left(1+ \frac{\sqrt{\ts}}{\sqrt{\na \np}} \right) 2 \np \na\left(\ln\frac{\ts}{\na} + 1  \right) 
    \\& \quad +  \left(\logterm \right)^{3/2}\left(1+ \frac{\sqrt{\ts}}{\sqrt{\na \np}} \right) \np  \ln (2/\delta) 
     \\ & \quad +  \frac{6\ts \sqrt{\na\np}}{(\sqrt{\na\np}+\sqrt{\ts})}. \numberthis \label{hard2_main} 
\end{align*}
From equations~\ref{ez_main},~\ref{hard_main},~\ref{hard1_main},~\ref{hard2_main}, we have
\begin{align*}
  &   \sum_{t \in [\ts]} \F(\po_t, \UCB_t) - \F(\po_t, \pa^{\star}) \\ &\quad = O\left((\sqrt{\np\na \ts} + 
    \np\na) \cdot \operatorname{polylog}(\np \na\ts / \delta) \right) .
\end{align*}

By monotonicity of the Nash-social welfare function and the optimization step in the algorithm, we have 
$
    \F(\po_t, \UCB_t) \geq \F(\po^{\star}, \UCB_t) \geq \F(\po^{\star}, \pa^{\star})  
$
. Thus,
\begin{align*}
     & \sum_{t \in [\ts] } \F(\po^{\star}, \pa^{\star})  -  \sum_{t \in [\ts] }   \F(\po_t, \pa^{\star}) \\ & \quad = O\left((\sqrt{\np\na \ts} + 
    \np\na) \cdot \operatorname{polylog}(\np \na\ts / \delta) \right) 
\end{align*}
\end{proof}

\subsection{An Inefficient Algorithm with Improved Regret}

\begin{algorithm}[ht]
\begin{algorithmic}[1]
\STATE {\bf input: }{$\na,\np,\ts,\delta$}
\FOR{$t=1$ to $\ts$}
\IF{$t \leq 180N^2K \ln(6NTK/\delta)\ln T$} 
\STATE $\po_t \leftarrow$ policy that puts probability 1 on arm $\lceil t/(180N^2 \ln(6NTK/\delta))\ln T\rceil$ 
\ELSE 
\STATE $\forall j,a, \epa_{j,a,t} = \frac{1}{\st_{a,t}} \sum_{\tau=1}^{t-1} r_{j, a_\tau, \tau}  \Indicator\{a_\tau=a\}$  
\STATE \(
    S_\pi =\{\pi \in \Delta^K : \sum_{a\in[K], j\in[N]}\pi_a(1-\hat{\mu}_{j,a,t}) \le 1 + 2\ln T\}
\)
\IF{$S_\pi \ne \emptyset$}
\STATE $\po_t \leftarrow \operatorname{argmax}_{\po \in S_\pi} \left(\F(\po, \epa_t) + \po\cdot \eta_{t}\right)  $ 
\ELSE
\STATE $\pi_t \leftarrow$ policy that puts probability 1 on a random arm $a \in [K]$
\ENDIF
\ENDIF
\STATE Sample $a_t$ from $\po_t$
\STATE Observe rewards $\{r_{j,a_t,t}\}_{j \in \np}$
\STATE $\st_{a_t, t+1} \leftarrow \st_{a_t, t} + 1$
\ENDFOR
\end{algorithmic}
\caption{Fair multi-agent UCB algorithm with high start-up cost}
\label{alg:fair_UCB_inefficient}
\end{algorithm}

Algorithm \ref{alg:fair_UCB_inefficient} is able to obtain a tighter regret bound in terms of $\ts$ by first pulling each arm $\np^2$ times, then selecting $\po_t$ to optimize the Nash Social Welfare on $\epa_t$ plus an additive term $\po\cdot \eta_{t}$, where $\eta_{t} = \{\eta_{a,t}\}_{a \in [\na]}$ is a vector. The additive term gives an  upper bound on $|\F(\po,\mu^*)-\F(\po,\epa_t)|$, and by the optimization step we obtain that for all $t>\tilde{O}(\np^2\na)$,
\begin{align*}
    \F(\po^*,\mu^*) &\le \F(\po^*,\epa_t)+\sum_{a\in[\na]}\po^*_a \cdot \eta_{a,t}\\
    &\le \F(\po_t,\epa_t)+\sum_{a\in[\na]}\po_{a,t} \cdot \eta_{a,t}\\
    &\le \F(\po_t,\pa^*)+2\sum_{a\in[\na]}\po_{a,t} \cdot \eta_{a,t}.
\end{align*}
Thus, by bounding $\sum_{t\in[\tilde{O}(\np^2\na),T]}\sum_{a\in[\na]}\po_{a,t}\cdot \eta_{a,t}$, we obtain a bound on the total regret:
 
\begin{theorem}
 Suppose $\forall j, a, t,$  $\Reward_{j, t,a}\in[0,1]$, $w_{j,a,t} =  \sqrt{\frac{12 (1 - \hat{\pa}_{j, a, t}) \ln(6NKT/\delta)}{\st_{a,t}}}  +  \frac{12  \ln(6NKT/\delta)}{\st_{a,t}} $, and \begin{align*}
  \eta_{a,t} =& \tilde{O}(\sqrt{K/T})\sum_{j\in[N]}(1-\hat{\mu}_{j,a,t} \\
  &+\tilde{O}(\sqrt{T/K} + \sqrt{N})\frac{1}{N_{a,t}}\\
  &+ O(1/\sqrt{N})\sum_{j\in[N]}w_{j,a,t}
  \end{align*} for any $\delta\in(0,1)$, the  regret of Algorithm \ref{alg:fair_UCB_inefficient} is $\TotalRegret_\ts =\widetilde{O}\left( \sqrt{\na\ts} + \np^2\na\right)$ with probability at least $1-\delta$. 
 \label{theorem:alg3_regret} 
\end{theorem}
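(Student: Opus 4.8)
The plan is to follow the reduction sketched just before the theorem, cutting the horizon at the end of the start-up phase $t_0 := 180\np^2\na\ln(6\np\ts\na/\delta)\ln\ts = \widetilde{O}(\np^2\na)$. For rounds $t\le t_0$ we bound each term of the regret by $\F(\po^\star,\pa^\star)-\F(\po_t,\pa^\star)\le 1$, which costs only $\widetilde{O}(\np^2\na)$ and is within the target. We may also assume $\ts\ge c\,\np^2\na\ln(6\np\ts\na/\delta)\ln\ts$ for a suitably large constant $c$, since otherwise $\ts=\widetilde{O}(\np^2\na)$ and the trivial bound $\TotalRegret_\ts\le\ts$ already proves the theorem; in the remaining regime $\sqrt{\ts/\na}\ge\widetilde{\Omega}(\np)$, which is what makes the various error terms cancel. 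From now on condition on the probability-at-least-$(1-\delta)$ event on which both Lemma~\ref{lemma:LPfair_UCB_param_in_CR_main} and Lemma~\ref{lemma:LPfair_UCB_concentration_width_nonsqr_main} hold (with $6$ rather than $4$ inside the logarithm, which is immaterial). On this event, after the start-up phase every arm has $\st_{a,t}\ge t_0/\na=\widetilde{\Theta}(\np^2)$, so Lemma~\ref{lemma:LPfair_UCB_param_in_CR_main} together with a Cauchy--Schwarz step gives $\sum_{j\in[\np]}w_{j,a,t}=\widetilde{O}(1)$ for every arm $a$ and every $t>t_0$; the confidence widths are thus tiny compared with the budget $1+2\ln\ts$ used to define $S_\pi$.

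Next I would show the comparator is feasible. Split on the size of $\F(\po^\star,\pa^\star)$. If $\F(\po^\star,\pa^\star)\le 1/\ts$, then the regret from rounds $t>t_0$ is at most $\ts\cdot(1/\ts)=1$, and together with the start-up cost this proves the theorem. Otherwise, writing $u_j^\star=\sum_{a}\po^\star_a\pa^\star_{j,a}>0$ and using $-\ln x\ge 1-x$ with $\prod_j u_j^\star>1/\ts$, we get $\sum_{j\in[\np]}(1-u_j^\star)\le\sum_{j\in[\np]}\ln(1/u_j^\star)=\ln(1/\F(\po^\star,\pa^\star))<\ln\ts$. Since $\sum_{a}\po^\star_a\sum_j(1-\hat{\pa}_{j,a,t})$ differs from $\sum_j(1-u_j^\star)$ by at most $\sum_a\po^\star_a\sum_j|\pa^\star_{j,a}-\hat{\pa}_{j,a,t}|\le\sum_a\po^\star_a\sum_j w_{j,a,t}=\widetilde{O}(1)$ for $t>t_0$, we conclude $\sum_{a,j}\po^\star_a(1-\hat{\pa}_{j,a,t})\le\ln\ts+\widetilde{O}(1)\le 1+2\ln\ts$, i.e.\ $\po^\star\in S_\pi$ for every $t>t_0$. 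In particular $S_\pi\neq\emptyset$, so Algorithm~\ref{alg:fair_UCB_inefficient} never takes its ``random arm'' branch on this event, the three-line inequality chain preceding the theorem applies for all $t>t_0$, and $\F(\po^\star,\pa^\star)-\F(\po_t,\pa^\star)\le 2\sum_{a}\po_{a,t}\eta_{a,t}$.

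It remains to verify two properties of $\eta_{a,t}$. The first, which is exactly what the inequality chain needs, is that for the policies $\po^\star$ and $\po_t$ appearing in the chain, $\sum_a\po_a\eta_{a,t}\ge\sum_{a,j}\po_a w_{j,a,t}$, hence $\ge|\F(\po,\pa^\star)-\F(\po,\epa_t)|$ by Lemma~\ref{lemma:lipschitz_main} and Lemma~\ref{lemma:LPfair_UCB_param_in_CR_main}; this is checked from the definition of $\eta_{a,t}$ using Young's inequality and a Cauchy--Schwarz bound on $\sum_j\sqrt{1-\hat{\pa}_{j,a,t}}$, where the regime assumption $\sqrt{\ts/\na}\ge\widetilde{\Omega}(\np)$ is used so that the middle piece of $\eta_{a,t}$ absorbs the linear part $\widetilde{O}(\np/\st_{a,t})$ of $\sum_j w_{j,a,t}$. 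The second is that the three pieces of $\eta_{a,t}$ sum to $\widetilde{O}(\sqrt{\na\ts}+\sqrt{\np}\,\na)$ over the horizon: the first piece contributes $\widetilde{O}(\sqrt{\na/\ts})\sum_t\sum_{a,j}\po_{a,t}(1-\hat{\pa}_{j,a,t})\le\widetilde{O}(\sqrt{\na/\ts})\cdot\ts(1+2\ln\ts)=\widetilde{O}(\sqrt{\na\ts})$, using $\po_t\in S_\pi$; the middle piece contributes $\widetilde{O}(\sqrt{\ts/\na}+\sqrt{\np})\sum_t\sum_a\po_{a,t}/\st_{a,t}=\widetilde{O}(\sqrt{\ts/\na}+\sqrt{\np})\cdot\widetilde{O}(\na)=\widetilde{O}(\sqrt{\na\ts}+\sqrt{\np}\,\na)$ by Lemma~\ref{lemma:LPfair_UCB_concentration_width_nonsqr_main}; and for the last piece one re-expands $w_{j,a,t}$, applies Young's inequality with parameter $\widetilde{\Theta}(\sqrt{\np\na/\ts})$, and then uses $\po_t\in S_\pi$ and Lemma~\ref{lemma:LPfair_UCB_concentration_width_nonsqr_main} to get $\sum_t\sum_{a,j}\po_{a,t}w_{j,a,t}=\widetilde{O}(\sqrt{\np\na\ts}+\np\na)$, so that the $O(1/\sqrt{\np})$ prefactor brings this down to $\widetilde{O}(\sqrt{\na\ts}+\sqrt{\np}\,\na)$. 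Adding the start-up cost and using $\sqrt{\np}\,\na\le\np^2\na$ then gives $\TotalRegret_\ts=\widetilde{O}(\sqrt{\na\ts}+\np^2\na)$, which holds on our event, i.e.\ with probability at least $1-\delta$.

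The main obstacle is the dual calibration of $\eta_{a,t}$ in the last paragraph: it must be pointwise large enough that the inequality chain (which compares against the a priori unknown $\po^\star$) is valid in every round, yet structured so that each of its three pieces sums to only $\widetilde{O}(\sqrt{\na\ts}+\sqrt{\np}\,\na)$ over all $\ts$ rounds. The tension is concentrated in the self-referential term $\sum_j w_{j,a,t}$: it is needed to close the pointwise bound, but on its own it sums to $\widetilde{O}(\sqrt{\np\na\ts})$ --- precisely the rate of Algorithm~\ref{alg:fair_UCB} --- so it must be damped by the $1/\sqrt{\np}$ factor, and every Young's-inequality trade-off parameter has to be chosen consistently with the start-up length $t_0=\widetilde{\Theta}(\np^2\na)$ and with the reduction to $\ts\ge\widetilde{\Omega}(\np^2\na)$. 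The other ingredients --- the Cauchy--Schwarz estimate $\sum_j w_{j,a,t}=\widetilde{O}(1)$ after the start-up phase, and re-running the martingale/integral argument behind Lemma~\ref{lemma:LPfair_UCB_concentration_width_nonsqr_main} with the sum starting at $t_0$ rather than at $1$ --- are routine.
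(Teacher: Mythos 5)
Your outer skeleton (absorbing the start-up phase of length $\widetilde{\Theta}(N^2K)$ at unit cost per round, the trivial case when $\F(\pi^\star,\mu^\star)$ is tiny, feasibility of $\pi^\star$ in $S_\pi$ after the start-up pulls, the three-line comparison chain, and the term-by-term summation of $\pi_t\cdot\eta_t$ over $t$ using Lemma~\ref{lemma:LPfair_UCB_concentration_width_nonsqr_main} and the $S_\pi$ budget) matches the paper's proof of Theorem~\ref{theorem:alg3_regret_app}. The genuine gap is in the one step that carries the improvement from $\sqrt{NKT}$ to $\sqrt{KT}$: your justification that $\sum_a\pi_a\eta_{a,t}$ upper bounds $\left|\F(\pi,\hat\mu_t)-\F(\pi,\mu^\star)\right|$. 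You propose to prove the pointwise domination $\sum_a\pi_a\eta_{a,t}\ge\sum_{j,a}\pi_a w_{j,a,t}$ and then invoke Lemmas~\ref{lemma:lipschitz_main} and~\ref{lemma:LPfair_UCB_param_in_CR_main}. That domination is false for the $\eta$ of the theorem: take a late round with $1-\hat\mu_{j,a,t}\approx (\ln T)/N$ for all $j$ (which saturates the $S_\pi$ budget and is fully consistent with your regime assumption $T\ge\widetilde{\Omega}(N^2K)$) and $N_{a,t}\approx T/K$; then $\sum_j w_{j,a,t}\approx\sqrt{NK/T}$ up to logarithms, while the three pieces of $\eta_{a,t}$ are of order $\sqrt{K/T}$, $(\sqrt{T/K}+\sqrt N)K/T$ and $(1/\sqrt N)\sqrt{NK/T}=\sqrt{K/T}$ respectively, i.e.\ smaller by a $\sqrt N$ factor, and no choice of Young parameter repairs this. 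Indeed the two halves of your plan are mutually inconsistent: if the domination held for $\pi_t$ in every round, then $\sum_t\pi_t\cdot\eta_t\ge\sum_t\sum_{j,a}\pi_{a,t}w_{j,a,t}$, which is $\widetilde{\Theta}(\sqrt{NKT})$ in the worst case, contradicting your own claim that the three pieces sum to $\widetilde{O}(\sqrt{KT}+\sqrt N K)$. The per-agent Lipschitz route can only reproduce the rate of Algorithm~\ref{alg:fair_UCB}.

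What the paper actually uses, and what is missing from your proposal, is a sharper bound on $\F(\pi,\hat\mu_t)-\F(\pi,\mu^\star)$ that never passes through $\sum_{j,a}\pi_a\lvert\hat\mu_{j,a,t}-\mu^\star_{j,a}\rvert$: (i) a concentration bound on the agent-aggregated error (Lemma~\ref{lem:error_sum_bound}), $\bigl|\sum_j(\hat\mu_{j,a,t}-\mu^\star_{j,a})\bigr|\le\sqrt{4\ln(6KT/\delta)\sum_j(1-\mu^\star_{j,a})/N_{a,t}}$, which gains the $\sqrt N$ because errors of different agents cancel in sign; (ii) the exact expansion $\F(\pi,\hat\mu_t)-\F(\pi,\mu^\star)=\bigl(\prod_j a_j(\pi)\bigr)\sum_{m\ge1}\sum_{\lvert B\rvert=m}\prod_{j\in B}b_{j,t}(\pi)/a_j(\pi)$, whose first-order term is controlled by the signed aggregate $\lvert\sum_j b_{j,t}(\pi)\rvert$ via (i) plus a Cauchy--Schwarz correction $\sqrt{\sum_j b_{j,t}^2(\pi)}\cdot\sqrt{2+2\ln T}$ (Lemmas~\ref{lem:sum_bj_bound}, \ref{lem:sqrt_sumsquared_bound}, \ref{lem:low_order_bound}), producing the first two pieces of $\eta$; and (iii) the higher-order terms $m\ge2$, which are the only place the per-agent widths enter, summed as a geometric series using $\sum_j\lvert b_{j,t}(\pi)\rvert\le 19/(20\sqrt N)$ --- this is the real purpose of the $\widetilde{\Theta}(N^2)$ pulls per arm and of the constraint set $S_\pi$ --- yielding exactly the $\frac{1}{20\sqrt N/19-1}\sum_j\sum_a\pi_a w_{j,a,t}$ piece (Lemma~\ref{lem:high_order_bound}). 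Your proposal uses the start-up phase only to make $\sum_j w_{j,a,t}=\widetilde{O}(1)$ for feasibility of $\pi^\star$ and never invokes the aggregated Chernoff bound or the product expansion, so the central pointwise inequality behind the comparison chain is not established and cannot be established along the route you describe.
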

We defer the proof of this theorem and the details of the constants and log terms in the $O$ and $\tilde{O}$ of $\eta$ to the appendix. At a high level, this bound comes from using a tighter bound in place of lemma \ref{lemma:LPfair_UCB_param_in_CR_main}, where we bound $\left|\sum_{j\in[N]}\pa^*_{j,a}-\hat{\pa}_{j,a,t}\right|$ instead of $\left|\pa^*_{j,a}-\hat{\pa}_{j,a,t}\right|$. We also analyze the regret at each time step $t$ using \[\F(\pi,\hat{\pa}_t) = \prod_{j\in [\np]}\left(\mathbb{E}_{a\sim\pi}\mu^*_{j,a} + \mathbb{E}_{a\sim\pi}(\hat{\mu}_{j,a,t}-\mu^*_{j,a})\right).\] Since $\F(\pi,\pa^*) = \prod_{j\in [\np]}\mathbb{E}_{a\sim\pi}\mu^*_{j,a}$, we can bound $\F(\pi,\pa^*) - \F(\pi,\hat{\pa}_t)$ by \[
     \sum_{m=1}^N\sum_{\{B\subseteq[N]:|B|=m\}}\prod_{j\in B}\frac{\mathbb{E}_{a\sim\pi}(\hat{\mu}_{j,a,t}-\mu^*_{j,a})}{\mathbb{E}_{a\sim\pi}\mu^*_{j,a}},
\] 
which drops the leading factor $\prod_j\mathbb{E}_{a\sim\pi}\mu^*_{j,a}\le1$ from the bound. Note that here the regret is bounded by $\mathbb{E}_{a\sim\pi}(\hat{\mu}_{j,a,t}-\mu^*_{j,a})$ as opposed to $\mathbb{E}_{a\sim\pi}|\hat{\mu}_{j,a,t}-\mu^*_{j,a}|$ in lemma~\ref{lemma:lipschitz_main}. Analyzing the terms with $m=1$ and $m\ge2$ separately allows us to derive the bound $\eta_{a,t}$ in our algorithm. 


The algorithm obtains a bound in $\ts$ which matches the known lower bound $O(\sqrt{KT})$ up to logarithmic terms, at the tradeoff of a high initial cost for pulling each arm $\tilde{O}(\np^2)$ times. Additionally, similar to the UCB algorithm of \cite{hossain2020fair} the algorithm uses an additive regularization term in its optimization step and therefore does not have a known efficient implementation. 

\section{Experiments}

\begin{figure*}[ht]
    \centering
    \begin{tabular}{c c c}
        \includegraphics[width=4.3cm]{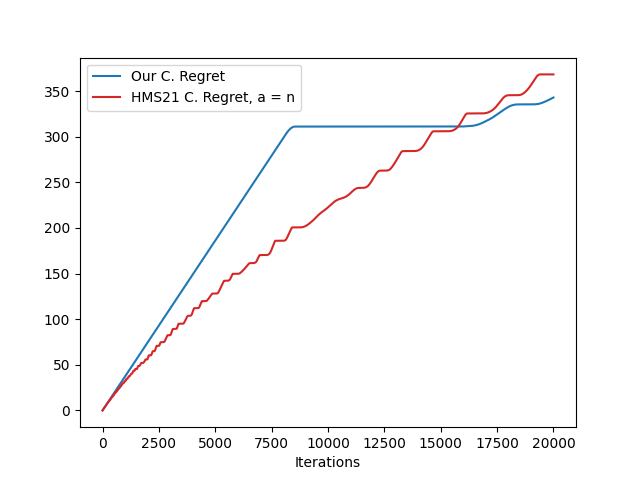} &
        \includegraphics[width=4.3cm]{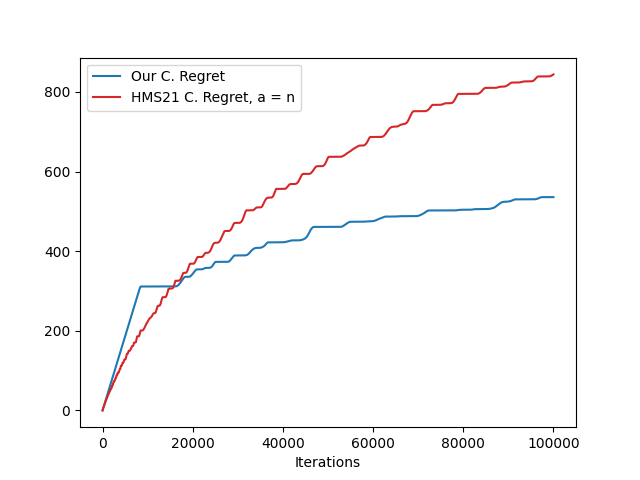} &
        \includegraphics[width=4.3cm]{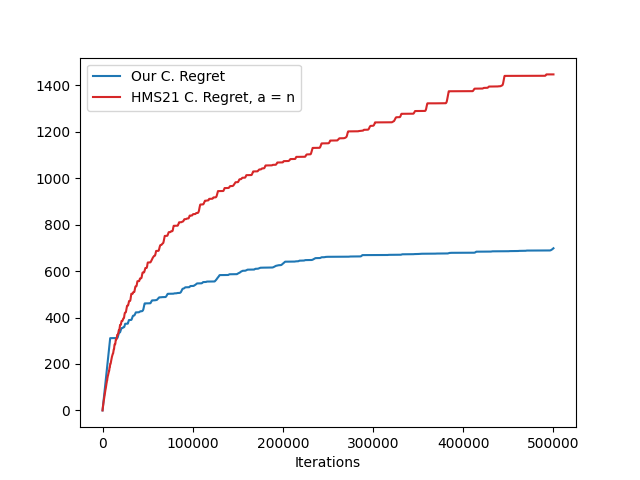} \\
        \includegraphics[width=4.3cm]{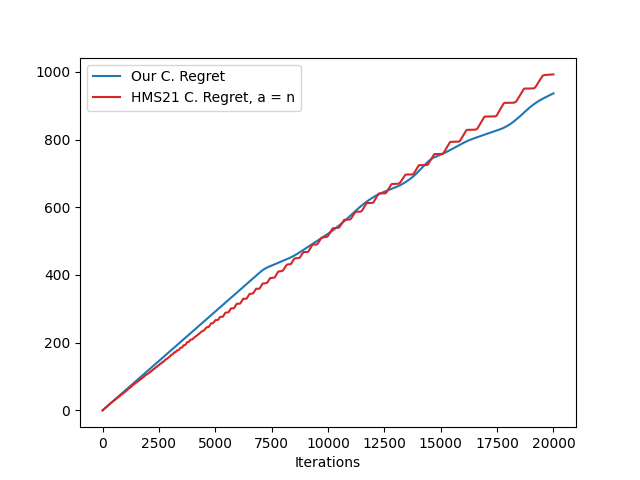} &
        \includegraphics[width=4.3cm]{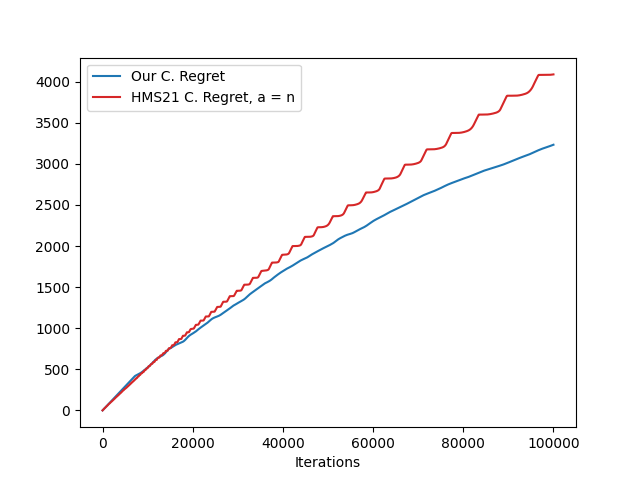} &
        \includegraphics[width=4.3cm]{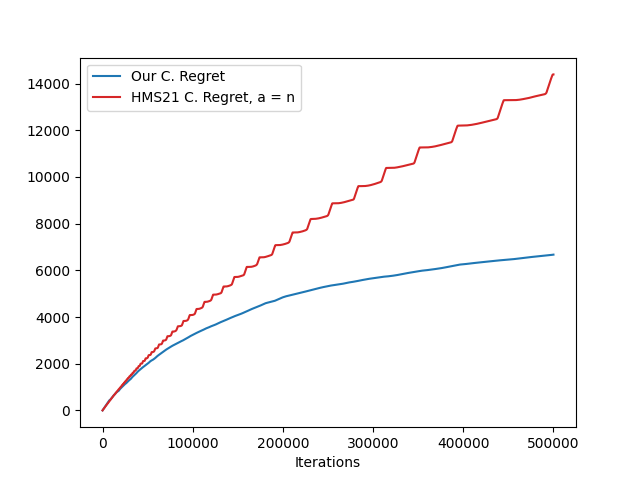} \\
        \includegraphics[width=4.3cm]{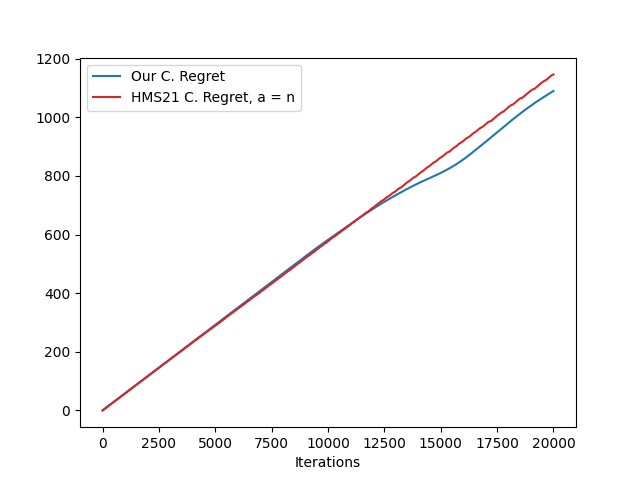} &
        \includegraphics[width=4.3cm]{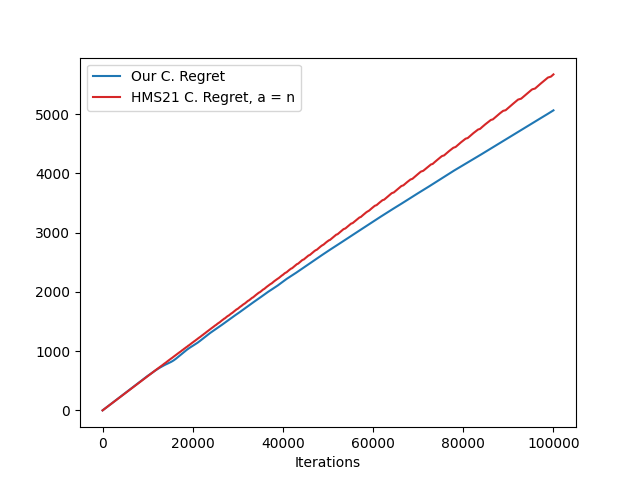} &
        \includegraphics[width=4.3cm]{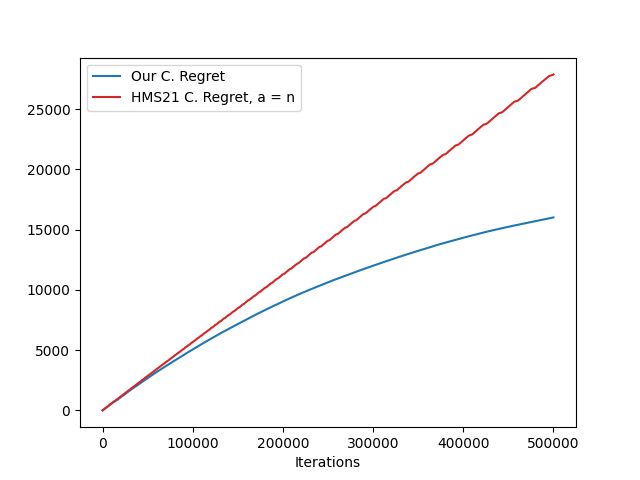} \\
    \end{tabular}
     \caption{Sample Cumulative Regret Graphs for each setting. $(\np, \na)$ is (4,2) in the top row, (20,4) in the center row, and (80,8) on the bottom row. The regret graphs show up to 20,000 iterations in the first column, 100,000 iterations in the second column, and 500,000 iterations in the third.}
    \label{fig:imgs1}
\end{figure*}

\begin{table*}[ht]
\caption{Average cumulative regrets over 10 values of $\mu^*$, with $T = 5 \cdot 10^5$}
\centering
\begin{tabular}{cc|cc|cc|c}
$\np$ & $\na$ & \multicolumn{2}{c|}{Algorithm \ref{alg:fair_UCB}} & \multicolumn{2}{c|}{\cite{hossain2020fair}} & $\F(\po^\star, \mu^\star )$ \\ \hline
   &   & $t = 2 \cdot 10^5 $ & $t = 5\cdot 10^5 $ & $t = 2 \cdot 10^5$ & $t = 5\cdot 10^5 $ &                     \\
4  & 2 & $ 1222 $      & $ 1806 $      & $ 1226 $      & $ 1832 $      & $0.9428 \pm 0.0370$ \\
20 & 4 & $ 8313 $      & $ 15322 $     & $ 10801 $     & $21655 $      & $0.6353 \pm 0.0358$ \\
80 & 8 & $ 4521 $      & $ 9966 $      & $ 5230 $      & $ 12874 $     & $0.0808 \pm 0.0154$
\end{tabular}
\label{tab:exp}
\end{table*}

We test algorithm~\ref{alg:fair_UCB} and the UCB algorithm from \cite{hossain2020fair} with both $\alpha_t = N$ and $\alpha_t = \sqrt{12NK\log(NKt)}$, although we exclude the second $\alpha_t$ from the results because it was outperformed by the other algorithms in every test. We test three pairs of $(\np, \na)$: a small size $(4,2)$, a medium size $(20,4)$, and a large size $(80,8)$. For each pair $(\np, \na)$ we test the algorithms on 5 values of $\mu^\star$ chosen randomly from 1 minus an exponential distribution with mean $0.04$, and rounded up to 0.1 in extreme cases. When an action $a \in [\na]$ is taken, we draw the rewards for each agent $j \in [\np]$ from a Bernoulli distribution with $p = \mu_{j,a}^\star$. For the optimization step, we round  the empirical mean up to $10^{-3}$ since this is a divisor in gradient computations. We also optimized the additive terms in both algorithms using a constant factor found through empirical binary search: the additive term $w_{j,a,t}$ in Algorithm \ref{alg:fair_UCB} is scaled by 0.5, and the additive term in the optimization step of \cite{hossain2020fair} is scaled by 0.8.

In each algorithm we compute $\pi_t$ using the projected gradient ascent. For algorithm \ref{alg:fair_UCB}, we take advantage of the log-concavity of the Nash Social Welfare function and the monotonicity of the logarithm and optimize the log of the objective in the gradient ascent.. 
The UCB algorithm from \cite{hossain2020fair} computes the policy $\pi_t$ as $\arg\max_{\pi} \F(\pi, \hat{\mu}_t) + \alpha_t\sum_{a\in[K]}\left(\pi_a \cdot \sqrt{\frac{\log(NKt)}{N_{a,t}}}\right)$, which is no longer a log-concave objective due to the linear terms
 \cite{hossain2020fair}. We can address this issue to some degree by allowing the gradient ascent to run substantially longer: for algorithm \ref{alg:fair_UCB} the gradient ascent terminates after the objective changes by less than $2\cdot10^{-4}$ after 20 iterations, and for \cite{hossain2020fair} the ascent terminates after changing less than $10^{-6}$ after 30 iterations. By tightening the termination conditions we make it more difficult for the ascent to hang on a suboptimal position in \cite{hossain2020fair} at the tradeoff of longer runtime. Even at this point, we still see instability in the regret graph over time with the non-concave optimization. We include the algorithm none-the-less as it is the only other existing algorithm for fair multi-agent multi-armed bandits with regret on the order of $O(\sqrt{T})$.
 



Table \ref{tab:exp} shows the average regret of the two algorithms after 200,000 iterations and 500,000 iterations over the 5 independent instances for each size, which are the same set of instances for both algorithms. Figure \ref{fig:imgs1} shows cumulative
regret graphs of the algorithms' instances for each setting of $\np$ and $\na$.

In all cases, algorithm \ref{alg:fair_UCB} outperforms the previous best algorithm as $\ts$ becomes large. Once the $\sqrt{T}$ terms in the regret bounds become dominant, the $\sqrt{K}\cdot\sqrt{\min{N,K}}$ factor saved in algorithm \ref{alg:fair_UCB} over the previous algorithm becomes apparent. This is especially true as $N$ and $K$ are larger. For the small case where $(N,K)=(4,2)$, both regret curves take on the shape $\sqrt{T}$, with a growing separation between the two as $T$ increases. However, it is worth noting that \cite{hossain2020fair} does outperform at early iterations. This is due to the difference in the two UCB approaches. Specifically, \cite{hossain2020fair} does not make any changes to $\hat{\pa}$ in the optimization step, so their algorithm is able to begin improving immediately. Algorithm \ref{alg:fair_UCB} adds the confidence bound to $\hat{\pa}$ and then caps all elements in $\hat{\pa}$ at 1, so until one of the terms in the upper confidence bound drops below 1 the algorithm will choose a uniform $\po$, which accounts for the large linear cumulative regret in the first 7500 iterations before the upper confidence bound is non-trivial and algorithm \ref{alg:fair_UCB} begins to outperform \cite{hossain2020fair}. For the small size, it seems that algorithm \ref{alg:fair_UCB} performs better as long as the number of iterations is at least 25,000. 

 In the medium case, the linear section in the regret curve of algorithm \ref{alg:fair_UCB} still does not outperform \cite{hossain2020fair}, but \cite{hossain2020fair} sees a steeper regret curve which narrows the gap for small $\ts$ and creates a larger regret gap for large $\ts$. In the largest case, where $(\np, \na)=(80,8)$, we see that \cite{hossain2020fair} barely ever outperforms algorithm \ref{alg:fair_UCB} even at small $\ts$ when algorithm \ref{alg:fair_UCB}'s upper confidence bound is a matrix of 1s. At high values of $\ts$ we still see that algorithm \ref{alg:fair_UCB} outperform and we observe that the other factors in the $\sqrt{\ts}$ terms begin to play a significant role. Algorithm \ref{alg:fair_UCB}'s regret curve still takes on a $\sqrt{\ts}$ shape although it is much gentler than the smaller cases, which can be at least partially attributed to the fact that the instantaneous regret at each round is bounded by the optimal value of the Nash social welfare, which is the product of expected rewards. The algorithm from \cite{hossain2020fair} still appears almost linear even at 500,000 iterations, as substantially larger values of $\ts$ are required to overcome the factors of $\np$ and $\na$ and see the $\sqrt{\ts}$ shape. There is still a substantial difference between the cumulative regrets of the two algorithms as $\ts$ increases.

Our experiments support the theoretical gains of our results. We see that at small values of $\ts$, under 10,000, our algorithm may be outperformed by \cite{hossain2020fair} for small sizes of $\np$ and $\na$. This effect becomes significantly weaker as $\np$ and $\na$ increase. At sufficiently large $\ts$, on the order of 25,000, algorithm~\ref{alg:fair_UCB} outperforms the previous state-of-the-art with increasing significance and consistency as $\na$, $\np$, and $\ts$ increase. Additionally, as $\ts$ increases the regret curves of algorithm~\ref{alg:fair_UCB} are significantly smoother than those of the previous algorithm due to the efficiency of the optimization step.

\bibliography{ms}
 \onecolumn
 \appendix
\section{Notations}
\begin{table}[htp!]
\begin{tabular}{|c|l}
$\np$        & Number of agents                      \\
$\na$        & Number of arms                        \\
$\ts$        & Time horizon                          \\
$\po_t$      & Arm selection policy at round $t$     \\
$a_t$        & Arm selected at round $t$             \\
$\st_{a,t}$ & Number of times that arm $a$ has been selected up to round $t$                  \\
$\tpa_{j,a}$ & Mean reward of arm $a$ from agent $j$ \\
$\epa_{j,a,t}$                & Empirical mean reward of arm $a$ from agent $j$ up to round $t$                 \\
$w_{j,a,t}$                   & Confidence interval of mean reward of arm $a$ from agent $j$ up to round $t$    \\
$\UCB_{j,a,t}$                & Upper confidence bound of mean reward of arm $a$ from agent $j$ up to round $t$ 
\end{tabular}
\caption{Table of notations.}
\label{table:notations}
\end{table}
 \section{Missing Proofs}
We first  analyze the Lipschitz-continuity of $\F(\po,\mu)$ when the policy $\po$ is fixed.
\begin{lemma}[Lemma~\ref{lemma:lipschitz_main}]
\label{lemma:lipschitz}
Given a policy $\po \in \Delta^k$ and reward matrices $\mu^1, \mu^2 \in [0,1]^{\np \times \na}$, we have  
\[
\left\lvert\F(\po,\mu^1) - \F(\po,\mu^2)\right\rvert \leq \sum_{j \in [\np]} \sum_{a \in [\na]} \po_{a}  \left\lvert \mu^1_{j,a} - \mu^2_{j,a} \right\rvert
\]
\end{lemma}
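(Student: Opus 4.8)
The plan is to reduce the statement to the elementary fact that the coordinate-wise product map is $\ell_1$-Lipschitz on $[0,1]^{\np}$, combined with the observation that each agent's expected reward under a fixed policy is a linear (hence $1$-Lipschitz) function of the reward matrix. First I would introduce, for $i\in\{1,2\}$, the per-agent expected rewards $u^i_j = \sum_{a\in[\na]}\po_a\mu^i_{j,a}$, so that $\F(\po,\mu^i) = \prod_{j\in[\np]} u^i_j$. Because $\po\in\Delta^\na$ and $\mu^i_{j,a}\in[0,1]$, each $u^i_j$ is a convex combination of numbers in $[0,1]$ and therefore lies in $[0,1]$; this boundedness is the only feature of the hypotheses the rest of the argument uses.

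Next I would establish the product-telescoping bound: for any $x_1,\dots,x_\np,y_1,\dots,y_\np\in[0,1]$,
\[
\Bigl\lvert \prod_{j\in[\np]} x_j - \prod_{j\in[\np]} y_j \Bigr\rvert \le \sum_{j\in[\np]} \lvert x_j - y_j\rvert .
\]
This follows by writing the difference as the telescoping sum $\sum_{j\in[\np]} \bigl(\prod_{k<j}x_k\bigr)(x_j-y_j)\bigl(\prod_{k>j}y_k\bigr)$, applying the triangle inequality, and bounding every partial product $\prod_{k<j}x_k$ and $\prod_{k>j}y_k$ by $1$ since all factors lie in $[0,1]$; alternatively one can prove it by a short induction on $\np$.

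Applying this bound with $x_j = u^1_j$ and $y_j = u^2_j$ yields $\lvert \F(\po,\mu^1)-\F(\po,\mu^2)\rvert \le \sum_{j\in[\np]} \lvert u^1_j - u^2_j\rvert$. Finally, for each $j$ the triangle inequality gives $\lvert u^1_j - u^2_j\rvert = \bigl\lvert \sum_{a\in[\na]}\po_a(\mu^1_{j,a}-\mu^2_{j,a})\bigr\rvert \le \sum_{a\in[\na]}\po_a\lvert \mu^1_{j,a}-\mu^2_{j,a}\rvert$, and summing over $j\in[\np]$ gives exactly the claimed inequality.

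There is no substantial obstacle here; the one point that requires care is that the telescoping step genuinely needs $u^i_j\in[0,1]$ so that the partial products are bounded by $1$, which is precisely where the assumptions $\po\in\Delta^\na$ and $\mu^1,\mu^2\in[0,1]^{\np\times\na}$ are used. Everything else is the triangle inequality.
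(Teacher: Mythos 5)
Your proposal is correct and follows essentially the same route as the paper's proof: a telescoping decomposition of the difference of products, bounding the partial products by $1$ because each per-agent expected reward $\sum_{a}\po_a\mu_{j,a}$ lies in $[0,1]$, and then the triangle inequality on each per-agent difference. The only cosmetic difference is that you first abstract the per-agent rewards $u^i_j$ and state the product bound as a standalone fact, whereas the paper telescopes directly in terms of the matrix entries.
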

\begin{proof}
We express the difference as the telescoping sum, 
\begin{align*}
\F(\po,\mu^1) - \F(\po,\mu^2)  & = \prod_{j \in [\np]} \sum_{a \in [\na]} \po_{a}  \mu^1_{j,a}  - \prod_{j \in [\np]} \sum_{a \in [\na]} \po_{a}  \mu^2_{j,a} \\ 
& = \sum_{j \in [\np]} \sum_{a \in [\na]} \po_{a} (\mu^1_{j,a} -  \mu^2_{j,a}) \prod_{j' = 1}^{j-1}\sum_{a' \in [\na]} \po_{a'} \mu^1_{j',a} \prod_{j'' = j+1 }^{\np}  \sum_{a'' \in [\na]} \po_{a} \mu^2_{j'',a} \\ 
& \le  \sum_{j \in [\np]} \sum_{a \in [\na]} \po_{a} \left\lvert \mu^1_{j,a} -  \mu^2_{j,a} \right\rvert
\end{align*}
The inequality is due to the fact that $\sum_{a \in [\na]} \po_{a} \mu_{j,a} \leq 1$. Similarly, we have, 
\begin{align*}
    \F(\po,\mu^2)  - \F(\po,\mu^1) \leq  \sum_{j \in [\np]} \sum_{a \in [\na]} \po_{a} \left\lvert \mu^1_{j,a} -  \mu^2_{j,a} \right\rvert
\end{align*}
\end{proof}

Next, we derive a variant of the Chernoff bound. 
\begin{lemma}
\label{lemma:chernoff}
Suppose $X_1 \ldots X_m$ are independent random variable taking values in $[0,1]$. Let $X =\sum_{i=1}^m X_i , \mu = \E[X]$, with probability  $1-\delta$, 
\[
\left\lvert X - \mu \right\rvert  \leq \sqrt{3 \mu\ln(2/\delta)} + 3  \ln(2/\delta) 
\]
\end{lemma}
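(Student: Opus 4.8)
The plan is to derive this as a routine conversion of the multiplicative Chernoff bounds into an additive deviation bound, followed by a union bound over the two tails. Write $\lambda = \ln(2/\delta)$ and $a = \sqrt{3\mu\lambda} + 3\lambda$, so the claim is exactly $\Pr[|X-\mu| \ge a] \le \delta$. If $\mu = 0$ then every $X_i = 0$ almost surely, $X = 0 = \mu$, and there is nothing to prove; so assume $\mu > 0$ and set $\epsilon = a/\mu > 0$.

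For the upper tail I would invoke a standard multiplicative Chernoff bound of the form $\Pr[X \ge (1+\epsilon)\mu] \le \exp\!\left(-\mu\epsilon^2/(2+\epsilon)\right)$, valid for every $\epsilon > 0$ and independent $[0,1]$-valued summands. Substituting $\epsilon = a/\mu$ rewrites the exponent as $a^2/(2\mu+a)$, so it suffices to verify the elementary inequality $a^2 \ge \lambda(2\mu+a)$; expanding $a = \sqrt{3\mu\lambda}+3\lambda$ shows the left side exceeds the right by $\mu\lambda + 5\lambda\sqrt{3\mu\lambda} + 6\lambda^2 \ge 0$, hence $\Pr[X-\mu \ge a] \le e^{-\lambda} = \delta/2$. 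For the lower tail I would use $\Pr[X \le (1-\epsilon)\mu] \le \exp(-\mu\epsilon^2/2)$ when $\epsilon \in (0,1)$; if instead $\epsilon \ge 1$ then the event $X \le \mu - a \le 0$ is impossible since $X \ge 0$, so that case is vacuous. In the remaining case the exponent equals $a^2/(2\mu)$, and $a^2 \ge 3\mu\lambda \ge 2\mu\lambda$ gives $\Pr[\mu-X \ge a] \le \delta/2$. A union bound over the two events then yields $\Pr[|X-\mu| \ge a] \le \delta$, which is the stated bound.

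There is no real obstacle here; the only points requiring a little care are (i) tracking enough slack that the loose constants $3$ and $3$ in the statement are comfortably met, and (ii) restricting the multiplicative bounds to the regimes $\mu > 0$, $\epsilon < 1$ where they apply, and disposing of the degenerate cases separately. An equally short alternative is to note $\mathrm{Var}(X) = \sum_i \mathrm{Var}(X_i) \le \sum_i \E[X_i^2] \le \sum_i \E[X_i] = \mu$ (using $X_i^2 \le X_i$ on $[0,1]$), apply Bernstein's inequality, solve the resulting quadratic in the deviation, and bound its positive root via $\sqrt{x+y} \le \sqrt{x}+\sqrt{y}$; this produces the same inequality with the same amount of bookkeeping.
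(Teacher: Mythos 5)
Your proof is correct and takes essentially the same route as the paper: both convert a standard multiplicative Chernoff bound into the stated additive form. The paper does this via a case split on whether $\mu \le 3\ln(2/\delta)$, choosing $\alpha$ so that only one of the two terms is needed in each regime, while you substitute the full deviation $\sqrt{3\mu\ln(2/\delta)}+3\ln(2/\delta)$ into Bernstein-form tail bounds and verify a quadratic inequality per tail; the difference is only bookkeeping.
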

\begin{proof}
Recall the  multiplicative Chernoff bound, 
\[
\Pr [\left\lvert X - \mu \right\rvert \ge \alpha \mu ] \leq  2 \exp(- \min( \alpha^2,  \alpha) \mu /3) 
\]
If $\mu \leq 3 \ln(2/\delta)$, set $\alpha = 3 \ln (2/\delta) / \mu $, we have:
\[
\Pr [\left\lvert X - \mu \right\rvert \ge  3  \ln(2/\delta)  ] \leq \exp(- \alpha \mu /3) = \delta
\]
Else, set $\alpha =  \sqrt{3 \ln (2/\delta) / \mu}$,
\[
\Pr [\left\lvert X - \mu \right\rvert \ge \sqrt{3 \mu\ln(2/\delta)}  ] \le  \exp(- \alpha^2 \mu /3) = \delta 
\]
\end{proof}

\begin{lemma}[Lemma~\ref{lemma:LPfair_UCB_param_in_CR_main}]
\label{lemma:LPfair_UCB_param_in_CR}

For any $\delta\in(0,1)$, with probability at least $1-\delta/2$, $\forall$ $t>\na,a\in[\na]$, $j\in[\np]$, $\left\lvert\pa^{\star}_{j, a} - \hat{\pa}_{j, a, t} \right\rvert \leq  \sqrt{\frac{12 (1 - \hat{\pa}_{j, a, t}) \logterm}{\st_{a,t}}}  +  \frac{12  \logterm}{\st_{a,t}} = w_{j,a,t}$. 
\end{lemma}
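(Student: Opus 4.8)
\begin{proofsketch}
The plan is to apply the Chernoff variant of Lemma~\ref{lemma:chernoff} to the \emph{complementary} rewards $1-r$, which produces a confidence width in terms of the true mean $1-\pa^{\star}_{j,a}$, and then bootstrap this into the self-referential width stated in the lemma, which is in terms of the empirical quantity $1-\hat{\pa}_{j,a,t}$.

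Fix an agent $j\in[\np]$, an arm $a\in[\na]$, and a hypothetical count $n\in\{1,\dots,\ts\}$. By the standard ``fixed-$n$'' reduction --- imagine the rewards of arm $a$ for agent $j$ are pre-sampled as an i.i.d.\ stream $s_1,s_2,\dots\sim D_{j,a}$, with the $i$-th pull of $a$ revealing $s_i$ --- it suffices to control the average $\bar\mu:=\tfrac1n\sum_{i=1}^n s_i$, since on the algorithm's trajectory $\hat{\pa}_{j,a,t}=\bar\mu$ whenever $\st_{a,t}=n$. Put $Y_i=1-s_i\in[0,1]$, $Y=\sum_{i=1}^n Y_i$, and $c=\ln(2/\delta')$. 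Then $\E[Y]=n(1-\pa^{\star}_{j,a})$, so Lemma~\ref{lemma:chernoff} gives, with probability at least $1-\delta'$,
\[
\bigl|\,n\bar\mu-n\pa^{\star}_{j,a}\,\bigr| \;=\; \bigl|\,Y-n(1-\pa^{\star}_{j,a})\,\bigr| \;\le\; \sqrt{3n(1-\pa^{\star}_{j,a})\,c}\;+\;3c,
\]
and dividing by $n$ yields $\bigl|\pa^{\star}_{j,a}-\bar\mu\bigr|\le\sqrt{3(1-\pa^{\star}_{j,a})c/n}+3c/n$.

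It remains to replace $1-\pa^{\star}_{j,a}$ by $1-\bar\mu$ on the right. Write $\varepsilon=|\pa^{\star}_{j,a}-\bar\mu|$. Using $1-\pa^{\star}_{j,a}\le(1-\bar\mu)+\varepsilon$ and $\sqrt{x+y}\le\sqrt x+\sqrt y$, one gets
\[
\varepsilon \;\le\; \sqrt{\tfrac{3(1-\bar\mu)c}{n}}\;+\;\sqrt{\tfrac{3c}{n}}\,\sqrt{\varepsilon}\;+\;\tfrac{3c}{n}.
\]
Reading this as $u^2-bu-d\le 0$ with $u=\sqrt\varepsilon$, $b=\sqrt{3c/n}$, and $d=\sqrt{3(1-\bar\mu)c/n}+3c/n$, and using $\sqrt{b^2+4d}\le b+2\sqrt d$, we obtain $\sqrt\varepsilon=u\le b+\sqrt d$, hence $\varepsilon\le (b+\sqrt d)^2\le 2b^2+2d=\sqrt{\tfrac{12(1-\bar\mu)c}{n}}+\tfrac{12c}{n}$. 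Thus, deterministically on the $\delta'$-good event, the claimed bound holds with $\st_{a,t}$ and $\logterm$ replaced by $n$ and $c$.

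Finally, I take a union bound over all $(j,a,n)\in[\np]\times[\na]\times\{1,\dots,\ts\}$ --- that is $\np\na\ts$ events --- choosing $\delta'=\delta/(2\np\na\ts)$, so that $c=\ln(2/\delta')=\ln(4\np\na\ts/\delta)=\logterm$ and the total failure probability is at most $\np\na\ts\cdot\delta'=\delta/2$. For any round $t>\na$ every arm has already been pulled at least once, so $1\le\st_{a,t}\le\ts$ is one of the covered values of $n$ and $\hat{\pa}_{j,a,t}=\bar\mu$ for that $n$; the lemma follows. The main obstacle is the bootstrapping step --- converting the $\pa^{\star}$-dependent width into the self-referential $\hat{\pa}$-dependent width via the quadratic-in-$\sqrt\varepsilon$ inequality --- while the ``fixed-$n$'' reduction is what lets us invoke the independence required by Lemma~\ref{lemma:chernoff} despite the algorithm's pull counts being data-dependent.
\end{proofsketch}
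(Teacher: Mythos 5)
Your proposal is correct and follows essentially the same route as the paper: apply the multiplicative Chernoff bound (Lemma~\ref{lemma:chernoff}) to the complementary rewards to get a width in terms of $1-\pa^{\star}_{j,a}$, convert it into the self-referential width in terms of $1-\hat{\pa}_{j,a,t}$, and union bound over the $\np\na\ts$ agent/arm/count triples with $\delta'=\delta/(2\np\na\ts)$, matching $\logterm$. The only difference is cosmetic: you perform the conversion by solving a quadratic inequality in $\sqrt{\varepsilon}$, whereas the paper manipulates square roots directly (with a small case analysis), and both yield the same constant $12$.
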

\begin{proof}

By lemma~\ref{lemma:chernoff}, we have with probability at least $1-\delta/(2\np\na\ts)$, :
\[
\left\lvert(1 - \hat{\pa}_{j, a, t} ) - (1 - \pa^{\star}_{j, a}) \right\rvert \leq \sqrt{\frac{3 (1-\pa^{\star}_{j, a}) \logterm}{\st_{a,t}}} + \frac{3  \logterm}{\st_{a,t}}. 
\] 
We also have: 
\begin{align*}
     & (1-\pa^{\star}_{j, a}) - (1 - \hat{\pa}_{j, a, t}) \leq \sqrt{\frac{3 (1-\pa^{\star}_{j, a}) \logterm}{\st_{a,t}}} + \frac{3  \logterm}{\st_{a,t}} \\
\end{align*}
If $(1-\pa^{\star}_{j, a})  - \sqrt{\frac{3 (1-\pa^{\star}_{j, a}) \logterm}{\st_{a,t}}} \geq 0 $:
\[
\sqrt { 1-\pa^{\star}_{j, a}  - \sqrt{\frac{3 (1-\pa^{\star}_{j, a}) \logterm}{\st_{a,t}}}}  \geq \sqrt{1-\pa^{\star}_{j, a} } - \sqrt{\frac{3 \logterm}{\st_{a,t}}}
\]
In either cases, we have: 
\begin{align*}
          & \sqrt{1 - \hat{\pa}_{j, a, t}+ \frac{3  \logterm}{\st_{a,t}} }\geq \sqrt{1-\pa^{\star}_{j, a} } - \sqrt{\frac{3 \logterm}{\st_{a,t}}} \\
          \implies & \sqrt{1 - \hat{\pa}_{j, a, t}+ \frac{3  \logterm}{\st_{a,t}} } + \sqrt{\frac{3 \logterm}{\st_{a,t}}} \geq \sqrt{1-\pa^{\star}_{j, a} } \\
          \implies &2 \sqrt{1 - \hat{\pa}_{j, a, t}+ \frac{3  \logterm}{\st_{a,t}} } \cdot \sqrt{\frac{3 \logterm}{\st_{a,t}}}  +  \frac{6  \logterm}{\st_{a,t}}  \geq  \hat{\pa}_{j, a, t} - \pa^{\star}_{j, a}   \\
        \implies & \sqrt{\frac{12 (1 - \hat{\pa}_{j, a, t}) \logterm}{\st_{a,t}}}  +  \frac{12  \logterm}{\st_{a,t}}  \geq  \hat{\pa}_{j, a, t} - \pa^{\star}_{j, a}  \\
\end{align*}
Similarly for the other direction, we have: 
\[
\left\lvert \hat{\pa}_{j, a, t}-\pa^{\star}_{j, a}  \right\rvert \leq  \sqrt{\frac{12 (1 - \hat{\pa}_{j, a, t}) \logterm}{\st_{a,t}}}  +  \frac{12  \logterm}{\st_{a,t}} 
\]
The lemma follows by applying the union bound. 
\end{proof}

\begin{lemma}[Lemma~\ref{lemma:highreward_main}]
\label{lemma:highreward}
Define  $g_{j,t} = \sum_{a\in [\na]} \po_{a, t} \left(1-\UCB_{j,a,t}\right)$, and $S(t,p) = \left\{j \text{ for } j \in [\np] :  g_{j,t} \geq  2^{-p} \right\}.$  If  $\left\lvert S(t,p) \right\rvert < 2^p \cdot 3\ln \ts $ for all $p \geq 0 $, then 
 \begin{align*}
     &\sum_{j \in [\np]} \sum_{a \in [\na]}  \po_{a, t} (1-\hat{\pa}_{j,a,t}) \\  &\leq 1 + 6\ln \ts \log \np + \sum_{j \in [\np]} \sum_{a \in [\na]}   \po_{a,t} w_{j,a,t}.
 \end{align*}
\end{lemma}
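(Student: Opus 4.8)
The plan is to first replace $g_{j,t}$, which is defined through the upper confidence bounds, by the quantity $h_{j,t}:=\sum_{a\in[\na]}\po_{a,t}(1-\hat\pa_{j,a,t})$ we actually want to control, and then to bound $\sum_{j}g_{j,t}$ directly from the hypothesis on the sizes of the level sets $S(t,p)$. For the first reduction, note that $\UCB_{j,a,t}=\min(\hat\pa_{j,a,t}+w_{j,a,t},1)\le\hat\pa_{j,a,t}+w_{j,a,t}$, so $1-\UCB_{j,a,t}\ge 1-\hat\pa_{j,a,t}-w_{j,a,t}$ for every arm $a$; averaging against $\po_{\cdot,t}$ gives $g_{j,t}\ge h_{j,t}-\sum_{a\in[\na]}\po_{a,t}w_{j,a,t}$, and summing over $j\in[\np]$ yields $\sum_{j}\sum_{a}\po_{a,t}(1-\hat\pa_{j,a,t})\le\sum_{j}g_{j,t}+\sum_{j}\sum_{a}\po_{a,t}w_{j,a,t}$. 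It therefore suffices to prove $\sum_{j\in[\np]}g_{j,t}\le 1+6\ln\ts\log\np$.

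For this, observe that each $g_{j,t}$ is a convex combination of the numbers $1-\UCB_{j,a,t}\in[0,1]$, hence $g_{j,t}\in[0,1]$ and $g_{j,t}=\int_0^1\Indicator\{g_{j,t}\ge x\}\,dx$. Writing $N_t(x)=|\{j\in[\np]:g_{j,t}\ge x\}|$, Fubini's theorem gives $\sum_{j}g_{j,t}=\int_0^1 N_t(x)\,dx$. I would split this integral at $x_0=1/\np$. On $(0,1/\np]$ I use the trivial bound $N_t(x)\le\np$, which contributes at most $1$. On $(1/\np,1]$ I decompose into the dyadic blocks $(2^{-p-1},2^{-p}]$ for $p=0,1,\dots,\lceil\log_2\np\rceil-1$, which together cover $(2^{-\lceil\log_2\np\rceil},1]\supseteq(1/\np,1]$. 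On such a block any agent $j$ with $g_{j,t}\ge x$ satisfies $g_{j,t}>2^{-p-1}$, hence $j\in S(t,p+1)$, so $N_t(x)\le|S(t,p+1)|<3\cdot 2^{p+1}\ln\ts$ by hypothesis; since the block has length $2^{-p-1}$, its contribution to the integral is at most $3\ln\ts$. Summing over the $\lceil\log_2\np\rceil$ blocks and using $\lceil\log_2\np\rceil\le 2\log_2\np$ for $\np\ge2$ gives $\int_{1/\np}^1 N_t(x)\,dx\le 6\ln\ts\log\np$. Adding the two contributions gives $\sum_{j}g_{j,t}\le 1+6\ln\ts\log\np$, and combining with the reduction above completes the proof.

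The only genuine subtlety is that the dyadic bound $|S(t,p)|<3\cdot 2^p\ln\ts$ becomes vacuous once $3\cdot2^p\ln\ts\ge\np$, so a naive dyadic sum over all $p\ge0$ would diverge. The fix is exactly to truncate the dyadic decomposition at the scale $x\approx1/\np$ where the bound $N_t(x)\le\np$ becomes binding, and to absorb the remaining small-$x$ tail into the additive constant $1$; this truncation is also what determines the constant $6$ and the base of the logarithm. Beyond that, one only needs to be mildly careful with the measure-zero endpoints of the dyadic blocks and with the degenerate case $\np=1$, both of which are harmless.
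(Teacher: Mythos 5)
Your proof is correct and follows essentially the same route as the paper: you bound $\sum_j g_{j,t}\le 1+6\ln \ts\log \np$ via a dyadic level-set decomposition truncated at scale $1/\np$ (the paper sums over the shells $S(t,p+1)\setminus S(t,p)$; your layer-cake integral of the counting function is the same computation, and indeed appears in the paper's overview), and then convert to $\hat\pa$ using $\UCB_{j,a,t}\le\hat\pa_{j,a,t}+w_{j,a,t}$ exactly as the paper does. If anything, your handling of the floor/ceiling bookkeeping and the $\np=1$ case is slightly more careful than the paper's.
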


\begin{proof}
Define $S'(t,p) = S(t, p+1) \setminus  S(t, p)$. Note that $\sum_{j \in S'(t, p)}  g_{j,t} \leq |S(t,p+1)|\cdot 2^{-p}< 2^{p+1} \cdot 3 \ln \ts \cdot 2^{-p}  $. We have:
\begin{align*}
   \sum_{j \in [\np]} g_{j,t} &\leq \sum_{j \notin S(t,\log \np)} g_{j,t}  + \sum_{j' \in S(t,\log \np)}  g_{j',t}  \\ 
   & \leq  \sum_{j \notin S(t,\log \np)} g_{j,t}  + \sum_{j' \in S(t,\lfloor \log \np \rfloor)}  g_{j',t}  \\
    & \leq \np\cdot\frac{1}{\np} + \sum_{p=0}^{\lfloor  \log \np \rfloor} \sum_{j' \in S'(t, p)}  g_{j',t} \\
    & \leq \np\cdot\frac{1}{\np} + \sum_{p=0}^{\lfloor  \log \np \rfloor} 2^{p+1} \cdot 3 \ln \ts \cdot 2^{-p} \\ & \leq 1 + 6\ln \ts \log \np.   \numberthis \label{exposition0} 
\end{align*}

Thus, 
\begin{align*}
    &\sum_{j \in [\np]} \sum_{a \in [\na]}  \po_{a, t} (1-\hat{\pa}_{j,a,t}) - 1 -6\ln \ts \log \np \\ 
    &  \leq \sum_{j \in [\np]} \sum_{a \in [\na]}  \po_{a, t} (1-\hat{\pa}_{j,a,t}) - \sum_{j \in [\np]} g_{j,t}\\
        &  = \sum_{j \in [\np]} \sum_{a \in [\na]}  \po_{a, t} (1-\hat{\pa}_{j,a,t}) - \po_{a, t} \left(1-\UCB_{j,a,t}\right) \\
        & = \sum_{j \in [\np]} \sum_{a \in [\na]}  \po_{a, t} (\UCB_{j,a,t} -\hat{\pa}_{j,a,t}) \\
    & \leq \sum_{j \in [\np]} \sum_{a \in [\na]}   \po_{a,t} w_{j,a,t}. 
\end{align*}
The first inequality is due to equation~\ref{exposition0}. The last inequality is due to the definition of $\UCB_{j,a,t}$.
\end{proof}

\begin{lemma}[Lemma~\ref{lemma:LPfair_UCB_concentration_width_nonsqr_main}]
\label{lemma:LPfair_UCB_concentration_width_nonsqr}
With probability $1-\delta/2, \sum_{t \in [\ts]} \sum_{a \in [\na]} \po_{a,t} / \st_{a,t} \leq  2 \na\left(\ln\frac{\ts}{\na} + 1  \right)+\ln (2/\delta).$
\end{lemma}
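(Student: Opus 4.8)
The plan is to split $\sum_{t}\sum_a \po_{a,t}/\st_{a,t}$ into a ``realized'' pathwise part admitting a clean harmonic-sum bound, plus a martingale correction, and then to control the martingale by an exponential (Freedman/Bernstein-type) inequality whose quadratic-variation proxy is, self-referentially, the very quantity being bounded. For $t\le\na$ these are the deterministic exploration rounds; for $t>\na$ every arm has already been pulled once (at round $a\le\na$), so $\st_{a,t}\ge 1$ for all $a$, the summands are well defined, and it suffices to bound $S:=\sum_{t>\na}\sum_{a\in[\na]}\po_{a,t}/\st_{a,t}$. Since $\po_t$ and all counts are measurable w.r.t.\ the history $\mathcal F_{t-1}$ and $a_t\sim\po_t$, we have $\E[\,1/\st_{a_t,t}\mid\mathcal F_{t-1}\,]=\sum_a\po_{a,t}/\st_{a,t}=:c_t$, so $X_t:=c_t-1/\st_{a_t,t}$ is a martingale-difference sequence; writing $R:=\sum_{t>\na}1/\st_{a_t,t}$ and $M:=\sum_{t>\na}X_t$ gives the identity $S=M+R$.

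\textbf{Bounding $R$.} Group the rounds $t>\na$ by the arm pulled: if arm $a$ is pulled $n_a$ times in total, its post-exploration pulls see counts $1,2,\dots,n_a-1$, contributing $\sum_{k=1}^{n_a-1}1/k\le 1+\ln n_a$. Summing over $a$ and using concavity of $\ln$ together with $\sum_a n_a=\ts$ yields $R\le\na(\ln(\ts/\na)+1)$; this is the skeleton of the target bound, and the remaining factor of two comes from the martingale step.

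\textbf{Bounding $M$ and closing the loop.} For $\lambda>0$, convexity of $y\mapsto e^{-\lambda y}$ on $[0,1]$ together with $1/\st_{a,t}\in(0,1]$ gives $\E[e^{-\lambda/\st_{a_t,t}}\mid\mathcal F_{t-1}]=\sum_a\po_{a,t}e^{-\lambda/\st_{a,t}}\le 1-(1-e^{-\lambda})c_t\le e^{-(1-e^{-\lambda})c_t}$, hence $\E[e^{\lambda X_t}\mid\mathcal F_{t-1}]\le e^{\psi(\lambda)c_t}$ with $\psi(\lambda):=\lambda-1+e^{-\lambda}\le\lambda^2/2$. The key point is that the ``variance proxy'' here is $\sum_{t>\na}c_t=S$ itself, rather than the $O(\ts)$ one would obtain from a black-box Azuma bound --- this is exactly why $M$ is $\widetilde O(\na)$ and not $\widetilde O(\sqrt{\na\ts})$. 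Thus $\exp\!\big(\lambda\sum_{\tau\le t,\,\tau>\na}X_\tau-\psi(\lambda)\sum_{\tau\le t,\,\tau>\na}c_\tau\big)$ is a supermartingale starting at $1$, and Markov's inequality on its terminal value gives, with probability at least $1-\delta/2$, $\lambda M\le\psi(\lambda)S+\ln(2/\delta)$. Plugging into $S=M+R$ yields $(1-\psi(\lambda)/\lambda)S\le R+\ln(2/\delta)/\lambda$; since $1-\psi(\lambda)/\lambda=(1-e^{-\lambda})/\lambda$ equals $1$ at $\lambda=0$ and stays above $1/2$ for $\lambda$ a small constant, an appropriate choice of $\lambda$ (together with the elementary refinement $1/\st_{a,t}\le 2/(\st_{a,t}+1)$ used to sharpen the leading constant to exactly $2$) gives $S\le 2\na(\ln(\ts/\na)+1)+\ln(2/\delta)$.

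\textbf{Main obstacle.} The hard part is recognizing and exploiting the self-reference: the predictable quadratic variation of $M$ collapses back into $S$, so one must set up the exponential supermartingale accordingly and then solve the resulting implicit inequality in $S$, since a direct Azuma/Hoeffding application would be lossy by roughly a $\sqrt{\ts/\na}$ factor. Secondary care points are the boundary arms pulled only during exploration and pinning down the exact leading constants via the harmonic-sum/concavity estimate and the choice of $\lambda$.
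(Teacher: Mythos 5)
Your core argument is sound and is, at bottom, the paper's own mechanism in different packaging: both proofs combine (i) the pathwise harmonic/concavity bound $\sum_{t}1/\st_{a_t,t}\le\na(\ln(\ts/\na)+1)$ with (ii) an exponential-moment argument that exploits the identity $\E[1/\st_{a_t,t}\mid\mathcal{F}_{t-1}]=\sum_a\po_{a,t}/\st_{a,t}=v_t$ together with $1/\st_{a_t,t}\le1$, so the variance proxy collapses into the quantity being bounded and the $\sqrt{\ts\na}$ loss of a black-box Azuma bound is avoided. The paper exponentiates the predictable sum directly, compares $\E[\exp(z\sum_t v_t)]$ with $\E[\exp((z+z^2g(z)/2)\sum_t 1/\st_{a_t,t})]$ for $g(z)=2(e^z-1-z)/z^2$, inserts the deterministic harmonic bound, and applies Markov at $z=1$; you instead split $S=M+R$ and run a Freedman-type supermartingale on the differences, which is a clean (arguably cleaner) way to organize the same idea, and every individual step you state (the chord bound $e^{-\lambda y}\le 1-(1-e^{-\lambda})y$ on $[0,1]$, the resulting $\E[e^{\lambda X_t}\mid\mathcal{F}_{t-1}]\le e^{\psi(\lambda)v_t}$, the bound on $R$) is correct.

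The one genuine shortfall is the final constant bookkeeping. Your good event gives $(1-e^{-\lambda})S\le\lambda R+\ln(2/\delta)$, i.e. $S\le\frac{\lambda}{1-e^{-\lambda}}R+\frac{1}{1-e^{-\lambda}}\ln(2/\delta)$. Since $\frac{1}{1-e^{-\lambda}}>1$ for every $\lambda>0$, and it decreases in $\lambda$ while $\frac{\lambda}{1-e^{-\lambda}}$ increases, no single choice of $\lambda$ gives coefficient $2$ on $\na(\ln(\ts/\na)+1)$ and coefficient $1$ on $\ln(2/\delta)$ simultaneously: the best trade (at $\lambda\approx1.59$, where $\frac{\lambda}{1-e^{-\lambda}}=2$) yields roughly $2R+1.26\ln(2/\delta)$, and when $\ln(2/\delta)$ is large compared with $\na\ln(\ts/\na)$ the stated inequality is unreachable along this route no matter how $\lambda$ is tuned. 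The refinement $1/\st_{a,t}\le 2/(\st_{a,t}+1)$ only alters the bound on $R$ and does not touch the $\frac{1}{1-e^{-\lambda}}$ factor, so it does not repair this. The paper lands the stated constants because its Markov step is applied to $\exp\bigl(1\cdot\sum_t v_t\bigr)$, so $\ln(2/\delta)$ enters with coefficient exactly $1$ while the harmonic term picks up $1+g(1)/2=e-1\le2$. In short, your argument proves the lemma up to a slightly larger constant on the $\ln(2/\delta)$ term (e.g. $S\le 2\na(\ln(\ts/\na)+1)+\tfrac43\ln(2/\delta)$), which is harmless for Theorem~\ref{theorem:main_main}, but to obtain the inequality exactly as stated you should either follow the paper's Markov-at-$z=1$ route or weaken the claimed constant in your write-up.
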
 
\begin{proof}
Let $v_{t} = \sum_{a \in [\na]} \po_{a,t} / \st_{a,t}$. Consider the $\sigma$-algebra $\f_t = (a_1, a_2 \ldots a_t) $. Note that $v_{a,t}$ is $\f_t$-measurable. Define $g(y)=2\sum_{i=2}^{\infty}\frac{y^{i-2}}{i!}=2\frac{e^{y}-1-y}{y^{2}}$.  We have
\begin{align*}
  & \E_{a \sim \po_t}  [ \exp(z v_{t}) |\f_{t-1} ] = \E_{a \sim \po_t} \left[ \sum_{i=0}^{\infty}\frac{z^{i}v_t^i}{i!} |\f_{t-1} \right] \\
  & =  1+\E_{a \sim \po_t}\left[z v_{t}|\f_{t-1}\right]+\E\left[\sum_{i=2}^{\infty}\frac{\left(z v_{t}\right)^{i}}{i!}|\f_{t-1}\right] \\
  & = 1+\E_{a \sim \po_t}\left[z v_{t}|\f_{t-1}\right]+\E\left[\frac{z^{2}v_{t}^{2}}{2}g(z v_{t})|\f_{t-1}\right] \\
  & \leq 1+\E_{a \sim \po_t}\left[z v_{t}|\f_{t-1}\right]+\E\left[\frac{z^{2}v_{t}}{2}g(z)|\f_{t-1}\right] \\
  & =1+\E_{a \sim \po_t}\left[\left(z+\frac{z^{2}g(z)}{2}\right)v_{t}|\f_{t-1}\right]\\
  & =1+\E_{a \sim \po_t}\left[\left(z+\frac{z^{2}g(z)}{2}\right)\frac{1}{N_{a,t}}|\f_{t-1}\right]\\
  &\le\exp\left(\E_{a \sim \po_t}\left[\left(z+\frac{z^{2}g(z)}{2}\right)\frac{1}{N_{a,t}}|\f_{t-1}\right]\right) \\
  & \le\E_{a \sim \po_t}\left[\exp\left(\left(z+\frac{z^{2}g(z)}{2}\right)\frac{1}{N_{a,t}} \right)|\f_{t-1}\right]
\end{align*}

The first inequality is due to $v_{t} \leq  1$ and  $g(y)$ is monotonically increasing for $y\geq 0$. The last two inequalities follow from the fact that $1+x \le e^x$ and $e^{\E[x]}\le\E[e^{x}]~\forall x$. Thus by induction over $t$,
\begin{align*}
    \E_{a_1,\ldots,a_{\ts}}\left[\exp\left(z\sum_{t\in[\ts]}v_t\right)\right]&\le \E_{_{a_1,\ldots,a_{\ts}}}\left[\exp\left(\left(z+\frac{z^{2}g(z)}{2}\right)\sum_{t}\frac{1}{\st_{a_t,t}}\right)\right]  \\ & \leq \exp\left(\left(z+\frac{z^{2}g(z)}{2}\right) \na\left(\ln\frac{\ts}{\na} + 1  \right)\right)
\end{align*}
By Markov's inequality, 
\begin{align*}
    \Pr\left[\sum_{t\in[\ts]} v_t \geq \frac{\ln \alpha}{z} \right] &= \Pr\left[\exp\left(z\sum_{t\in[\ts]} v_t\right) \geq \alpha \right] \\ & \leq \exp\left(\left(z+\frac{z^{2}g(z)}{2}\right) \na\left(\ln\frac{\ts}{\na} + 1  \right)\right) / \alpha
\end{align*}
The lemma follows by setting $z = 1, \alpha = \exp\left(\left(z+\frac{z^{2}g(z)}{2}\right) \na\left(\ln\frac{\ts}{\na} + 1  \right) + \ln (2/\delta) \right)$. 
\end{proof}

\subsection{The Inefficient Algorithm with Improved Regret}

We begin by reviewing the algorithm. We first pull each arm a total of $432N^2 \ln(6NTK/\delta)$ times to establish $\hat{\mu}$ with high accuracy. Then, we select the policy at each time step in order to optimize the Nash social welfare plus the dot product with an error vector $\eta$. Note that the vector $\eta$ defined in theorem \ref{theorem:alg3_regret_app} is slightly different than in theorem \ref{theorem:alg3_regret}. This error vector matches the proof below, and yields a simpler proof. Additionally, notice that we constrain $\pi_t$ to the region $S_\pi$ in algorithm \ref{alg:fair_UCB_inefficient_app}, which also eases the proof and is a linear constraint, and therefore a convex set, at each iteration.

\begin{algorithm}[ht]
\begin{algorithmic}[1]
\STATE {\bf input: }{$\na,\np,\ts,\delta$}
\FOR{$t=1$ to $\ts$}
\IF{$t \leq 180N^2K \ln(6NTK/\delta)\ln T$} 
\STATE $\po_t \leftarrow$ policy that puts probability 1 on arm $\lceil t/(180N^2 \ln(6NTK/\delta)\ln T)\rceil$ 
\ELSE 
\STATE $\forall j,a, \epa_{j,a,t} = \frac{1}{\st_{a,t}} \sum_{\tau=1}^{t-1} r_{j, a_\tau, \tau}  \Indicator\{a_\tau=a\}$  
\STATE Let $S_\pi = \Delta^\na \cap \{\pi : \sum_{a\in[K]}\pi_a\left(\sum_{j\in[N]}(1-\hat{\mu}_{j,a,t})\right) \le 1 + 2\ln T\}$
\IF{$S_\pi \ne \emptyset$}
\STATE $\po_t \leftarrow \operatorname{argmax}_{\po \in S_\pi} \left(\F(\po, \epa_t) + \po\cdot \eta_{t}\right)  $ 
\ELSE
\STATE $\pi_t \leftarrow$ policy that puts probability 1 on a random arm $a \in [K]$
\ENDIF
\ENDIF
\STATE Sample $a_t$ from $\po_t$
\STATE Observe rewards $\{r_{j,a_t,t}\}_{j \in \np}$
\STATE $\st_{a_t, t+1} \leftarrow \st_{a_t, t} + 1$
\ENDFOR
\end{algorithmic}
\caption{Fair multi-agent UCB algorithm with high start-up cost}
\label{alg:fair_UCB_inefficient_app}
\end{algorithm}

\begin{theorem}[Theorem~\ref{theorem:alg3_regret}]\label{theorem:alg3_regret_app}
 Suppose $\forall j, a, t,$  $\Reward_{j, t,a}\in[0,1]$, $w_{j,a,t} =  \sqrt{\frac{12 (1 - \hat{\pa}_{j, a, t}) \ln(6NKT/\delta)}{\st_{a,t}}}  +  \frac{12  \ln(6NKT/\delta)}{\st_{a,t}} $, and \begin{align*}
  \eta_{a,t} =& \left(4\sqrt{\ln(6KT/\delta)} + 6\sqrt{2}\ln(6NKT/\delta)\sqrt{2+2\ln T}\right)\sqrt{\frac{K}{T}}\sum_{j\in[N]}(1-\hat{\mu}_{j,a,t}) \\
    &+\Big(\left(4\sqrt{\ln(6KT/\delta)} + \sqrt{1+ \ln T} \right)\sqrt{\frac{T}{K}} \\&+ 12\sqrt{2}\sqrt{N}\ln(6NKT/\delta)\sqrt{2+2\ln T }\Big)\frac{1}{N_{a,t}}\\
  &+\frac{1}{20\sqrt{N}/19-1}\sum_{j\in[N]}w_{j,a,t}\\
\end{align*} for any $\delta\in(0,1)$, the  regret of Algorithm \ref{alg:fair_UCB_inefficient_app} is $\TotalRegret_\ts =\widetilde{O}\left( \sqrt{\na\ts} + \np^2\na\right)$ with probability at least $1-\delta$. 
\end{theorem}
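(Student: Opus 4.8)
The plan is to run the three-line chain displayed just before the theorem. First, dispose of the trivial case: if $\F(\po^\star,\mu^\star)\le\ts^{-2}$ then the total regret is at most $\ts\cdot\ts^{-2}\le1$, so assume $\F(\po^\star,\mu^\star)>\ts^{-2}$; writing $h_j=\E_{a\sim\po^\star}(1-\mu^\star_{j,a})$, the bound $\prod_j(1-h_j)\le\exp(-\sum_j h_j)$ then forces $\sum_j h_j<2\ln\ts$. Second, fix a good event $\mathcal E$ of probability $\ge1-\delta$ (a union bound splitting $\delta$) on which: (i) Lemma~\ref{lemma:LPfair_UCB_param_in_CR} holds; (ii) the martingale estimate $\sum_{t\in[\ts]}\sum_a\po_{a,t}/\st_{a,t}=\widetilde O(\na)$ of Lemma~\ref{lemma:LPfair_UCB_concentration_width_nonsqr} holds; and (iii) a new aggregate bound $\lvert\sum_j(\mu^\star_{j,a}-\hat\mu_{j,a,t})\rvert\le W_{a,t}$ holds for all post-warm-up $t$ and all arms $a$, with $W_{a,t}=\widetilde O\!\bigl(\sqrt{(\sum_j(1-\hat\mu_{j,a,t}))/\st_{a,t}}+\np/\st_{a,t}\bigr)$ — obtained from Bernstein's inequality applied to the $\st_{a,t}$ independent samples of the aggregate reward $\sum_j r_{j,a}\in[0,\np]$ (per-sample variance $O(\sum_j(1-\mu^\star_{j,a}))$) and the same conversion of $\mu^\star$ into $\hat\mu$ as in the proof of Lemma~\ref{lemma:LPfair_UCB_param_in_CR}.

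\textbf{Warm-up cost and feasibility of $\po^\star$.} The first $180\np^2\na\ln(6\np\na\ts/\delta)\ln\ts=\widetilde O(\np^2\na)$ rounds contribute at most that much, since each round's instantaneous regret is $\le\F(\po^\star,\mu^\star)\le1$. In every later round each arm has been pulled $\ge180\np^2\ln(6\np\na\ts/\delta)\ln\ts$ times, so substituting into $w_{j,a,t}$ and using $\sum_j\sqrt{1-\hat\mu_{j,a,t}}\le\np$ gives $\sum_j w_{j,a,t}\le1$ for every arm — this is precisely why the warm-up must scale with $\np^2$. With $\sum_j h_j<2\ln\ts$ and Lemma~\ref{lemma:LPfair_UCB_param_in_CR}, this gives $\sum_a\po^\star_a\sum_j(1-\hat\mu_{j,a,t})\le\sum_j h_j+\max_a\sum_j w_{j,a,t}<2\ln\ts+1$, so $\po^\star\in S_\pi$ in every post-warm-up round; hence $S_\pi\ne\emptyset$, the random-arm branch is never used, and $\po_t=\operatorname{argmax}_{\po\in S_\pi}(\F(\po,\hat\mu_t)+\po\cdot\eta_t)$ genuinely.

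\textbf{The per-round inequality (the core).} The target is $\lvert\F(\po,\mu^\star)-\F(\po,\hat\mu_t)\rvert\le\sum_a\po_a\eta_{a,t}$ for every $\po\in S_\pi$, which then applies to $\po^\star$ and $\po_t$. Set $m_j=\E_{a\sim\po}\mu^\star_{j,a}$ and the \emph{signed} error $e_j=\E_{a\sim\po}(\hat\mu_{j,a,t}-\mu^\star_{j,a})$; expanding $\F(\po,\hat\mu_t)=\prod_j(m_j+e_j)$ gives $\F(\po,\mu^\star)-\F(\po,\hat\mu_t)=-\sum_{\emptyset\ne B\subseteq[\np]}\prod_{j\in B}e_j\prod_{j\notin B}m_j$, and using $\prod_{j\notin B}m_j\le\prod_j m_j/\prod_{j\in B}m_j\le1/\prod_{j\in B}m_j$ (keeping the leading factor whenever $B$ omits an agent with small $m_j$) reduces this to $\sum_{m=1}^{\np}\sum_{\lvert B\rvert=m}\prod_{j\in B}\lvert e_j\rvert/m_j$. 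I would handle the $m=1$ term separately from the $m\ge2$ terms: the latter form a geometric-type series that is controlled once $\sum_j\lvert e_j\rvert/m_j=\widetilde O(1)$, which holds because only $\widetilde O(1)$ agents have $\E_{a\sim\po}(1-\hat\mu_{j,a,t})>1/2$ (by the $S_\pi$ constraint) so the rest have $m_j$ bounded below, with those $\widetilde O(1)$ exceptional agents absorbed into the $\np/\st_{a,t}$ and $\sum_j w$ pieces of $\eta$; the $m=1$ term $\sum_j\lvert e_j\rvert\prod_{j'\ne j}m_{j'}$ is bounded by combining the entrywise bound $\lvert e_j\rvert\le\sum_a\po_a w_{j,a,t}$, the aggregate bound $\lvert\sum_j e_j\rvert\le\sum_a\po_a W_{a,t}$, and Cauchy--Schwarz ($\sum_j\sqrt{1-\hat\mu_{j,a,t}}\le\sqrt{\np}\sqrt{\sum_j(1-\hat\mu_{j,a,t})}$), followed by a Young-type split of $\sqrt{(\sum_j(1-\hat\mu))/\st_{a,t}}$ into a $\sqrt{\na/\ts}\sum_j(1-\hat\mu)$ piece and a $\sqrt{\ts/\na}/\st_{a,t}$ piece. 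Matching the result against $\eta_{a,t}$ term by term fixes all three of its pieces, and the Cauchy--Schwarz step is where the $\np^{-1/2}$ coefficient on $\sum_j w_{j,a,t}$ comes from (equivalently, $\np^{-1/2}\sum_j w_{j,a,t}$ is, up to constants, an upper bound on the aggregate width $W_{a,t}$).

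\textbf{Summation and the main obstacle.} With the per-round inequality for $\po^\star$ and $\po_t$, the chain $\F(\po^\star,\mu^\star)\le\F(\po^\star,\hat\mu_t)+\sum_a\po^\star_a\eta_{a,t}\le\F(\po_t,\hat\mu_t)+\sum_a\po_{a,t}\eta_{a,t}\le\F(\po_t,\mu^\star)+2\sum_a\po_{a,t}\eta_{a,t}$ (using optimality of $\po_t$ over $S_\pi\ni\po^\star$) bounds each post-warm-up round's regret by $2\sum_a\po_{a,t}\eta_{a,t}$. Summing over $t$: the $\sqrt{\na/\ts}\sum_j(1-\hat\mu)$ piece gives $\widetilde O(\sqrt{\na/\ts})\cdot\ts=\widetilde O(\sqrt{\na\ts})$ via the $S_\pi$ constraint $\sum_a\po_{a,t}\sum_j(1-\hat\mu_{j,a,t})\le1+2\ln\ts$; the $1/\st_{a,t}$ piece gives $(\sqrt{\ts/\na}+\sqrt{\np})\cdot\widetilde O(\na)=\widetilde O(\sqrt{\na\ts}+\sqrt{\np}\na)$ via Lemma~\ref{lemma:LPfair_UCB_concentration_width_nonsqr}; and the $\np^{-1/2}\sum_j w_{j,a,t}$ piece, after $\sum_j w_{j,a,t}\le\sqrt{12\np\ln(6\np\na\ts/\delta)/\st_{a,t}}\sqrt{\sum_j(1-\hat\mu_{j,a,t})}+12\np\ln(6\np\na\ts/\delta)/\st_{a,t}$ and Cauchy--Schwarz over $a$ (with weights $\po_{a,t}$) then over $t$ with Lemma~\ref{lemma:LPfair_UCB_concentration_width_nonsqr}, gives $\np^{-1/2}\cdot\widetilde O(\sqrt{\np\na\ts}+\np\na)=\widetilde O(\sqrt{\na\ts}+\sqrt{\np}\na)$. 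Adding the $\widetilde O(\np^2\na)$ warm-up cost yields $\TotalRegret_\ts=\widetilde O(\sqrt{\na\ts}+\np^2\na)$. The genuinely delicate step is the per-round inequality with this exact $\eta$: one must establish the aggregate concentration lemma and, above all, control the $1/m_j$ factors for agents with small $m_j$ in the product expansion — this is where the $S_\pi$ constraint, the $m=1$ versus $m\ge2$ split, and the Cauchy--Schwarz trick (which turns the naive $\widetilde O(\sqrt{\np\na\ts})$ into $\widetilde O(\sqrt{\na\ts})$) all have to work together; the rest is routine.
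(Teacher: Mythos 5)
Your overall architecture matches the paper's: dispose of the case $\F(\po^\star,\mu^\star)\le T^{-2}$ trivially, pay $\widetilde O(N^2K)$ for the warm-up, show $\po^\star\in S_\pi$ so the optimization chain $\F(\po^\star,\mu^\star)\le\F(\po_t,\mu^\star)+2\sum_a\po_{a,t}\eta_{a,t}$ applies, prove a per-round bound $|\F(\po,\mu^\star)-\F(\po,\hat\mu_t)|\le\sum_a\po_a\eta_{a,t}$ via the product expansion with a signed aggregate concentration bound for the first-order term, and then sum using the $S_\pi$ constraint and Lemma~\ref{lemma:LPfair_UCB_concentration_width_nonsqr_main}. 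The summation step and the warm-up/feasibility arguments are fine.

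The gap is in the core per-round inequality, exactly the step you flag as delicate. You reduce the expansion to $\sum_{m\ge1}\sum_{|B|=m}\prod_{j\in B}|e_j|/m_j$ by dropping the leading factor $\prod_j m_j$, and then claim $\sum_j|e_j|/m_j=\widetilde O(1)$ because the $S_\pi$ constraint leaves only $\widetilde O(1)$ agents with $\E_{a\sim\po}(1-\hat\mu_{j,a,t})>1/2$, with those agents ``absorbed'' into $\eta$. This does not work: $m_j=\E_{a\sim\po}\mu^\star_{j,a}$ can be arbitrarily small (even zero), the $S_\pi$ constraint is a bound on a \emph{sum} involving $\hat\mu$ and gives no individual lower bound on $m_j$, so $|e_j|/m_j$ is unbounded and there is nothing finite to absorb it into. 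The paper's proof never divides by $m_j$ without a compensating factor: for $m\ge2$ it keeps $\prod_{j'\notin B}a_{j'}\le1$ and bounds the tail by the geometric series $\sum_{m\ge2}\bigl(\sum_j|b_j|\bigr)^m$, using the warm-up ($N_{a,t}\ge 180N^2\ln(\cdot)\ln T$) together with the $S_\pi$ constraint to get $\sum_j|b_j|\le 19/(20\sqrt N)$ — and this is where the $\frac{1}{20\sqrt N/19-1}\sum_j w_{j,a,t}$ piece of $\eta$ actually comes from, not from a Cauchy--Schwarz step as you state; for $m=1$ it writes $\bigl(\prod_j a_j\bigr)\sum_j b_j/a_j$ and splits it so the product cancels the denominator, leaving $\bigl|\sum_j b_j\bigr|+\sum_j|b_j|\,|1-a_j|$, with the signed aggregate handled by the Chernoff bound of Lemma~\ref{lem:error_sum_bound} and the second piece by Cauchy--Schwarz against $\sqrt{\sum_j(1-a_j)}\le\sqrt{2+2\ln T}$. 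Relatedly, your $m=1$ term is written as $\sum_j|e_j|\prod_{j'\ne j}m_{j'}$ with absolute values inside the sum, which already forfeits the signed aggregate bound you intend to use; the split into a signed $\sum_j e_j$ piece plus a $(1-a_j)$-damped piece is essential to avoid the extra $\sqrt N$. So the ingredients you list are the right ones, but the decomposition you propose to combine them with would fail for agents with small true means, and repairing it leads you back to the paper's cancellation argument.
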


We will use several lemmas from earlier to prove this, but we will need to condition one more event in lemma \ref{lem:error_sum_bound}. Therefore, we will need to adapt lemmas \ref{lemma:LPfair_UCB_param_in_CR_main} and \ref{lemma:LPfair_UCB_concentration_width_nonsqr_main} such that they have failure probability $\delta/3$ instead of $\delta/2$.

In order to prove this theorem we introduce the following notation:
\begin{align*}
a_j(\pi) & =\sum_{a\in[K]}\pi_{a}\mu_{j,a}^{*}=1-\sum_{a\in[K]}\pi_{a}(1-\mu_{j,a}^{*})\\
 b_{j,t}(\pi) & =\sum_{a\in[K]}\pi_{a}(\hat{\mu}_{j,a,t}-\mu_{j,a}^{*}) 
\end{align*}
 where $a_j(\pi)$ is the true expected reward of agent $j$ under policy $\pi$ and $b_{j,t}(\pi)$ is the expected error $\hat{\mu}_t-\mu^*$ for agent $j$ under policy $\pi$
.
Note that $\text{NSW}(\pi,\mu^{*})=\text{NSW}'(\pi,\mu^{*})=\prod_{j=1}^{N}a_{j}(\pi)$
and $\text{NSW}(\pi,\hat{\mu}_{t})=\text{NSW}'(\pi,\hat{\mu}_{t})=\prod_{j=1}^{N}(a_{j}(\pi)+b_{j,t}(\pi))$
.
We can expand the product $\text{NSW}'(\pi,\hat{\mu})$ to obtain
\begin{align*}
\text{NSW}'(\pi,\hat{\mu}_t) & =\prod_{j=1}^{N}(a_{j}(\pi)+b_{j,t}(\pi))\\
 & =\left(\prod_{j=1}^{N}a_{j}(\pi)\right)\left(1+\sum_{m=1}^{N}\sum_{B\in\{S\subseteq[N]:|S|=m\}}\prod_{i\in B}\frac{b_{j,t}(\pi)}{a_{j}(\pi)}\right)\\
\implies
\text{\text{NSW}(\ensuremath{\pi,}\ensuremath{\hat{\mu}_t})}-{\text{NSW}(\ensuremath{\pi},\ensuremath{\mu^{*}})} & =\text{\text{NSW}'(\ensuremath{\pi,}\ensuremath{\hat{\mu}_t})}-{\text{NSW}'(\ensuremath{\pi},\ensuremath{\mu^{*}})}\\
 & =\left(\prod_{j=1}^{N}a_{j}(\pi)\right)\left(\sum_{m=1}^{N}\sum_{B\in\{S\subseteq[N]:|S|=m\}}\prod_{j\in B}\frac{b_{j,t}(\pi)}{a_{j}(\pi)}\right).
\end{align*}



We will omit the $t$ from the subscript of $b$ and we will omit the arguments $\pi$ for conciseness in sections of this proof.

The proof structure will be ordered as follows. First, we prove a Chernoff bound in lemma \ref{lem:error_sum_bound} and 
bounds on $\sum_{a\in[K]}\pi_a\left(\sum_{j\in[N]}(1-\mu_{j,a})\right)$ for useful pairs of $\pi$ and $\mu$ in lemmas \ref{lemma:highreward_star}, \ref{lemma:highreward_muhat_app}, and \ref{lemma:highreward_pit_mustar}
to bound terms of the form $\sum_{j\in[N]}b_{j,t}(\pi)$. Then, we bound the terms with $m \ge 2$ using lemma \ref{lem:high_order_bound} and we bound the terms with $m = 1$ in lemmas \ref{lem:sum_bj_bound} and lemma \ref{lem:sqrt_sumsquared_bound} combined in lemma \ref{lem:low_order_bound}. With all the terms bounded, we can obtain a bound on the entire error summed over $t \in [T]$ in order to prove theorem \ref{theorem:alg3_regret_app}.

\begin{lemma}
\label{lem:error_sum_bound}
With probability at least $1-\delta/3$, for all $a\in[K]$ and all $t\in(180N^2K\ln(6NTK/\delta)\ln(T),T]$ we have \[\left|\sum_{j\in[N]}\left(\hat{\mu}_{j,a,t}-\mu^*_{j,a}\right)\right| \le \sqrt{\frac{4\ln(6KT/\delta)\sum_{j\in[N]}(1-\mu_{j,a}^{*})}{N_{a,t}}}.\]
\end{lemma}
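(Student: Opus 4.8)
The plan is to reduce the claim to a single application of the Chernoff bound of Lemma~\ref{lemma:chernoff} for a fixed arm and a fixed number of pulls, followed by a union bound. Fix an arm $a\in[K]$ and, for each agent $j$, regard the rewards received on successive pulls of $a$ as an i.i.d.\ sequence $r_{j,a}^{(1)},r_{j,a}^{(2)},\dots\in[0,1]$ with mean $\mu^*_{j,a}$; these sequences are mutually independent across the agents $j$ (this independence is what prevents a spurious factor of $\sqrt{N}$ below). For a fixed count $n$, write $\hat\mu_{j,a}^{(n)}=\tfrac1n\sum_{i=1}^n r_{j,a}^{(i)}$ and use the identity
\[
n\textstyle\sum_{j\in[N]}\bigl(1-\hat\mu_{j,a}^{(n)}\bigr)=\sum_{i=1}^{n}\sum_{j\in[N]}\bigl(1-r_{j,a}^{(i)}\bigr),
\]
so that $n\sum_j(1-\hat\mu_{j,a}^{(n)})$ is a sum of $nN$ independent $[0,1]$-valued random variables with total expectation $n\sigma^2$, where $\sigma^2:=\sum_{j\in[N]}(1-\mu^*_{j,a})$ is the non-random quantity appearing on the right-hand side of the lemma.

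Next I would apply Lemma~\ref{lemma:chernoff} to this $nN$-fold sum with failure probability $\delta':=\delta/(3KT)$, so that $\ln(2/\delta')=\ln(6KT/\delta)$: with probability at least $1-\delta'$,
\[
\Bigl|\,n\textstyle\sum_j(1-\hat\mu_{j,a}^{(n)})-n\sigma^2\Bigr|\le\sqrt{3\,n\sigma^2\ln(6KT/\delta)}+3\ln(6KT/\delta).
\]
Since $\sum_j(1-\hat\mu_{j,a}^{(n)})-\sigma^2=-\sum_j(\hat\mu_{j,a}^{(n)}-\mu^*_{j,a})$, dividing by $n$ gives $\bigl|\sum_j(\hat\mu_{j,a}^{(n)}-\mu^*_{j,a})\bigr|\le\sqrt{3\sigma^2\ln(6KT/\delta)/n}+3\ln(6KT/\delta)/n$. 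Now I would take a union bound over all $a\in[K]$ and all $n\in\{1,\dots,T\}$ (at most $KT$ events), giving total failure probability at most $\delta/3$; because the realized count $N_{a,t}$ always lies in $[1,T]$, the bound then holds simultaneously for every $a$ and every $t$ in the stated range with $n=N_{a,t}$. The union over all possible pull counts is precisely what lets us deal with $N_{a,t}$ being a random (algorithm-dependent) quantity.

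It then remains to collapse the two-term bound $\sqrt{3\sigma^2\ln(6KT/\delta)/N_{a,t}}+3\ln(6KT/\delta)/N_{a,t}$ into the single term $\sqrt{4\ln(6KT/\delta)\sigma^2/N_{a,t}}$, and this is the only delicate point. When $\sigma^2$ is not too small relative to $\ln(6KT/\delta)/N_{a,t}$, the linear term is dominated by a small fraction of $\sqrt{\ln(6KT/\delta)\sigma^2/N_{a,t}}$ and the two terms merge into the claimed one; here the hypothesis $t>180N^2K\ln(6NTK/\delta)\ln T$ is used, since it forces $N_{a,t}\ge 180N^2\ln(6NTK/\delta)\ln T$ and so makes $\ln(6KT/\delta)/N_{a,t}$ negligible. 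In the complementary regime of tiny $\sigma^2$ one handles the two directions separately: the ``$\hat\mu$ overshoots $\mu^*$'' direction is controlled deterministically by $\sum_j(\hat\mu_{j,a,t}-\mu^*_{j,a})=\sigma^2-\sum_j(1-\hat\mu_{j,a,t})\le\sigma^2\le\sqrt{4\ln(6KT/\delta)\sigma^2/N_{a,t}}$, while the other direction is controlled by the small-mean form of the Chernoff bound above. I expect the main obstacle to be purely this constant/bookkeeping step of absorbing the lower-order term; the probabilistic content is entirely contained in Lemma~\ref{lemma:chernoff} applied to the $nN$-fold sum, and everything else is the routine reduction together with the union bound over $(a,n)$.
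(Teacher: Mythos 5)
Your overall skeleton is the same as the paper's: aggregate over the $N$ agents and the successive pulls of a fixed arm $a$, apply a concentration bound at confidence $\delta/(3KT)$, and union bound over the $KT$ pairs (arm, pull count) to handle the randomness of $N_{a,t}$. The difference is the concentration tool, and that is where your argument has a genuine gap. Writing $\sigma^2=\sum_{j\in[N]}(1-\mu^*_{j,a})$, $L=\ln(6KT/\delta)$ and $n=N_{a,t}$, Lemma~\ref{lemma:chernoff} applied to your $nN$-fold sum gives a deviation of $\sqrt{3L\sigma^2/n}+3L/n$, and the entire difficulty is the additive $3L/n$, which the target $\sqrt{4L\sigma^2/n}$ does not contain. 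Absorbing it requires $3L/n\le(2-\sqrt{3})\sqrt{L\sigma^2/n}$, i.e.\ $\sigma^2\ge\frac{9}{(2-\sqrt{3})^2}\,L/n\approx 125\,L/n$. The warm-up phase only guarantees $n\ge 180N^2\ln(6NTK/\delta)\ln T$; it says nothing about $\sigma^2$, which can be arbitrarily small, so ``$L/N_{a,t}$ is negligible'' is not a valid comparison --- the square-root term shrinks at the same rate, and the relevant ratio is governed by $\sigma^2$, which you do not control. Your fallback for tiny $\sigma^2$ covers only the overshoot direction, and only when $\sigma^2\le 4L/n$, leaving the window $4L/n<\sigma^2\lesssim 125L/n$ open; worse, the claim that the undershoot direction ``is controlled by the small-mean form of the Chernoff bound'' is exactly what fails: that form yields $3L/n$, which exceeds $\sqrt{4L\sigma^2/n}$ precisely when $\sigma^2<\tfrac{9}{4}L/n$. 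This is not a constants issue you can tune away within your approach: the additive term in Lemma~\ref{lemma:chernoff} reflects genuine Poisson-type upper-tail behaviour of $\sum_{i,j}(1-r^{(i)}_{j,a})$ when its mean $n\sigma^2$ is below $L$, so no rearrangement of that lemma alone produces the purely sub-Gaussian bound claimed in Lemma~\ref{lem:error_sum_bound}.

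The paper's proof avoids the issue by using a different inequality: it applies a two-sided variance-based Chernoff bound (citing Theorem 3.1 of Chung--Lu) to the centered per-pull sums $X_{t_0}=\sum_{j\in[N]}(r^{(t_0)}_{j,a}-\mu^*_{j,a})$, each with mean zero and variance at most $\sigma^2$, obtaining $\Pr\bigl[\,\bigl|\sum_{t_0\le n}X_{t_0}\bigr|\ge k\sigma\bigr]\le 2\exp(-k^2/(4n))$ and hence, with $k=\sqrt{4nL}$, the clean bound $\sqrt{4Ln\sigma^2}$ with no lower-order term to absorb; the union bound over $a$ and the pull count is then the same as yours. To repair your route you would either have to switch to such a variance-sensitive (Bernstein/Freedman-type) inequality, or weaken the statement by an additive $O(L/N_{a,t})$ term and propagate that change through the proof of Theorem~\ref{theorem:alg3_regret_app}.
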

\begin{proof}
Fix $t$ to be a value between $1$ and $T$ and $a\in [K]$.
Let $v_{j,a,t}$ be the value we obtain for agent $j$ when we pull arm $a$ for the $t$-th time. Let $x_{j,a,t}=v_{j,a,t}-\mu^*_{j,a}$. Observe that $x_{j,a,t}$ is in the range $[-1,1]$ with mean $0$ and variance $\mu^*_{j,a}(1-\mu^*_{j,a})\le 1-\mu^*_{j,a}$. Thus, $X_{t_0}=\sum_{j\in[N]} x_{j,a,t_0}$ has mean $0$ and variance at most $\sigma^2 = \sum_{j\in [N]} (1-\mu^*_{j,a})$. By the Chernoff bound (see e.g. Theorem 3.1 in \cite{chung2006concentration}),

\[
\Pr[|\sum_{t_0=1}^t X_{t_0}| \ge k\sigma] \le 2\exp(-k^2/(4t))
\]

By choosing $k = \sqrt{4t\ln(6KT/\delta)}$, we have that with probability $1-\delta/(3KT)$,

\begin{align*}
\left|\sum_{t_0=1}^t X_{t_0}\right| &\le \sqrt{4t\ln(6KT/\delta)\sum_{j\in [N]} (1-\mu^*_{j,a})}\\
\left|\frac{1}{t}\sum_{t_0=1}^t X_{t_0}\right| &\le \sqrt{\frac{4\ln(6KT/\delta)\sum_{j\in [N]} (1-\mu^*_{j,a})}{t}}
\end{align*}

By union bound over all choices of $t\in [T]$ and $a\in[K]$, the above inequality holds for all $t$ and $a$ simultaneously with probability $1-\delta/3$. Note that for all time horizon $t\in [T]$, because $N_{a,t} \in [T]$ and $\hat{\mu}_{j,a,t}$ is the average of the $N_{a,t}$ values we obtained for agent $j$ and arm $a$, it follows that with probability $1-\delta/3$, \[\left|\sum_{j\in[N]}\left(\hat{\mu}_{j,a,t}-\mu^*_{j,a}\right)\right| \le \sqrt{\frac{4\ln(6KT/\delta)\sum_{j\in[N]}(1-\mu_{j,a}^{*})}{N_{a,t}}}~\forall a,t.\]
\end{proof}

Before we begin to bound the terms in our expansion of the regret, we introduce a few lemmas in the vein of lemma \ref{lemma:highreward_main} in order to bound $\sum_{a\in[K]}\pi_a\left(\sum_{j\in[N]}(1-\mu_{j,a})\right)$ for useful pairs of $\pi$ and $\mu$. We show that these hold when conditioning on an event whose failure gives us a trivial regret bound in lemma \ref{lemma:trivial_regret_app}.

\begin{lemma}
    Given $N$ values $0 \le a_1,...,a_N \le 1$, if  $\prod_{j\in[N]}a_j \ge T^{-2}$ then $\sum_{j\in[N]}(1-a_j) \le 2 \ln T$ for all $T \ge 1$.
    \label{lemma:prod_sum_exchange}
\end{lemma}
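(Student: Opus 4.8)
The plan is to reduce this to the elementary inequality $\ln x \le x - 1$, valid for all $x > 0$ (equivalently $1 - x \le -\ln x$), applied termwise. First I would observe that the hypothesis $\prod_{j\in[N]} a_j \ge T^{-2}$ is only meaningful when $T < \infty$, in which case $T^{-2} > 0$, so the product is strictly positive and hence every $a_j$ must be strictly positive. This lets us take logarithms freely and avoids any edge case with a vanishing factor.

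Next, I would apply $1 - a_j \le -\ln a_j$ to each $j \in [N]$ and sum, giving
\[
\sum_{j\in[N]} (1 - a_j) \le -\sum_{j\in[N]} \ln a_j = -\ln\!\left(\prod_{j\in[N]} a_j\right).
\]
Then I would invoke the hypothesis together with monotonicity of $-\ln(\cdot)$: since $\prod_{j\in[N]} a_j \ge T^{-2}$, we get $-\ln\!\left(\prod_{j\in[N]} a_j\right) \le -\ln(T^{-2}) = 2\ln T$, which is exactly the claimed bound, and it holds for every $T \ge 1$.

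There is essentially no obstacle here; the only thing to be a little careful about is justifying positivity of the $a_j$ from the hypothesis (so the logarithm is defined) and stating clearly that the termwise inequality $1 - x \le -\ln x$ holds on all of $(0, 1]$ — indeed on all of $(0,\infty)$ — which follows from concavity of $\ln$ or a one-line derivative check. Everything else is a direct substitution.
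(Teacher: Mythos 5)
Your proof is correct and is essentially the same argument as the paper's: the paper applies $1+x\le e^x$ termwise to get $a_j\le e^{a_j-1}$, multiplies over $j$, and takes logarithms, which is exactly your termwise use of $1-a_j\le-\ln a_j$ followed by summing and invoking the hypothesis $\prod_j a_j\ge T^{-2}$. Your explicit remark on positivity of the $a_j$ is a minor added care the paper leaves implicit; nothing else differs.
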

\begin{proof}
    By the inequality $1 + x \le e^x$, we have
    \begin{align*}
        1 + (a_j - 1) &\le e^{a_j - 1}\\
        \prod_{j\in[N]}a_j &\le e^{\sum_j(a_j - 1)}\\
        \implies \ln\left( \prod_{j\in[N]}a_j \right) &\le \sum_j(a_j - 1)\\
        \implies -2\ln(T) &\le \sum_{j\in[N]}(a_j - 1)\\
        \implies 2\ln(T) &\ge \sum_{j\in[N]}(1-a_j)
    \end{align*}
\end{proof}

For any $\pi$, $\mu$ in our problem, then, we have that $$\F(\pi,\mu) = \prod_{j\in[N]}\left(\sum_{a\in[K]}\pi_a\mu_{j,a}\right) \le T^{-2}$$ or $$\sum_{j\in[N]}\left(1-\sum_{a\in[K]}\pi_a\mu_{j,a}\right) = \sum_{j\in[N]}\left(\sum_{a\in[K]}\pi_a(1-\mu_{j,a})\right) \le 2\ln T$$ which yields the immediate corollary:

\begin{lemma}
If $\F(\pi^*, \mu^*) \ge T^{-2}$ then $\sum_{j\in[N]}\left(\sum_{a\in[K]}\pi^*_a(1-\mu^*_{j,a})\right) \le 2\ln T$.
    \label{lemma:highreward_star}
\end{lemma}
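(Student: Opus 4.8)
The plan is to apply Lemma~\ref{lemma:prod_sum_exchange} directly, taking the $N$ values in that lemma to be the true expected rewards $a_j := \sum_{a\in[K]}\pi^*_a\mu^*_{j,a}$ for $j\in[N]$. First I would verify the hypotheses: since each $\mu^*_{j,a}\in[0,1]$ and $\pi^*\in\Delta^K$ is a probability vector, each $a_j$ is a convex combination of numbers in $[0,1]$, hence $0\le a_j\le 1$. Moreover, by definition $\prod_{j\in[N]}a_j = \prod_{j\in[N]}\bigl(\sum_{a\in[K]}\pi^*_a\mu^*_{j,a}\bigr) = \F(\pi^*,\mu^*)$, which is assumed to be at least $T^{-2}$, so the hypothesis $\prod_{j\in[N]}a_j\ge T^{-2}$ of Lemma~\ref{lemma:prod_sum_exchange} is met.

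Lemma~\ref{lemma:prod_sum_exchange} then yields $\sum_{j\in[N]}(1-a_j)\le 2\ln T$. The only remaining step is the algebraic rewrite: because $\sum_{a\in[K]}\pi^*_a = 1$, we have $1-a_j = 1 - \sum_{a\in[K]}\pi^*_a\mu^*_{j,a} = \sum_{a\in[K]}\pi^*_a(1-\mu^*_{j,a})$. Summing over $j$ gives $\sum_{j\in[N]}\sum_{a\in[K]}\pi^*_a(1-\mu^*_{j,a}) = \sum_{j\in[N]}(1-a_j) \le 2\ln T$, which is exactly the claimed inequality.

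There is essentially no obstacle here: all the content sits in Lemma~\ref{lemma:prod_sum_exchange}, which is itself a one-line consequence of $1+x\le e^x$ followed by taking logarithms. The only point requiring minor care is the identity $1-\sum_{a}\pi^*_a\mu^*_{j,a}=\sum_{a}\pi^*_a(1-\mu^*_{j,a})$, which uses that $\pi^*$ sums to $1$ (part of $\pi^*\in\Delta^K$). This corollary is then used downstream — together with the analogous high-reward bounds for $\hat{\mu}_t$ and for the played policy $\pi_t$ — to control the error terms $\sum_{j\in[N]}b_{j,t}(\pi)$ appearing in the expansion of $\F(\pi,\hat{\mu}_t)-\F(\pi,\mu^*)$, on the event that the optimal Nash welfare $\F(\pi^*,\mu^*)$ is not negligibly small; the complementary case $\F(\pi^*,\mu^*)<T^{-2}$ already forces a trivial regret bound.
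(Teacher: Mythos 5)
Your proposal is correct and matches the paper's own argument: the paper likewise derives this lemma as an immediate corollary of Lemma~\ref{lemma:prod_sum_exchange} applied to the expected rewards $a_j=\sum_{a\in[K]}\pi^*_a\mu^*_{j,a}$, using that their product equals $\F(\pi^*,\mu^*)$ and that $1-\sum_a\pi^*_a\mu^*_{j,a}=\sum_a\pi^*_a(1-\mu^*_{j,a})$ since $\pi^*\in\Delta^K$. Nothing is missing.
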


Additionally, since we have that $\F(\pi_t, \mu^*) \ge 0$, we have the following trivial regret bound:
\begin{lemma}
If $\sum_{j\in[N]}\left(\sum_{a\in[K]}\pi^*_a(1-\mu^*_{j,a})\right) \ge 2\ln T$, then we obtain the total regret bound \[R_T = \sum_{t\in[T]}\left(\F(\pi^*, \mu^*) - \F(\pi_t, \mu^*)\right) \le \sum_{t\in[T]}\F(\pi^*, \mu^*) \le \frac{1}{T}.\]
\label{lemma:trivial_regret_app}
\end{lemma}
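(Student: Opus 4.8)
The plan is to show that the hypothesis forces $\F(\pi^*,\mu^*)$ itself to be tiny — at most $T^{-2}$ — so that even summing the optimal value over all $T$ rounds, after discarding the subtracted (nonnegative) term $\F(\pi_t,\mu^*)$, leaves a total of at most $1/T$. This is essentially the contrapositive of Lemma~\ref{lemma:prod_sum_exchange} combined with the trivial bound $\F(\pi_t,\mu^*)\ge 0$.

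First I would rewrite the hypothesis in the per-agent notation $a_j(\pi^*)=\sum_{a\in[K]}\pi^*_a\mu^*_{j,a}=1-\sum_{a\in[K]}\pi^*_a(1-\mu^*_{j,a})$, so that the assumption $\sum_{j\in[N]}\sum_{a\in[K]}\pi^*_a(1-\mu^*_{j,a})\ge 2\ln T$ becomes $\sum_{j\in[N]}\bigl(1-a_j(\pi^*)\bigr)\ge 2\ln T$. Each $a_j(\pi^*)$ lies in $[0,1]$, being a convex combination of the means $\mu^*_{j,a}\in[0,1]$, so Lemma~\ref{lemma:prod_sum_exchange} applies. In fact I only need the inequality $\ln\bigl(\prod_{j}a_j(\pi^*)\bigr)\le\sum_{j}\bigl(a_j(\pi^*)-1\bigr)$ underlying that lemma: combined with $\sum_{j}\bigl(1-a_j(\pi^*)\bigr)\ge 2\ln T$ it gives $\ln \F(\pi^*,\mu^*)\le -2\ln T$, i.e. $\F(\pi^*,\mu^*)=\prod_{j}a_j(\pi^*)\le T^{-2}$.

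Next I would observe that for every round $t$ we have $\F(\pi_t,\mu^*)=\prod_{j}\bigl(\sum_{a}\pi_{a,t}\mu^*_{j,a}\bigr)\ge 0$, again because each factor is a nonnegative convex combination of nonnegative means. Hence the per-round regret is bounded by the optimal value alone, and summing over $t\in[T]$ yields $R_T=\sum_{t}\bigl(\F(\pi^*,\mu^*)-\F(\pi_t,\mu^*)\bigr)\le\sum_{t}\F(\pi^*,\mu^*)=T\,\F(\pi^*,\mu^*)\le T\cdot T^{-2}=1/T$, which is the claim.

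There is no real obstacle here: the result is an immediate corollary of Lemma~\ref{lemma:prod_sum_exchange} and the nonnegativity of the Nash product. The only point requiring a moment's care is the boundary between strict and non-strict inequality — the hypothesis reads $\ge 2\ln T$ rather than $>$ — but since the logarithmic inequality $\ln \F(\pi^*,\mu^*)\le -\sum_{j}\bigl(1-a_j(\pi^*)\bigr)$ is itself non-strict, the assumption $\sum_{j}\bigl(1-a_j(\pi^*)\bigr)\ge 2\ln T$ already yields $\F(\pi^*,\mu^*)\le T^{-2}$ with equality permitted, so no edge case is lost.
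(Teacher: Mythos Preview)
Your proposal is correct and matches the paper's approach exactly: the paper treats this lemma as an immediate consequence of the contrapositive of Lemma~\ref{lemma:prod_sum_exchange} (giving $\F(\pi^*,\mu^*)\le T^{-2}$) together with the nonnegativity $\F(\pi_t,\mu^*)\ge 0$, which is precisely what you do. Your remark on the non-strict inequality is a nice clarification that the paper leaves implicit.
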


Since we achieve a trivial regret bound in the other case, we condition on the event that $\sum_{j\in[N]}\left(\sum_{a\in[K]}\pi^*_a(1-\mu^*_{j,a})\right) \le 2\ln T$ going forward.

Next, we want to achieve a similar bound with $\hat{\mu}_t$ in place of $\mu^*$:

\begin{lemma}
 If $\sum_{j\in[N]}\left(\sum_{a\in[K]}\pi^*_a(1-\mu^*_{j,a})\right) \le 2\ln T$, then
    $$\sum_{j\in[N]}\sum_{a\in[K]}\pi^*_a(1-\hat{\mu}_{j,a,t})\le 1 + 2 \ln T $$ for all $t \ge 180N^2\ln(6NKT/\delta)\ln T$.
    \label{lemma:highreward_muhat_app1}
\end{lemma}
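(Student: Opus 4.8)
The plan is to compare the empirical quantity with its true counterpart and absorb the discrepancy into the slack of $1$. Writing
\[
\sum_{j\in[N]}\sum_{a\in[K]}\pi^*_a(1-\hat{\mu}_{j,a,t})
= \sum_{j\in[N]}\sum_{a\in[K]}\pi^*_a(1-\mu^*_{j,a})
+ \sum_{a\in[K]}\pi^*_a\left(\sum_{j\in[N]}(\mu^*_{j,a}-\hat{\mu}_{j,a,t})\right),
\]
the first term is at most $2\ln T$ by the hypothesis of the lemma, so it suffices to show that the second term is at most $1$.

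To bound the second term I would first pass to absolute values, using $\pi^*_a\ge 0$, and invoke Lemma~\ref{lem:error_sum_bound}, which gives, for every arm $a$ at any $t$ past the burn-in phase of Algorithm~\ref{alg:fair_UCB_inefficient_app},
\[
\left|\sum_{j\in[N]}(\hat{\mu}_{j,a,t}-\mu^*_{j,a})\right| \le \sqrt{\frac{4\ln(6KT/\delta)\sum_{j\in[N]}(1-\mu^*_{j,a})}{N_{a,t}}}.
\]
The key observation is that the burn-in pulls each arm exactly $180N^2\ln(6NKT/\delta)\ln T$ times, so $N_{a,t}\ge 180N^2\ln(6NKT/\delta)\ln T$ for all $a$ once we are past it; since $\ln(6KT/\delta)\le\ln(6NKT/\delta)$ and $\sqrt{4/180}<1/6$, this forces $\sqrt{4\ln(6KT/\delta)/N_{a,t}}\le \frac{1}{6N\sqrt{\ln T}}$. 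Pulling this factor out of the sum over $a$, the second term is at most
\[
\frac{1}{6N\sqrt{\ln T}}\sum_{a\in[K]}\pi^*_a\sqrt{\sum_{j\in[N]}(1-\mu^*_{j,a})}
\le \frac{1}{6N\sqrt{\ln T}}\sqrt{\sum_{a\in[K]}\pi^*_a\sum_{j\in[N]}(1-\mu^*_{j,a})},
\]
where the last step is Jensen's inequality (concavity of $\sqrt{\cdot}$) with $\sum_a\pi^*_a=1$. The quantity under the root equals $\sum_{j\in[N]}\sum_{a\in[K]}\pi^*_a(1-\mu^*_{j,a})\le 2\ln T$ by hypothesis, so the second term is at most $\frac{\sqrt{2\ln T}}{6N\sqrt{\ln T}}=\frac{\sqrt 2}{6N}\le 1$, which together with the first term gives the claimed bound $1+2\ln T$.

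I expect the only delicate point to be the constant bookkeeping: verifying that the prescribed burn-in length $180N^2\ln(6NKT/\delta)\ln T$ is large enough that $\sqrt{4\ln(6KT/\delta)/N_{a,t}}$ shrinks by the needed $\Theta(1/(N\sqrt{\ln T}))$ factor, so that the $\sqrt{\ln T}$ pulled out of the $2\ln T$ hypothesis cancels and the residual stays below $1$. Everything else — the telescoping decomposition, the appeal to Lemma~\ref{lem:error_sum_bound}, and the Jensen step — is routine.
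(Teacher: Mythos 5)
Your proof is correct and follows essentially the same route as the paper: the identical decomposition into the true-mean term (bounded by $2\ln T$ via the hypothesis) plus the empirical error term, controlled arm-by-arm through Lemma~\ref{lem:error_sum_bound} and the burn-in lower bound on $N_{a,t}$. The only difference is cosmetic bookkeeping for the residual: the paper bounds $\sum_{j\in[N]}(1-\mu^*_{j,a})\le N$ trivially to get $\frac{1}{5\sqrt N}\le 1$, whereas you use Jensen together with the hypothesis to get $\frac{\sqrt 2}{6N}\le 1$; both are valid.
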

\begin{proof}
We can achieve this bound using lemma \ref{lemma:highreward_star} and a bound on the term $\sum_{j\in[N]}\sum_{a\in[K]}\pi^*_a(\mu^*_{j,a} - \hat{\mu}_{j,a,t})$:
\begin{align*}
    \sum_{j\in[N]}\sum_{a\in[K]}\pi^*_a(1-\hat{\mu}_{j,a,t}) &= \sum_{j\in[N]}\sum_{a\in[K]}\pi^*_a(1-\mu^*_{j,a}) + \sum_{j\in[N]}\sum_{a\in[K]}\pi^*_a(\mu^*_{j,a} - \hat{\mu}_{j,a,t})\\
    &\le 2 \ln T + \sum_{a\in[K]}\pi^*_a\left(\sum_{j\in[N]}\left(\mu^*_{j,a}-\hat{\mu}_{j,a,t}\right)\right)\\
    &\le 2 \ln T + \sum_{a\in[K]}\pi^*_a\left|\sum_{j\in[N]}\left(\mu^*_{j,a}-\hat{\mu}_{j,a,t}\right)\right|\\
    &\le 2 \ln T + \sum_{a\in[K]}\pi^*_a\sqrt{\frac{4\ln(6KT/\delta)\sum_{j\in[N]}(1-\mu_{j,a}^{*})}{N_{a,t}}}
\end{align*}
where the first inequality follows from lemma~\ref{lemma:highreward_star}, the last inequality follows from conditioning on lemma \ref{lem:error_sum_bound}. With $N_{a,t} \ge 180N^2\ln(6NKT/\delta)\ln(T)$ for $t \ge 180N^2K\ln(6NKT/\delta)\ln(T)$, we have \[\sqrt{\frac{4\ln(6KT/\delta)\sum_{j\in[N]}(1-\mu_{j,a}^{*})}{N_{a,t}}} \le \sqrt{\frac{4\ln(6KT/\delta)\cdot N}{180N^2\ln(6NKT/\delta)\ln T}} \le \frac{1}{5\sqrt{N}}\]
for all $a \in [K]$. The bound holds since $\sum_{a\in[K]}\pi^*_a = 1$.
\end{proof}

We then restrict $\pi_t$ to the linear constraint $\sum_{a\in[K]}\pi_{a,t}\left(\sum_{j\in[N]}(1-\hat{\mu}_{j,a,t})\right) \le 1 + 2\ln T$, as given by the set $S_\pi$ in algorithm \ref{alg:fair_UCB_inefficient_app}. By contraposition, if no feasible solution $\pi_t$ exists then we know our conditioning has failed, and any policy $\pi$ will give us the trivial regret bound in lemma \ref{lemma:trivial_regret_app}. Otherwise, a feasible solution exists and therefore we can guarantee that we have $\sum_{a\in[K]}\pi_{a,t}\left(\sum_{j\in[N]}(1-\hat{\mu}_{j,a,t})\right) \le 1 + 2 \ln T$:
\begin{lemma}
\[\sum_{a\in[K]}\pi_{a,t}\left(\sum_{j\in[N]}(1-\hat{\mu}_{j,a,t})\right) \le 1 + 2 \ln T \] for all $t \ge 180N^2K\ln(6NKT/\delta)\ln T$ conditioned on the events in lemma \ref{lemma:highreward_star} and \ref{lem:error_sum_bound}.
\label{lemma:highreward_muhat_app}
\end{lemma}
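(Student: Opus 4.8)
The plan is to read this bound off directly from the way $\pi_t$ is defined in Algorithm~\ref{alg:fair_UCB_inefficient_app}. The claimed inequality is exactly the linear constraint that cuts out the feasible set $S_\pi = \Delta^\na \cap \{\pi : \sum_{a\in[K]}\pi_a(\sum_{j\in[N]}(1-\hat{\mu}_{j,a,t})) \le 1 + 2\ln T\}$, so all that needs to be shown is that the algorithm really chooses $\pi_t$ from $S_\pi$ (the $\operatorname{argmax}$ branch) rather than falling back to a random arm. Equivalently, it suffices to prove $S_\pi \neq \emptyset$, and the natural witness is the optimal policy $\pi^*$.

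First I would bring in the standing conditioning. We condition on the event of Lemma~\ref{lemma:highreward_star}, which by Lemma~\ref{lemma:trivial_regret_app} is the regime $\sum_{j\in[N]}\sum_{a\in[K]}\pi^*_a(1-\mu^*_{j,a}) \le 2\ln T$, and on the event of Lemma~\ref{lem:error_sum_bound}. Under precisely these two events Lemma~\ref{lemma:highreward_muhat_app1} gives $\sum_{j\in[N]}\sum_{a\in[K]}\pi^*_a(1-\hat{\mu}_{j,a,t}) \le 1 + 2\ln T$ as soon as every arm has been pulled at least $180N^2\ln(6NKT/\delta)\ln T$ times; and for $t \ge 180N^2K\ln(6NKT/\delta)\ln T$ the warm-up phase of the algorithm (which cycles through the arms, spending $180N^2\ln(6NKT/\delta)\ln T$ consecutive rounds on each) guarantees exactly that. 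Hence $\pi^* \in S_\pi$, so $S_\pi \neq \emptyset$; the algorithm then takes $\pi_t = \operatorname{argmax}_{\pi\in S_\pi}(\F(\pi,\hat{\mu}_t) + \pi\cdot\eta_t) \in S_\pi$, and the definition of $S_\pi$ immediately yields $\sum_{a\in[K]}\pi_{a,t}(\sum_{j\in[N]}(1-\hat{\mu}_{j,a,t})) \le 1 + 2\ln T$.

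There is no real obstacle here: this lemma is a bookkeeping step that connects the trivial-versus-conditioned case split to the feasibility of the optimization program at each post-warm-up round. The only points that need a little care are (i) checking that the hypothesis of Lemma~\ref{lemma:highreward_muhat_app1} is met, i.e. that the round threshold $180N^2K\ln(6NKT/\delta)\ln T$ translates into the required per-arm pull count $180N^2\ln(6NKT/\delta)\ln T$ (this is where the extra factor of $K$ in the threshold comes from), and (ii) making sure the conditioning events are literally those of Lemma~\ref{lemma:highreward_star} and Lemma~\ref{lem:error_sum_bound}, so that Lemma~\ref{lemma:highreward_muhat_app1} applies without modification.
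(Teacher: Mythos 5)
Your proposal is correct and matches the paper's own (largely implicit) argument: the paper likewise notes that, conditioned on the events of Lemma~\ref{lemma:highreward_star} and Lemma~\ref{lem:error_sum_bound}, Lemma~\ref{lemma:highreward_muhat_app1} places $\pi^*$ in $S_\pi$ once the warm-up guarantees $N_{a,t}\ge 180N^2\ln(6NKT/\delta)\ln T$ for every arm, so $S_\pi\neq\emptyset$, the algorithm selects $\pi_t\in S_\pi$, and the stated constraint holds by the definition of $S_\pi$ (with emptiness of $S_\pi$ instead certifying the trivial-regret case of Lemma~\ref{lemma:trivial_regret_app}).
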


Additionally, we repeat the analysis from \ref{lemma:highreward_muhat_app1} to obtain
\begin{lemma}
    \[\sum_{a\in[K]}\pi_{a,t}\left(\sum_{j\in[N]}(1-\mu^*_{j,a})\right) \le 2 + 2 \ln T .\]
    \label{lemma:highreward_pit_mustar}
\end{lemma}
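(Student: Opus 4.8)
The plan is to mirror the computation in Lemma~\ref{lemma:highreward_muhat_app1} almost verbatim, but with the roles of the policies and reward matrices shifted: instead of correcting $\pi^*$ from $\mu^*$ to $\hat{\mu}_t$, I would correct $\pi_t$ from the bound we already have against $\hat{\mu}_t$ back to $\mu^*$. The starting point is Lemma~\ref{lemma:highreward_muhat_app}, which (conditioned on the events in Lemmas~\ref{lemma:highreward_star} and \ref{lem:error_sum_bound}) already gives $\sum_{a\in[K]}\pi_{a,t}\sum_{j\in[N]}(1-\hat{\mu}_{j,a,t}) \le 1 + 2\ln T$ for every round $t \ge 180N^2K\ln(6NKT/\delta)\ln T$. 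I would then write the telescoping split
\[
\sum_{j\in[N]}\sum_{a\in[K]}\pi_{a,t}(1-\mu^*_{j,a}) = \sum_{a\in[K]}\pi_{a,t}\sum_{j\in[N]}(1-\hat{\mu}_{j,a,t}) + \sum_{a\in[K]}\pi_{a,t}\sum_{j\in[N]}(\hat{\mu}_{j,a,t}-\mu^*_{j,a}),
\]
bound the first term by $1 + 2\ln T$ via Lemma~\ref{lemma:highreward_muhat_app}, and pass to absolute values on the second term so that it is at most $\sum_{a\in[K]}\pi_{a,t}\bigl|\sum_{j\in[N]}(\hat{\mu}_{j,a,t}-\mu^*_{j,a})\bigr|$.

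Next I would control the correction term exactly as in Lemma~\ref{lemma:highreward_muhat_app1}. Invoking Lemma~\ref{lem:error_sum_bound}, each per-arm deviation is at most $\sqrt{4\ln(6KT/\delta)\sum_{j\in[N]}(1-\mu^*_{j,a})/N_{a,t}}$. Using the crude bound $\sum_{j\in[N]}(1-\mu^*_{j,a}) \le N$ together with the start-up guarantee $N_{a,t} \ge 180N^2\ln(6NKT/\delta)\ln T$ (which holds because each of the $K$ arms is pulled $180N^2\ln(6NKT/\delta)\ln T$ times during the exploration phase $t \le 180N^2K\ln(6NKT/\delta)\ln T$), this is at most $\sqrt{4\ln(6KT/\delta)\,N/(180N^2\ln(6NKT/\delta)\ln T)} \le \tfrac{1}{5\sqrt{N}} \le 1$ for every arm $a$. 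Since $\sum_{a\in[K]}\pi_{a,t} = 1$, the whole correction term is at most $1$, and adding it to the $1 + 2\ln T$ bound yields $\sum_{a\in[K]}\pi_{a,t}\sum_{j\in[N]}(1-\mu^*_{j,a}) \le 2 + 2\ln T$, as claimed.

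I do not expect a genuine obstacle here; this is a short corollary of Lemma~\ref{lemma:highreward_muhat_app} and the concentration in Lemma~\ref{lem:error_sum_bound}, and the only points requiring care are bookkeeping ones: ensuring the conditioning events of Lemmas~\ref{lemma:highreward_star} and \ref{lem:error_sum_bound} are carried over, and verifying the per-arm pull count $N_{a,t}$ satisfies the stated lower bound at all rounds past the start-up phase. The extra additive $1$ in the bound (which distinguishes $2+2\ln T$ from the $1+2\ln T$ of Lemma~\ref{lemma:highreward_muhat_app}) is precisely the slack absorbed by the correction term; since that term is in fact as small as $1/(5\sqrt{N})$, the bound $2+2\ln T$ is deliberately loose but clean, and no sharper estimate is needed downstream.
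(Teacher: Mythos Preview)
Your proposal is correct and matches the paper's approach exactly: the paper does not spell out a proof but simply says to ``repeat the analysis from Lemma~\ref{lemma:highreward_muhat_app1},'' and what you describe---starting from the constraint in Lemma~\ref{lemma:highreward_muhat_app}, splitting off the correction $\sum_a \pi_{a,t}\sum_j(\hat{\mu}_{j,a,t}-\mu^*_{j,a})$, and bounding it by $1/(5\sqrt{N})\le 1$ via Lemma~\ref{lem:error_sum_bound} and the start-up pull count---is precisely that repetition with the roles of $\hat{\mu}_t$ and $\mu^*$ swapped. The bookkeeping you flag (carrying over the conditioning events and the lower bound on $N_{a,t}$) is the only content, and you have it right.
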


At this point we have all of the tools necessary to continue.

\begin{lemma}
\label{lem:high_order_bound}

If $N_{a,t}\ge180N^2 (1 + \log N) \ln(6NTK/\delta)\ln T$ for all $a$, then

\[\left|\sum_{m=2}^{N}\sum_{B\in\{S:S\subset[N]\wedge|S|=m\}}\left(\prod_{j\in B}b_{j,t}(\pi)\prod_{j'\notin B}a_{j'}(\pi)\right)\right| \le \frac{1}{20\sqrt{N}/19-1}\left(\sum_{j\in[N]}\sum_{a\in[K]}\pi_{a}w_{j,a,t}\right).\]

conditioned on the event in adapted lemma \ref{lemma:LPfair_UCB_param_in_CR_main}.
\end{lemma}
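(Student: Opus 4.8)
The overall plan is to discard the benign factors $a_{j'}(\pi)=\sum_{a\in[K]}\pi_a\mu^*_{j',a}\in[0,1]$, bound the remaining multilinear sum by a truncated exponential $e^S-1-S$ with $S:=\sum_{j\in[N]}\sum_{a\in[K]}\pi_a w_{j,a,t}$, and then show $S=O(1/\sqrt N)$, which is exactly the slack making $e^S-1-S\le\frac1{20\sqrt N/19-1}\,S$. Concretely: since $0\le a_{j'}(\pi)\le1$ we have $\prod_{j'\notin B}a_{j'}(\pi)\le1$, so the left-hand side is at most $\sum_{m=2}^{N}\sum_{|B|=m}\prod_{j\in B}|b_{j,t}(\pi)|$; by the triangle inequality and the (adapted) confidence bound of Lemma~\ref{lemma:LPfair_UCB_param_in_CR_main}, $|b_{j,t}(\pi)|\le\sum_{a\in[K]}\pi_a|\hat\mu_{j,a,t}-\mu^*_{j,a}|\le\sum_{a\in[K]}\pi_a w_{j,a,t}=:B_j$; plugging this in termwise and using the estimate $\sum_{|B|=m}\prod_{j\in B}B_j\le\frac1{m!}\big(\sum_jB_j\big)^m$ bounds everything by $\prod_{j\in[N]}(1+B_j)-1-S\le\sum_{m\ge2}S^m/m!=e^S-1-S\le\frac{S^2}{2}e^S$.

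The heart of the argument is the bound $S=O(1/\sqrt N)$. I would split $w_{j,a,t}$ into its $\sqrt{\cdot}$ term and its $1/N_{a,t}$ term. The linear term is immediate from the hypothesis $N_{a,t}\ge180N^2(1+\log N)\ln(6NKT/\delta)\ln T$: it contributes $\sum_j\sum_a\pi_a\frac{12\ln(6NKT/\delta)}{N_{a,t}}\le\frac1{15N(1+\log N)\ln T}\le\frac1{15\sqrt N}$, assuming (as one may) $\ln T\ge1$. For the $\sqrt{\cdot}$ term I would apply Cauchy--Schwarz twice --- first over agents, $\sum_j\sqrt{1-\hat\mu_{j,a,t}}\le\sqrt N\,\sqrt{\sum_j(1-\hat\mu_{j,a,t})}$, and then over arms with weights $\pi_a$ --- to reach $\sqrt N\,\sqrt{\sum_a\pi_a\tfrac{12\ln(6NKT/\delta)}{N_{a,t}}}\,\sqrt{\sum_a\pi_a\sum_j(1-\hat\mu_{j,a,t})}$, then invoke Lemma~\ref{lemma:highreward_muhat_app} for $\pi=\pi_t$ (which lies in $S_\pi$) and Lemma~\ref{lemma:highreward_muhat_app1} for $\pi=\pi^*$ --- the only policies to which the statement is applied --- to get $\sum_a\pi_a\sum_j(1-\hat\mu_{j,a,t})\le1+2\ln T$, and simplify $\frac{1+2\ln T}{\ln T}\le3$ to bound this term by $\frac1{\sqrt5\sqrt N}$. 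Altogether $S\le\frac1{\sqrt5\sqrt N}+\frac1{15\sqrt N}<\frac{0.52}{\sqrt N}\le1$.

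To finish: since $0\le S\le1$, $e^S-1-S\le\frac{S^2}{2}e^S\le\frac e2\cdot\frac{0.52}{\sqrt N}\,S<\frac{0.71}{\sqrt N}\,S$, and one checks $\frac{0.71}{\sqrt N}\le\frac1{20\sqrt N/19-1}$ for every $N\ge1$ (cross-multiplying, legitimate since $20\sqrt N-19>0$, reduces to $-13.5\le4.9\sqrt N$); recalling $S=\sum_{j\in[N]}\sum_{a\in[K]}\pi_a w_{j,a,t}$ yields the claim. The main obstacle is precisely the $S$-bound: the naive estimate $1-\hat\mu_{j,a,t}\le1$ only gives $S=O(1/\sqrt{\log N})$, which is too weak by a polynomial factor, and the extra $\sqrt N$ needed to land on the constant $\frac1{20\sqrt N/19-1}$ is bought by pairing the Cauchy--Schwarz step with the $O(\ln T)$ bound on $\sum_a\pi_a\sum_j(1-\hat\mu_{j,a,t})$ from Lemma~\ref{lemma:highreward_muhat_app} --- which is also why it matters that the policy $\pi$ in play is $\pi_t\in S_\pi$ (or $\pi^*$). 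Everything else is routine constant-chasing.
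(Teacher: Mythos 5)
Your proof is correct and follows essentially the same route as the paper: discard the factors $a_{j'}(\pi)\le 1$, bound $|b_{j,t}(\pi)|$ by $\sum_{a}\pi_a w_{j,a,t}$ via the adapted confidence lemma, show $S=\sum_{j}\sum_{a}\pi_a w_{j,a,t}=O(1/\sqrt{N})$ using the $N_{a,t}$ hypothesis together with Lemmas~\ref{lemma:highreward_muhat_app1} and~\ref{lemma:highreward_muhat_app} (so, like the paper, implicitly only for $\pi\in\{\pi^{*},\pi_t\}$ and on the additional events those lemmas condition on), and then sum the resulting series against the target constant. The only cosmetic deviations are that you obtain the $O(1/\sqrt{N})$ bound on $S$ by a double Cauchy--Schwarz rather than the paper's Young's-inequality split, and you sum the higher-order terms with the $1/m!$ exponential-tail estimate plus a final numerical check, whereas the paper uses a geometric series calibrated exactly to $19/(20\sqrt{N})$ so that the constant $\frac{1}{20\sqrt{N}/19-1}$ falls out directly.
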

\begin{proof}
For higher-order terms ($2\le|S|\le N$), we have
\begin{align*}
\left|\sum_{m=2}^{N}\sum_{B\in\{S:S\subset[N]\wedge|S|=m\}}\prod_{j\in B}b_{j}(\pi)\prod_{j'\notin B}a_{j'}(\pi)\right| & \le\sum_{m=2}^{N}\sum_{B\in\{S:S\subset[N]\wedge|S|=m\}}\prod_{j\in B}|b_{j}(\pi)|\\
 & \le\sum_{m=2}^{N}\left(\sum_{j\in[N]}|b_{j}(\pi)|\right)^{m}\\
 & =\left(\sum_{j\in[N]}|b_{j}(\pi)|\right)\sum_{m=2}^{N}\left(\sum_{j\in[N]}|b_{j}(\pi)|\right)^{m-1}
\end{align*}
We start by bounding
$\sum|b_{j}|$:
\begin{align*}
\sum_{j\in[N]}|b_{j}(\pi)| & =\sum_{j\in[N]}\left|\sum_{a\in[K]}\pi_{a}(\hat{\mu}_{j,a,t}-\mu_{j,a}^{*})\right|\\
 & \le\sum_{j\in[N]}\sum_{a\in[K]}\pi_{a}w_{j,a,t}\\
 & =\sum_{j\in[N]}\sum_{a\in[K]}\pi_a\left(\sqrt{\frac{12(1-\hat{\mu}_{j,a,t})\ln(6NKT/\delta)}{N_{a,t}}}+\frac{12\ln(6NKT/\delta)}{N_{a,t}}\right)\\
\end{align*}
We analyze the two summands on the innermost term separately. First, the right summand yields
\begin{align*}
    \sum_{j\in[N]}\sum_{a\in[K]}\pi_a\left(\frac{12\ln(6NKT/\delta)}{N_{a,t}}\right) &= N \sum_{a\in[K]}\pi_a\left(\frac{12\ln(6NKT/\delta)}{N_{a,t}}\right) \\
    & \le N \left(\frac{1}{2N^2 \ln T}\right)\\
    &\le 1/(2N)
\end{align*}
which also gives us the bound $1/(2\sqrt{N})$. The left summand gives
\begin{align*}
    \sum_{j\in[N]}\sum_{a\in[K]}\pi_a\left(\sqrt{\frac{12(1-\hat{\mu}_{j,a,t})\ln(6NKT/\delta)}{N_{a,t}}}\right) =& \sum_{a\in[K]}\sum_{j\in[N]}\pi_a \cdot \sqrt{12\ln(6NKT/\delta)}\cdot\sqrt{\frac{(1-\hat{\mu}_{j,a,t})}{N_{a,t}}}\\
    \le& \sum_{a\in[K]}\pi_a\sum_{j\in[N]}\left(\frac{1}{12\sqrt{N}\ln T}(1-\hat{\mu}_{j,a,t})\right)\\ &+ \sum_{a\in[K]}\pi_a\sum_{j\in[N]}\left( 36\cdot\ln(6NKT/\delta)\sqrt{N}\ln T\frac{1}{N_{a,t}}\right)\\
    \le& \frac{\sum_{a\in[K]}\sum_{j\in[N]}\pi_a(1-\hat{\mu}_{j,a,t})}{12\sqrt{N}\ln T} + \frac{36\sqrt{N}\ln T \ln(6NKT/\delta) * N}{180\ln(6NKT/\delta)N^2\ln T}\\
    \le& \frac{1 + 2 \ln T}{12\sqrt{N}\ln T} + \frac{1}{5\sqrt{N}}\\
    =& \frac{1}{12\sqrt{N}\ln T} + \frac{1}{6\sqrt{N}} + \frac{1}{5\sqrt{N}}
\end{align*}
where the first inequality is an application of Young's inequality with $z = 6\sqrt{12N\ln(6NKT/\delta)}\ln T$ and the third inequality holds for $\pi = \pi^*$ and $\pi = \pi_t$ because of lemmas \ref{lemma:highreward_muhat_app1} and \ref{lemma:highreward_muhat_app} respectively. This is bounded by $\frac{9}{20\sqrt{N}}$, which combined with the other bound, $1/2N$, we have the bound $19/(20\sqrt{N})$. Since $N \ge 1$, this bound is at most 19/20 < 1. We use the earlier
bound $\sum_{j}|b_{j}(\pi)| \le \sum_{j}\sum_{a\in[K]}\pi_{a}w_{j,a,t}$ and include the sum over the order of the terms $m$:
\begin{align*}
\left|\sum_{m=2}^{N}\sum_{B\in\{S:S\subset[N]\wedge|S|=m\}}\left(\sum_{j\in B}b_{j,t}(\pi)\prod_{j'\notin B}a_{j'}(\pi)\right)\right| & \le\sum_{m=2}^{N}\left(\sum_{j\in[N]}|b_{j,t}(\pi)|\right)^{m}\\
 & \le\sum_{m=2}^{N}\left(\left(19N^{-1/2}/20\right)^{m-1}\sum_{j\in[N]}\sum_{a\in[K]}|b_{j,t}(\pi)|\right)\\
 & \le\sum_{m=2}^{N}\left(\left(19N^{-1/2}/20\right)^{m-1}\sum_{j\in[N]}\sum_{a\in[K]}\pi_{a}w_{j,a,t}\right)\\
 & \le\frac{(19N^{-1/2}/20)}{1-(19N^{-1/2}/20)}\sum_{j\in[N]}\sum_{a\in[K]}\pi_{a}w_{j,a,t}\\
 & =\frac{1}{20\sqrt{N}/19-1}\sum_{j\in[N]}\sum_{a\in[K]}\pi_{a}w_{j,a,t}.
\end{align*}
\end{proof}

This gives us the bound on the higher order terms, so we continue with the lower order terms.

\begin{lemma}
\label{lem:sum_bj_bound}
\[\left|\sum_{j\in[N]}b_{j,t}(\pi)\right| \le 4\sqrt{\ln(6KT/\delta)}\left(\sqrt{\frac{K}{T}}\sum_{a\in[K]}\pi_{a}\sum_{j\in[N]}(1-\hat{\mu}_{j,a,t})+\sqrt{\frac{T}{K}}\sum_{a\in[K]}\pi_{a}/N_{a,t}\right)\] conditioned on the events of lemma \ref{lem:error_sum_bound}.
\end{lemma}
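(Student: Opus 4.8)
The plan is to reduce the bound to one arm at a time, apply Lemma~\ref{lem:error_sum_bound} to control $\bigl|\sum_{j\in[N]}(\hat\mu_{j,a,t}-\mu^*_{j,a})\bigr|$ in terms of $\sum_{j}(1-\mu^*_{j,a})$, trade that true-mean quantity for the empirical one $\sum_j(1-\hat\mu_{j,a,t})$, and finish with Young's inequality. First, since $b_{j,t}(\pi)=\sum_{a\in[K]}\pi_a(\hat\mu_{j,a,t}-\mu^*_{j,a})$, interchanging the two sums gives $\sum_{j\in[N]}b_{j,t}(\pi)=\sum_{a\in[K]}\pi_a\sum_{j\in[N]}(\hat\mu_{j,a,t}-\mu^*_{j,a})$, so by the triangle inequality it suffices to bound $\bigl|\sum_{j\in[N]}(\hat\mu_{j,a,t}-\mu^*_{j,a})\bigr|$ for each fixed $a$ and then re-weight by $\pi_a$ and sum. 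Writing $S_a^*=\sum_{j\in[N]}(1-\mu^*_{j,a})$ and $\hat S_a=\sum_{j\in[N]}(1-\hat\mu_{j,a,t})$, Lemma~\ref{lem:error_sum_bound} gives $\bigl|\sum_{j}(\hat\mu_{j,a,t}-\mu^*_{j,a})\bigr|\le\sqrt{4\ln(6KT/\delta)\,S_a^*/N_{a,t}}$.

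The next step is to replace $S_a^*$ by $\hat S_a$. Applying Lemma~\ref{lem:error_sum_bound} a second time, $S_a^*\le\hat S_a+\bigl|\sum_{j}(\hat\mu_{j,a,t}-\mu^*_{j,a})\bigr|\le\hat S_a+\sqrt{4\ln(6KT/\delta)\,S_a^*/N_{a,t}}$; this is self-referential, but bounding the last term by AM--GM as $\tfrac12\bigl(4\ln(6KT/\delta)/N_{a,t}+S_a^*\bigr)$ and rearranging yields $S_a^*\le 2\hat S_a+4\ln(6KT/\delta)/N_{a,t}$. Substituting back and using $\sqrt{x+y}\le\sqrt x+\sqrt y$, the Lemma~\ref{lem:error_sum_bound} estimate becomes $2\sqrt{2}\,\sqrt{\ln(6KT/\delta)\,\hat S_a/N_{a,t}}+4\ln(6KT/\delta)/N_{a,t}$. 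I would then apply Young's inequality in the form $\sqrt{\hat S_a/N_{a,t}}\le\tfrac12\bigl(\sqrt{K/T}\,\hat S_a+\sqrt{T/K}/N_{a,t}\bigr)$ to the first summand; for the residual $4\ln(6KT/\delta)/N_{a,t}$, I would use that the start-up phase forces $N_{a,t}$ large and that $T\ge 180N^2K\ln(6NKT/\delta)\ln T$ implies $\sqrt{T/K}\gtrsim\sqrt{\ln(6KT/\delta)}$, so this term is dominated by a small constant times $\sqrt{\ln(6KT/\delta)}\,\sqrt{T/K}/N_{a,t}$. All the resulting numerical constants are below $4$, giving $\bigl|\sum_{j}(\hat\mu_{j,a,t}-\mu^*_{j,a})\bigr|\le 4\sqrt{\ln(6KT/\delta)}\bigl(\sqrt{K/T}\,\hat S_a+\sqrt{T/K}/N_{a,t}\bigr)$; re-weighting by $\pi_a$, summing over $a$, and recognizing $\sum_a\pi_a\hat S_a=\sum_a\pi_a\sum_j(1-\hat\mu_{j,a,t})$ gives the claim.

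The only delicate point is the passage from $S_a^*$ to $\hat S_a$: the very deviation bound we invoke already appears on both sides of the inequality for $S_a^*$, so it must be resolved (via AM--GM) rather than merely substituted. Everything after that is constant-bookkeeping, and the generous start-up length $N_{a,t}\gtrsim N^2\ln(\cdot)\ln T$ (equivalently $T/K\gtrsim N^2\ln(\cdot)\ln T$) leaves ample room to absorb the $\ln(6KT/\delta)/N_{a,t}$ residual into the $\sqrt{T/K}/N_{a,t}$ term.
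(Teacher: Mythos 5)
Your proposal is correct and follows essentially the same route as the paper: bound per arm via Lemma \ref{lem:error_sum_bound}, trade $\sum_j(1-\mu^*_{j,a})$ for $\sum_j(1-\hat\mu_{j,a,t})$ by reusing the same deviation bound and absorbing the self-referential term, then apply Young's inequality with the split $\sqrt{K/T}$ versus $\sqrt{T/K}$. The only (harmless) difference is bookkeeping: you resolve the self-reference per arm via AM--GM before the Young step, while the paper applies Young first and absorbs the recurring term at the level of the $\pi$-weighted sum with the factor $\frac{2z\sqrt{\ln(6KT/\delta)}}{z-2\sqrt{\ln(6KT/\delta)}}$ at $z=\sqrt{T/K}$; both arguments rest on the same implicit requirement $T\gtrsim K\ln(6KT/\delta)$, which the paper states inline and you correctly derive from the start-up-phase length.
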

\begin{proof}
\begin{align*}
\left|\sum_{j\in[N]}b_{j,t}(\pi)\right| & =\left|\sum_{j\in[N]}\left(\sum_{a\in[K]}\pi_{a}(\hat{\mu}_{j,a,t}-\mu_{j,a}^{*})\right)\right|\\
 & =\left|\sum_{a\in[K]}\pi_{a}\left(\sum_{j\in[N]}(\hat{\mu}_{j,a,t}-\mu_{j,a}^{*})\right)\right|\\
 & \le\sum_{a\in[K]}\pi_{a}\left|\sum_{j\in[N]}(\hat{\mu}_{j,a,t}-\mu_{j,a}^{*})\right|\\
 & \le\sum_{a\in[K]}\pi_{a}\sqrt{\frac{4\ln(6KT/\delta)\cdot\sum_{j\in[N]}(1-\mu_{j,a}^{*})}{N_{a,t}}}\\
 & \le2\sqrt{\ln(6KT/\delta)}\left(\text{\ensuremath{\frac{1}{z}}}\sum_{a\in[K]}\pi_{a}\sum_{j\in[N]}(1-\mu_{j,a}^{*})+z\sum_{a\in[K]}\pi_{a}/N_{a,t}\right)
\end{align*}
where the second-to-last inequality follows from lemma \ref{lem:error_sum_bound} and last inequality is an application of Young's Inequality which holds for $z > 0$. It is important that we remove the algorithmically unknown term $\mu^*_{j,a}$ from our bound, so we split that term.
\begin{align*}
\left|\sum_{j\in[N]}b_{j,t}(\pi)\right| & \le2\sqrt{\ln(6KT/\delta)}\left(\text{\ensuremath{\frac{1}{z}}}\sum_{a\in[K]}\pi_{a}(\sum_{j\in[N]}(1-\hat{\mu}_{j,a,t})+\sum_{j\in[N]}(\hat{\mu}_{j,a,t}-\mu_{j,a}^{*}))+z\sum_{a\in[K]}\pi_{a}/N_{a,t}\right)\\
 & \le2\sqrt{\ln(6KT/\delta)}\left(\text{\ensuremath{\frac{1}{z}}}\sum_{a\in[K]}\pi_{a}\left(\sum_{j\in[N]}(1-\hat{\mu}_{j,a,t})+\left|\sum_{j\in[N]}(\hat{\mu}_{j,a,t}-\mu_{j,a}^{*})\right|\right)+z\sum_{a\in[K]}\pi_{a}/N_{a,t}\right)
\end{align*}
Note that the second term is an earlier step in the bound with a factor $\frac{2\sqrt{\ln(6KT/\delta)}}{z}$, specifically from the first inequality in the proof. Therefore,
\begin{align*}
\left|\sum_{j\in[N]}b_{j,t}(\pi)\right|\le\sum_{a\in[K]}\pi_{a}\left|\sum_{j\in[N]}(\hat{\mu}_{j,a,t}-\mu_{j,a}^{*})\right| & \le2\sqrt{\ln(6KT/\delta)}\Big(\text{\ensuremath{\frac{1}{z}}}\sum_{a\in[K]}\pi_{a}\sum_{j\in[N]}(1-\hat{\mu}_{j,a,t}) \\ &+\frac{1}{z}\sum_{a\in[K]}\pi_{a,t}\Big|\sum_{j\in[N]}(\hat{\mu}_{j,a,t}-\mu_{j,a}^{*})\Big|+z\sum_{a\in[K]}\pi_{a}/N_{a,t}\Big)\\
\implies \left(1-\frac{2\sqrt{\ln(6KT/\delta)}}{z}\right)\sum_{a\in[K]}\pi_{a}\left|\sum_{j\in[N]}(\hat{\mu}_{j,a,t}-\mu_{j,a}^{*})\right| & \le2\sqrt{\ln(6KT/\delta)}\left(\frac{1}{z}\sum_{a\in[K]}\pi_{a}\sum_{j\in[N]}(1-\hat{\mu}_{j,a,t})+z\sum_{a\in[K]}\pi_{a}/N_{a,t}\right)\\
\implies \left|\sum_{j\in[N]}b_{j,t}(\pi)\right|\le\sum_{a\in[K]}\pi_{a}\left|\sum_{j\in[N]}(\hat{\mu}_{j,a,t}-\mu_{j,a}^{*})\right| & \le\frac{2z\sqrt{\ln(6KT/\delta)}}{z-2\sqrt{\ln(6KT/\delta)}}\left(\frac{1}{z}\sum_{a\in[K]}\pi_{a}\sum_{j\in[N]}(1-\hat{\mu}_{j,a,t})+z\sum_{a\in[K]}\pi_{a}/N_{a,t}\right)\\
 & \le4\sqrt{\ln(6KT/\delta)}\left(\sqrt{\frac{K}{T}}\sum_{a\in[K]}\pi_{a}\sum_{j\in[N]}(1-\hat{\mu}_{j,a,t})+\sqrt{\frac{T}{K}}\sum_{a\in[K]}\pi_{a}/N_{a,t}\right)
\end{align*}
where we use $z=\sqrt{T/K}$ and $T\ge16K\sqrt{\ln(6KT/\delta)}$.
\end{proof}

\begin{lemma}
\label{lem:sqrt_sumsquared_bound}
If $N_{a,t}\ge180N^2 \ln(6NTK/\delta)\ln T$ for all $a$, then
\[\sqrt{\sum_{j\in[N]}b^2_{j,t}(\pi)} \le \left(\frac{\sqrt{T}}{\sqrt{2}\sqrt{K}}+12\sqrt{2}\sqrt{N}\ln(6NKT/\delta)\right)\sum_{a\in[K]} \frac{\pi_a}{N_{a,t}}+\frac{6\sqrt{2}\sqrt{K}\ln(6NKT/\delta)}{\sqrt{T}}\sum_{a\in[K]}\sum_{j\in[N]}\pi_a(1-\hat{\mu}_{j,a,t})\]
conditioned on the event in the adapted lemma \ref{lemma:LPfair_UCB_param_in_CR_main}.
\end{lemma}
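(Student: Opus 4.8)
The plan is to reduce $\sqrt{\sum_{j\in[N]}b^2_{j,t}(\pi)}$ to two quantities that can be bounded separately, and then to balance the harder of the two against the warm-up guarantee $N_{a,t}\ge180N^2\ln(6NKT/\delta)\ln T$. First I would bound each $b_{j,t}(\pi)$ pointwise: since $b_{j,t}(\pi)=\sum_{a\in[K]}\pi_a(\hat{\mu}_{j,a,t}-\mu^*_{j,a})$, the triangle inequality together with the adapted version of Lemma~\ref{lemma:LPfair_UCB_param_in_CR_main} gives $|b_{j,t}(\pi)|\le\sum_{a\in[K]}\pi_a w_{j,a,t}$. I would then split $w_{j,a,t}=w^{(1)}_{j,a,t}+w^{(2)}_{j,a,t}$ into its two summands, $w^{(1)}_{j,a,t}=\sqrt{12(1-\hat{\mu}_{j,a,t})\ln(6NKT/\delta)/N_{a,t}}$ and $w^{(2)}_{j,a,t}=12\ln(6NKT/\delta)/N_{a,t}$, and use $(x+y)^2\le2x^2+2y^2$ to write $b^2_{j,t}(\pi)\le2\big(\sum_a\pi_a w^{(1)}_{j,a,t}\big)^2+2\big(\sum_a\pi_a w^{(2)}_{j,a,t}\big)^2$.

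The two resulting pieces behave quite differently. For the $w^{(2)}$ piece, the point is that $\sum_a\pi_a w^{(2)}_{j,a,t}=12\ln(6NKT/\delta)\sum_a\pi_a/N_{a,t}$ does not depend on $j$, so I would \emph{not} apply Jensen there; summing $2(\cdot)^2$ over $j$ gives exactly $288N\ln^2(6NKT/\delta)\big(\sum_a\pi_a/N_{a,t}\big)^2$, and since $\sqrt{288N}=12\sqrt2\sqrt N$ its square root is precisely $12\sqrt2\sqrt N\ln(6NKT/\delta)\sum_a\pi_a/N_{a,t}$, the second term of the claimed bound (and it comes out \emph{linear} in $\sum_a\pi_a/N_{a,t}$, which is why avoiding Jensen here matters). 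For the $w^{(1)}$ piece I would apply Jensen's inequality (valid since $\sum_a\pi_a=1$) to get $\big(\sum_a\pi_a w^{(1)}_{j,a,t}\big)^2\le\sum_a\pi_a(w^{(1)}_{j,a,t})^2=12\ln(6NKT/\delta)\sum_a\pi_a(1-\hat{\mu}_{j,a,t})/N_{a,t}$; summing over $j$ and using $\sqrt{x+y}\le\sqrt x+\sqrt y$ leaves a single residual term $\sqrt{24\ln(6NKT/\delta)\,\sum_a(\pi_a/N_{a,t})\,\sum_j(1-\hat{\mu}_{j,a,t})}$.

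The last step, which I expect to be the main obstacle, is to dominate this residual by $\frac{\sqrt T}{\sqrt2\sqrt K}\sum_a\pi_a/N_{a,t}+\frac{6\sqrt2\sqrt K\ln(6NKT/\delta)}{\sqrt T}\sum_a\sum_j\pi_a(1-\hat{\mu}_{j,a,t})$. The natural approach is to view the arm-sum inside the square root as a $\pi$-weighted inner product of $1/N_{a,t}$ against $\sum_j(1-\hat{\mu}_{j,a,t})$, apply the Cauchy--Schwarz inequality to decouple these two factors, and then apply Young's inequality with a balancing parameter of order $\sqrt{T/K}$ — this is what makes the coefficients come out as $\frac{\sqrt T}{\sqrt{2K}}$ and $\frac{6\sqrt2\sqrt K\ln(6NKT/\delta)}{\sqrt T}$, using $2\sqrt6=\sqrt{24}$. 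The warm-up lower bound $N_{a,t}\ge180N^2\ln(6NKT/\delta)\ln T$ — and, where the policy is $\pi_t$ or $\pi^*$, the bound $\sum_a\pi_a\sum_j(1-\hat{\mu}_{j,a,t})\le1+2\ln T$ from Lemma~\ref{lemma:highreward_muhat_app} — is used here to absorb the leftover cross terms into the two target terms. The delicate part is to perform this decoupling without ever passing through the lossy bound $\|\cdot\|_2\le\|\cdot\|_1$ over the arms, which would strip off the $\pi_a$-weights and leak an extra factor $\sqrt K$; one has to keep the $\pi$-weighting intact and track constants carefully. Combining the $w^{(1)}$ and $w^{(2)}$ contributions then yields the stated inequality.
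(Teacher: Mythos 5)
Your opening steps match the paper's: bounding $|b_{j,t}(\pi)|\le\sum_a\pi_a w_{j,a,t}$ via the adapted Lemma~\ref{lemma:LPfair_UCB_param_in_CR_main}, splitting $w=w^{(1)}+w^{(2)}$ with $(x+y)^2\le 2x^2+2y^2$, and your treatment of the $w^{(2)}$ block reproduces the term $12\sqrt2\sqrt N\ln(6NKT/\delta)\sum_a\pi_a/N_{a,t}$ exactly. The genuine gap is in the $w^{(1)}$ block, precisely at the step you flagged as the main obstacle. Write $L=\ln(6NKT/\delta)$, $S_a=\sum_{j\in[N]}(1-\hat{\mu}_{j,a,t})$, $u=\sum_a\pi_a/N_{a,t}$, $v=\sum_a\pi_a S_a$, and $X=\sum_a(\pi_a/N_{a,t})S_a$. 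After your Jensen step the residual is $R=\sqrt{24LX}$, and you need $R\le\frac{\sqrt T}{\sqrt2\sqrt K}u+\frac{6\sqrt2\sqrt K L}{\sqrt T}v$. The product of these two coefficients is exactly $6L$, so by AM--GM the available right-hand side equals $\sqrt{24Luv}$ whenever its two terms are balanced; hence your final step would force $X\le uv$, which is false whenever $1/N_{a,t}$ and $S_a$ are positively correlated across arms under $\pi$ (the under-pulled arm having the larger estimated shortfall is the typical case). Concretely, take $K=2$, $\pi=(\tfrac12,\tfrac12)$, $\hat{\mu}_{j,1,t}=0$ and $\hat{\mu}_{j,2,t}=1$ for all $j$, $N_{1,t}=n\approx T/(24NL)$ (consistent with the warm-up hypothesis for large $T$) and $N_{2,t}\gg n$: then $X\approx 2uv$, the two target terms are balanced, $R\approx 12\sqrt2\,NL/\sqrt T$, while the entire stated right-hand side is $\approx 12NL/\sqrt T$ (the $12\sqrt2\sqrt N L\,u$ term is only $O(1/T)$). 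So no choice of Cauchy--Schwarz/Young in the last step can recover the stated constants: the loss was already incurred when you commuted the square with the $\pi$-average over arms, which costs up to a factor $1/\pi_a$ inside the root; it is not the $\ell_2$-versus-$\ell_1$ issue over arms that you were guarding against.

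The paper avoids this by never merging the arm sum into a single square root. It first uses the $\ell_2$ triangle inequality over agents, $\sqrt{\sum_j\bigl(\sum_a\pi_a c_{j,a}\bigr)^2}\le\sum_a\pi_a\sqrt{\sum_j c_{j,a}^2}$ (obtained by expanding over arm pairs and applying Cauchy--Schwarz in $j$), then bounds $|\hat{\mu}_{j,a,t}-\mu^*_{j,a}|\le w_{j,a,t}$, splits $w$ as you do, and only then applies Young per arm with $z=\sqrt{T/K}$ to $\pi_a\sqrt{\tfrac1{N_{a,t}}\cdot12LS_a}$, giving $\frac{\sqrt T}{2\sqrt K}\frac{\pi_a}{N_{a,t}}+\frac{6\sqrt K L}{\sqrt T}\pi_a S_a$ per arm, and hence (with the overall $\sqrt2$) exactly the stated constants. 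Your argument can be repaired by replacing the Jensen-over-arms step with this Minkowski-type step; alternatively, your route does prove the lemma with a right-hand side inflated by a factor $\sqrt2$, which would still suffice for the $\widetilde O$ bound in Theorem~\ref{theorem:alg3_regret} but not for the lemma as stated. Two smaller remarks: the paper's proof of this lemma uses neither the warm-up hypothesis nor Lemma~\ref{lemma:highreward_muhat_app} (both enter only in later lemmas), so your plan to absorb cross terms using them is not needed once the per-arm structure is kept; and your $w^{(2)}$ observation (avoiding Jensen because that summand is independent of $j$) coincides with what the paper does.
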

\begin{proof}

\begin{align*}
 \sqrt{\sum_{j\in[N]}b_{j}^{2}(\pi)} & =\sqrt{\sum_{j\in[N]}\big(\sum_{a\in[K]}\pi_{a}(\hat{\mu}_{j,a,t}-\mu_{j,a}^{*})\big)^{2}}\\
 & =\sqrt{\sum_{j\in[N]}\sum_{a_{1},a_{2}\in[K]}\pi_{a_{1}}\pi_{a_{2}}(\hat{\mu}_{j,a_{1},t}-\mu_{j,a_{1}}^{*})(\hat{\mu}_{j,a_{2},t}-\mu_{j,a_{2}}^{*})}\\
 & =\sqrt{\sum_{a_{1},a_{2}\in[K]}\pi_{a_{1}}\pi_{a_{2}}\sum_{j\in[N]}(\hat{\mu}_{j,a_{1},t}-\mu_{j,a_{1}}^{*})(\hat{\mu}_{j,a_{2},t}-\mu_{j,a_{2}}^{*})}\\
 & \le\sqrt{\sum_{a_{1},a_{2}\in[K]}\pi_{a_{1}}\pi_{a_{2}}\sqrt{\left(\sum_{j\in[N]}(\hat{\mu}_{j,a_{1},t}-\mu_{j,a_{1}}^{*})^{2}\right)\left(\sum_{j\in[N]}(\hat{\mu}_{j,a_{2},t}-\mu_{j,a_{2}}^{*})^{2}\right)}}\\
 & =\sqrt{\left(\sum_{a\in[K]}\pi_{a}\sqrt{\sum_{j\in[N]}(\hat{\mu}_{j,a.t}-\mu_{j,a}^{*})^{2}}\right)^{2}}\\
 & =\sum_{a\in[K]}\pi_{a}\sqrt{\sum_{j\in[N]}(\hat{\mu}_{j,a.t}-\mu_{j,a}^{*})^{2}}\\
 & \le\sum_{a\in[K]}\pi_{a}\sqrt{\sum_{j\in[N]}w_{j,a,t}^{2}}\\
 & =\sum_{a\in[K]}\pi_a\sqrt{\sum_{j\in[N]}\left(\sqrt{\frac{12 (1 - \hat{\pa}_{j, a, t}) \ln(6NKT/\delta)}{\st_{a,t}}}  +  \frac{12  \ln(6NKT/\delta)}{\st_{a,t}}\right)^2}\\
 & \le\sum_{a\in[K]}\pi_a\sqrt{2\sum_{j\in[N]}\left(\frac{12 (1 - \hat{\pa}_{j, a, t}) \ln(6NKT/\delta)}{\st_{a,t}}  +  \left(\frac{12  \ln(6NKT/\delta)}{\st_{a,t}}\right)^2\right)}\\
 & \le\sqrt{2}\sum_{a\in[K]}\pi_a\left(\sqrt{\sum_{j\in[N]}\frac{12 (1 - \hat{\pa}_{j, a, t}) \ln(6NKT/\delta)}{\st_{a,t}}}  + \sqrt{\sum_{j\in[N]} \left(\frac{12  \ln(6NKT/\delta)}{\st_{a,t}}\right)^2}\right)\\
\end{align*}
The first inequality holds by the Cauchy-Schwarz inequality, the second holds by the adapted lemma \ref{lemma:LPfair_UCB_param_in_CR_main}, the third holds because $2(x^2 + y^2) \ge (x+y)^2$, and the last holds because $\sqrt{x + y} \le \sqrt{x} + \sqrt{y}$ for all $x,y\ge 0$.  We bound these sums separately:
\begin{align*}
    \sum_{a\in[K]}\pi_a\sqrt{\sum_{j\in[N]}\frac{12 (1 - \hat{\pa}_{j, a, t}) \ln(6NKT/\delta)}{\st_{a,t}}} &\le \frac{z}{2}\sum_{a\in[K]}\frac{\pi_a}{N_{a,t}} + \frac{1}{2z}\sum_{a\in[K]}\pi_a\sum_{j\in[N]}12 (1 - \hat{\pa}_{j, a, t}) \ln(6NKT/\delta)\\
    &=\frac{\sqrt{T}}{2\sqrt{K}}\sum_{a\in[K]} \frac{\pi_a}{N_{a,t}}+\frac{6\sqrt{K}\ln(6NKT/\delta)}{\sqrt{T}}\sum_{a\in[K]}\sum_{j\in[N]}\pi_a(1-\hat{\mu}_{j,a,t})\\
    \sum_{a\in[K]}\pi_a\sqrt{\sum_{j\in[N]}\left(\frac{12\ln(6NKT/\delta}{N_{a,t}}\right)^2} &= 12\sqrt{N}\ln(6NKT/\delta)\sum_{a\in[K]}\frac{\pi_a}{N_{a,t}}
\end{align*}
where we use Young's inequality in the first sum.
Combining these bounds gives the lemma.

\end{proof}

\begin{lemma}
\label{lem:low_order_bound}
\begin{align*}
    \left(\prod_{j=1}^{N}a_{j}(\pi)\right)\left(\sum_{j\in[N]}\frac{b_{j}(\pi)}{a_{j}(\pi)}\right) \le& \left(4\sqrt{\ln(6KT/\delta)} + 6\sqrt{2}\ln(6NKT/\delta)\sqrt{2+2\ln T}\right)\sqrt{\frac{K}{T}}\sum_{a\in[K]}\pi_{a}\sum_{j\in[N]}(1-\hat{\mu}_{j,a,t}) \\
    &+\Big(\left(4\sqrt{\ln(6KT/\delta)} + \sqrt{1+\ln T} \right)\sqrt{\frac{T}{K}}\\& + 12\sqrt{2}\sqrt{N}\ln(6NKT/\delta)\sqrt{2+2\ln T}\Big)\sum_{a\in[K]}\pi_{a}/N_{a,t}
\end{align*}
\end{lemma}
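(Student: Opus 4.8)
The plan is to rewrite the left-hand side in ``leave-one-out'' form and then expand it around the \emph{full} product $\prod_{j'}a_{j'}(\pi)$ rather than around $1$. Concretely, since $\big(\prod_{j'}a_{j'}(\pi)\big)/a_j(\pi)=\prod_{j'\ne j}a_{j'}(\pi)$ and $\prod_{j'\ne j}a_{j'}(\pi)=\prod_{j'\in[N]}a_{j'}(\pi)+(1-a_j(\pi))\prod_{j'\ne j}a_{j'}(\pi)$, I would write
\begin{align*}
\Big(\prod_{j=1}^N a_j(\pi)\Big)\sum_{j\in[N]}\frac{b_j(\pi)}{a_j(\pi)} &= \sum_{j\in[N]} b_j(\pi)\prod_{j'\ne j}a_{j'}(\pi)\\
&= \Big(\prod_{j'\in[N]}a_{j'}(\pi)\Big)\sum_{j\in[N]} b_j(\pi)\;+\;\sum_{j\in[N]} b_j(\pi)\,(1-a_j(\pi))\prod_{j'\ne j}a_{j'}(\pi).
\end{align*}
The point of expanding around $\prod_{j'}a_{j'}(\pi)$ is that it pulls a factor $1-a_j(\pi)$ \emph{inside} the sum; the cruder substitution $\prod_{j'\ne j}a_{j'}(\pi)\le 1$ would only give $\sum_j|b_j(\pi)|$, and bounding that by Cauchy--Schwarz would cost an extra $\sqrt N$ on top of Lemma~\ref{lem:sqrt_sumsquared_bound}, which is too lossy. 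Note the lemma states a one-sided inequality, but the argument below bounds the absolute value, which is what the surrounding regret proof needs for $\eta$.

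Next I would bound the two summands. For the first, every $a_{j'}(\pi)\in[0,1]$ gives $\prod_{j'}a_{j'}(\pi)\le1$, so in absolute value it is at most $\big|\sum_j b_j(\pi)\big|$, which is controlled by Lemma~\ref{lem:sum_bj_bound}; this is the origin of the $4\sqrt{\ln(6KT/\delta)}$ coefficients in the claim. For the second, using $\prod_{j'\ne j}a_{j'}(\pi)\le1$ and then Cauchy--Schwarz over $j$,
\begin{align*}
\left|\sum_{j\in[N]} b_j(\pi)(1-a_j(\pi))\prod_{j'\ne j}a_{j'}(\pi)\right| \le \sum_{j\in[N]}|b_j(\pi)|(1-a_j(\pi)) \le \sqrt{\sum_{j\in[N]}b_j^2(\pi)}\;\sqrt{\sum_{j\in[N]}(1-a_j(\pi))^2}.
\end{align*}
Here the decisive observation is that $1-a_j(\pi)\in[0,1]$, so $(1-a_j(\pi))^2\le1-a_j(\pi)$ and hence $\sum_j(1-a_j(\pi))^2\le\sum_j(1-a_j(\pi))=\sum_{a\in[K]}\pi_a\sum_{j\in[N]}(1-\mu^*_{j,a})\le 2+2\ln T$ by Lemma~\ref{lemma:highreward_pit_mustar} (for $\pi=\pi_t$) or Lemma~\ref{lemma:highreward_star} (for $\pi=\pi^*$). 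Plugging Lemma~\ref{lem:sqrt_sumsquared_bound} into $\sqrt{2+2\ln T}\,\sqrt{\sum_j b_j^2(\pi)}$ then produces the $\ln(6NKT/\delta)\sqrt{2+2\ln T}$ coefficients, and the identity $\sqrt{2+2\ln T}/\sqrt2=\sqrt{1+\ln T}$ accounts for the $\sqrt{1+\ln T}\sqrt{T/K}$ term. Adding the two contributions and collecting the coefficients of $\sqrt{K/T}\sum_a\pi_a\sum_j(1-\hat\mu_{j,a,t})$ and of $\sum_a\pi_a/N_{a,t}$ reproduces the stated inequality exactly.

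Throughout, the computation is carried out on the intersection of the events of (the adapted) Lemma~\ref{lemma:LPfair_UCB_param_in_CR_main} and Lemma~\ref{lem:error_sum_bound}, and for rounds $t$ past the start-up phase, so that $N_{a,t}\ge 180N^2\ln(6NTK/\delta)\ln T$ and the hypotheses of Lemmas~\ref{lem:sqrt_sumsquared_bound} and~\ref{lemma:highreward_pit_mustar} are in force; these are the same conditionings used elsewhere in this proof, and they are applied to both $\pi=\pi^*$ and $\pi=\pi_t$ (legitimate since, if $S_\pi=\emptyset$, the trivial bound of Lemma~\ref{lemma:trivial_regret_app} already suffices). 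I expect the only real subtlety to be exactly the choice of expansion point $\prod_{j'}a_{j'}(\pi)$ versus $1$: getting the $(1-a_j(\pi))$ factor inside the sum, together with the elementary inequality $(1-a_j(\pi))^2\le 1-a_j(\pi)$, is what keeps the overhead at $\sqrt{2+2\ln T}$ rather than a polynomial factor in $N$. The remaining work is bookkeeping of constants and log factors coming from Lemmas~\ref{lem:sum_bj_bound} and~\ref{lem:sqrt_sumsquared_bound}.
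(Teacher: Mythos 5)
Your proposal is correct and takes essentially the same route as the paper: your leave-one-out expansion is algebraically identical to the paper's split of $\sum_j b_j(\pi)/a_j(\pi)$ into $\sum_j b_j(\pi)$ plus a $(1-a_j(\pi))$-weighted remainder, after which both arguments bound the first part by Lemma~\ref{lem:sum_bj_bound}, and the second by Cauchy--Schwarz together with $(1-a_j(\pi))^2\le 1-a_j(\pi)$, the bound $\sum_j(1-a_j(\pi))\le 2+2\ln T$ from Lemmas~\ref{lemma:highreward_star} and~\ref{lemma:highreward_pit_mustar}, and Lemma~\ref{lem:sqrt_sumsquared_bound}. The bookkeeping of constants, including the $\sqrt{2+2\ln T}/\sqrt{2}=\sqrt{1+\ln T}$ simplification, matches the paper's derivation.
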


\begin{proof}
We begin our bound on first-order ($|S|=1$) terms by separating into two sums. 

\begin{align*}
\left(\prod_{j=1}^{N}a_{j}(\pi)\right)\left(\sum_{j\in[N]}\frac{b_{j}(\pi)}{a_{j}(\pi)}\right) & 
  \le \left(\prod_{j=1}^{N}a_{j}(\pi)\right) \left(\left|\sum_{j\in[N]}b_{j}(\pi)\right|+\sum_{j\in[N]}|b_{j}(\pi)||1/a_{j}(\pi)-1| \right) \\
 & \le \left(\prod_{j=1}^{N}a_{j}(\pi)\right)\left| \sum_{j\in[N]}b_{j}(\pi)\right|+ \sum_{j\in[N]}|b_{j}(\pi)||1-a_{j}(\pi)|\\
 & \le\left|\sum_{j\in[N]}b_{j}(\pi)\right|+\sqrt{\left(\sum_{j\in[N]}b_{j}^{2}(\pi)\right)\left(\sum_{j\in[N]}(1-a_{j}(\pi))^{2}\right)}\\
 & \le\left|\sum_{j\in[N]}b_{j}(\pi)\right|+\sqrt{\left(\sum_{j\in[N]}b_{j}^{2}(\pi)\right)\sum_{j\in[N]}1-a_{j}(\pi)}\\
 & \le\left|\sum_{j\in[N]}b_{j}(\pi)\right|+\sqrt{2 + 2 \ln T } \cdot \sqrt{\sum_{j\in[N]}b_{j}^{2}(\pi)}\\
\end{align*}
where the second inequality holds because $|1/a_j(\pi) - 1| = |1-a_j(\pi)| / a_j(\pi)$ and the third inequality is an application of Cauchy-Schwarz. We then combine with the bounds from lemmas \ref{lem:sum_bj_bound} and \ref{lem:sqrt_sumsquared_bound} to obtain our bound. Note that our $\sqrt{2 + 2\ln T}$ term comes from \ref{lemma:highreward_star} for $\pi^*$ and \ref{lemma:highreward_pit_mustar} for $\pi_t$.
\end{proof}

We are now ready to prove Theorem \ref{theorem:alg3_regret_app}:

\begin{proof}
\begin{align*}
\left|\text{NSW}(\ensuremath{\pi,}\ensuremath{\hat{\mu}_{t}})-\text{NSW}(\ensuremath{\pi},\ensuremath{\mu^{*}})\right|  =&\left|\prod_{j=1}^{N}(a_{j}(\pi)+b_{j,t}(\pi))-\prod_{j=1}^{N}a_{j}(\pi)\right|\\
  \le&\left|\sum_{j\in[N]}b_{j,t}(\pi)\right|+\sqrt{2+2\ln T }\sqrt{\sum_{j\in[N]}b_{j,t}^{2}(\pi)}+\sum_{m=2}^{N}\left(\sum_{j\in[N]}|b_{j,t}(\pi)|\right)^{m}\\
  \le& \left(4\sqrt{\ln(6KT/\delta)} + 6\sqrt{2}\ln(6NKT/\delta)\sqrt{2+2\ln T }\right)\sqrt{\frac{K}{T}}\sum_{a\in[K]}\pi_{a}\sum_{j\in[N]}(1-\hat{\mu}_{j,a,t}) \\
    &+\Big(\Big(4\sqrt{\ln(6KT/\delta)} + \sqrt{1+\ln T } \Big)\sqrt{\frac{T}{K}} \\&+ 12\sqrt{2}\sqrt{N}\ln(6NKT/\delta)\sqrt{2+2\ln T }\Big)\sum_{a\in[K]}\pi_{a}/N_{a,t}\\
  &+\frac{1}{20\sqrt{N}/19-1}\sum_{j\in[N]}\sum_{a\in[K]}\pi_{a}w_{j,a,t}\\
  =&f_{t}(\pi,\hat{\mu}_{t})
\end{align*}
gives us a bound on the error of the Nash social welfare for a policy between the true and estimated rewards. Note that all of the values in this bound, $N$, $K$, $T$, $N_{a,t}$, $\hat{\mu}$, and $w$, are available to the algorithm at iteration $t$. Additionally, we see that all of these terms can be rewritten as the dot product of a vector with $\pi$. Therefore, we are able to optimize $f_t(\pi,\hat{\mu}_t)$ by computing the vector $\eta_t$ as in algorithm \ref{alg:fair_UCB_inefficient_app}.
 We define $\pi_{t}$ at each time step $t>180N^2K\ln(6NTK/\delta)\ln T$ as
\[
\pi_{t}=\arg\max_{\pi\in\Delta_{K}}\text{NSW}(\pi,\hat{\mu}_{t})+f_{t}(\pi,\hat{\mu}_{t})
\]
Then we have at each time step $t>180N^2K\ln(6NTK/\delta)\ln T$:
\begin{align*}
\text{NSW(\ensuremath{\pi^{*},\mu^{*})}} & \le\text{NSW}(\ensuremath{\pi^{*}},\hat{\mu}_{t})+f_{t}(\pi^{*},\hat{\mu}_{t})\\
 & \le\text{NSW}(\pi_{t},\hat{\mu}_{t})+f_{t}(\pi_{t},\hat{\mu}_{t})\\
 & \le\text{NSW}(\pi_{t},\mu^{*})+2f_{t}(\pi_{t},\hat{\mu}_{t}),
\end{align*}
where the second bound holds because we condition on $\pi^* \in S_\pi$, which shows that our instantaneous regret at time $t$ is bounded
by $\text{NSW}(\ensuremath{\pi^{*},\mu^{*})}-\text{NSW}(\pi_{t},\mu^{*})\le2f_{t}(\pi_{t},\hat{\mu}_{t})$. At this point we recall that we condition on the event in lemma \ref{lemma:highreward_star}, otherwise we have the trivial regret bound $T^{-1}$ by lemma \ref{lemma:trivial_regret_app}.

We first pull each arm $\tilde{O}(N^{2})$ times,
each with an instantaneous regret of at most 1, for a $\tilde{O}(N^{2}K)$
term in the regret bound. The remainder of the regret bound comes
from bounding the sum $2\sum_{t\in[T_{0},T]}f_{t}(\pi_{t},\hat{\mu}_{t})$ over all other rounds:

\begin{align*}
\sum_{t}f_{t}(\pi_{t},\hat{\mu}_{t}) =& \left(4\sqrt{\ln(6KT/\delta)} + 6\sqrt{2}\ln(6NKT/\delta)\sqrt{2+2\ln T }\right)\sqrt{\frac{K}{T}}\sum_{t}\sum_{a\in[K]}\pi_{a}\sum_{j\in[N]}(1-\hat{\mu}_{j,a,t}) \\
    &+\Big(\left(4\sqrt{\ln(6KT/\delta)} + \sqrt{1+\ln T } \right)\sqrt{\frac{T}{K}} \\&+ 12\sqrt{2}\sqrt{N}\ln(6NKT/\delta)\sqrt{2+2\ln T }\Big)\sum_{a\in[K]}\sum_{t}\pi_{a}/N_{a,t}\\
  &+\frac{1}{20\sqrt{N}/19-1}\sum_{t}\sum_{j\in[N]}\sum_{a\in[K]}\pi_{a}w_{j,a,t}\\
  \le&\tilde{O}(\sqrt{KT}) + \tilde{O}(\sqrt{KT} + K\sqrt{N}) + \tilde{O}\left(\frac{\sqrt{NKT}}{\sqrt{N}}\right)
\end{align*}
where we bound $\sum_{a\in[K]}\sum_{j\in[N]}\pi_{a,t}(1-\hat{\mu}_{j,a,t})\le1+2\ln T$
using lemma \ref{lemma:highreward_muhat_app}, we bound $\sum_{t}\sum_{a\in[K]}\pi_{a,t}/N_{a,t}$
using lemma~\ref{lemma:LPfair_UCB_concentration_width_nonsqr_main}, and we bound $\sum\sum\sum\pi_{a,t}w_{j,a,t}$ using
the analysis in theorem~\ref{theorem:main_main} (equations ~\ref{hard_main},~\ref{hard1_main},~\ref{hard2_main}) with lemma \ref{lemma:highreward_muhat_app} in the place of \ref{lemma:highreward_main}.
Putting this together with the trivial regret bound for the first $T_0 = 180N^2K\ln(6NTK/\delta)\ln T$ rounds gives us the entire bound $\tilde{O}(\sqrt{KT} + N^2K)$. We obtain our failure probability from the events we condition on in lemma \ref{lem:error_sum_bound} and the adapted lemmas \ref{lemma:LPfair_UCB_param_in_CR_main} and \ref{lemma:LPfair_UCB_concentration_width_nonsqr_main}, which each have failure probability $\delta/3$. Hence, with probability at least $(1-\delta)$
we have regret at most $\tilde{O}(\sqrt{KT}+N^{2}K)$. 

\end{proof}

\section{Additional Experimental Data}

Our error bounds tend to be quite large, since changing the value of $\mu^*$ significantly affects the resulting regret. We include them here for completeness, reported as mean plus/minus standard deviation.

\begin{table}[ht]
\caption{Average cumulative regrets over 10 values of $\mu^*$, with $T = 5 \cdot 10^5$}
\centering
\begin{tabular}{cc|cc|cc|c}
$\np$ & $\na$ & \multicolumn{2}{c|}{Algorithm \ref{alg:fair_UCB}} & \multicolumn{2}{c|}{\cite{hossain2020fair}} & $\F(\po^\star, \mu^\star )$ \\ \hline
   &   & $t = 2 \cdot 10^5 $ & $t = 5\cdot 10^5 $ & $t = 2 \cdot 10^5$ & $t = 5\cdot 10^5 $ &                     \\
4  & 2 & $ 1222 \pm 127 $      & $ 1806 \pm 501 $      & $ 1226 \pm 201 $      & $ 1832 \pm 549 $      & $0.9428 \pm 0.0370$ \\
20 & 4 & $ 8313 \pm 2211 $      & $ 15322 \pm 4000 $     & $ 10801 \pm 2975 $     & $21655 \pm 5690 $      & $0.6353 \pm 0.0358$ \\
80 & 8 & $ 4521 \pm 1500 $      & $ 9966 \pm 3049 $      & $ 5230 \pm 1800 $      & $ 12874 \pm 4398 $     & $0.0808 \pm 0.0154$
\end{tabular}
\label{tab:appendix_nsw}
\end{table}

\begin{table}[ht!]
\caption{Average running times over 10 values of $\mu^*$, with $T = 5 \cdot 10^5$}
\centering
\begin{tabular}{cc|c|c}
$\np$ & $\na$ & Algorithm \ref{alg:fair_UCB} & \cite{hossain2020fair} \\ \hline
4 & 2 & $365 \pm 12$ & $802 \pm 179$ \\ \hline
20 & 4 & $675 \pm 49$ & $2324 \pm 496$ \\ \hline
80 & 6 & $1862 \pm 65$ & $14128 \pm 629$ \\ \hline
\end{tabular}
\label{tab:appendix_times}
\end{table}

We also include time measurements for the algorithms here, reported in seconds. Recall that the running time is tuneable by controlling the stopping conditions of the gradient ascent, these are the times reported for the same settings as the results in the Experiments section.

The experiments were run without the GPU, on a system with an Intel i5-11600K CPU and a Radeon RX 580 GPU. We omit experiments for algorithm \ref{alg:fair_UCB_inefficient} because it performs poorly in practice due to its large constants and prohibitive start-up cost in terms of $N$, although it is of theoretical interest.
\end{document}